\documentclass{article}

\usepackage{arxiv}

\usepackage[utf8]{inputenc}
\usepackage[T1]{fontenc}
\usepackage{hyperref}
\usepackage{url}
\usepackage{booktabs}
\usepackage{amsfonts}
\usepackage{nicefrac}
\usepackage{microtype}
\usepackage{xcolor}

\usepackage{amsmath,amssymb,amsthm,bbm,mathtools}

\usepackage{algpseudocode}
\usepackage{algorithm}
\usepackage{wrapfig}
\usepackage{natbib}

\usepackage[position=top]{subcaption}
\usepackage{graphicx}

\newcommand{\Prb}{\mathbb{P}}
\newcommand{\1}{\mathbbm{1}}

\renewcommand{\star}{*}

\usepackage[capitalize,noabbrev]{cleveref}

\theoremstyle{plain}
\newtheorem{theorem}{Theorem}[section]

\newtheorem{lemma}[theorem]{Lemma}

\theoremstyle{definition}

\theoremstyle{remark}

\title{Bayesian Learning to Select Fair Classifiers}

\author{Saatvik Kher \\ University of California, Irvine \And
Hyungrok Do \\ New York University \And
Judy Zhong \\ Weill Cornell Medicine \And
Padhraic Smyth \\ University of California, Irvine}

\newcommand{\undertitle}{}
\date{}

\begin{document}

\maketitle
\begin{abstract}
  Evaluations of classifiers (AI or human or agents) often assume a \textit{batch} setting in which every agent produces labels on every example in a test set, which can be expensive or impractical in many contexts. This motivates \textit{online} policies that adaptively select agents to query as examples arrive and update policies as outcomes are observed.
  We study the problem of selecting, in an online stream, the agent that best balances (i) predictive performance and (ii) subgroup fairness of a sensitive attribute, and that allows for delayed observations of true class labels.
  We formulate this as a Bayesian multi-armed bandit problem using Thompson Sampling, explicitly balancing accuracy and fairness.
  We conduct empirical evaluations on real-world datasets spanning problems involving tabular classifiers, large language models, and human labelers, demonstrating substantial reductions in regret relative to baselines.
  Our theoretical contributions include guarantees on the performance of the selected agent that incorporate both the accuracy and fairness aspects of the problem.
\end{abstract}

\section{Introduction}

Machine learning systems are increasingly deployed as black-box decision-makers in high-stakes domains, such as healthcare diagnosis, content moderation, and credit decisions. This creates an operational need to evaluate individual models for accuracy as well as to select among multiple candidate decision-making agents (potentially including both ML models and human experts). In many applications, the criterion for selection is multi-faceted and needs to take into account not only prediction accuracy but also the fairness of an agent and (potentially) the cost of querying the agent.

Prior work in this general context often assumes a \textit{batch} setting in which each agent labels each example in a test set. While useful for offline audits, this approach is costly and infeasible when running multiple models, soliciting human expert judgments, or waiting for ground truth. Many scenarios require that models be deployed and predictive tasks allocated sequentially, necessitating that estimation of model performance and identifying the best-performing model be performed in concert with making actual predictions on real cases. Motivated by such scenarios, in this paper, we focus on learning \textit{online} selection protocols for such situations, balancing classification accuracy, fairness, and model querying cost,  making sequential choices and learning from observed outcomes as they are observed.

As a motivating example for our work, consider a scenario where an insurance company wishes to select a large language model (LLM) to classify customer claims into pre-defined categories. Initially, the company has access to a number of different LLMs and perhaps a single human worker to process the claims, but for operational and cost reasons would like to converge on using only a single agent in the longer-term. As a second example, a hospital may wish to evaluate the effectiveness of re-admission prediction models: if released, will a patient be re-admitted within 30 days? The hospital might have 3 internal models available for evaluation, say a simple linear model, a tree-based model, and a deep learning model. It wishes to select a single model to use going forward for operational and legal reasons. For each scenario, the insurance company or the hospital face the same question: is there a better approach to finding the best model in an online setting, beyond randomly or deterministically assigning claims or patients to models? This general model selection problem is common across application scenarios, where the models (or agents) being selected could include ML models and/or human experts.

In this context, we formalize fair online model selection as a multi-armed bandit problem where each arm corresponds to an agent (a pre-trained classifier or a human expert), and the learner selects an arm for every instance it observes from the sequential stream, observing only that agent’s prediction, and the eventual true label when available. Fairness is encoded through subgroup statistics (for example, subgroup positive-rate gaps) and combined with accuracy into a utility so the selector explicitly trades off these objectives. A Bayesian framework is natural here, as maintaining conjugate posteriors lets the learner represent uncertainty, update beliefs incrementally, and use posterior-sampling to balance exploration and exploitation of the different agents. The approach organically allows for delayed reward, per-arm costs, missing observations, and introducing new agents at any time.

Our primary contributions are the following:
\begin{itemize}
    \item We introduce the problem of online classifier selection in the context of balancing prediction accuracy, fairness, and querying cost. We formulate this within a multi-arm bandit (MAB) framework.
    \item We propose a Thompson sampling approach and develop new theoretical regret bounds that show how differences in sensitive attribute subgroup proportions directly affect convergence rates.
    \item We conduct a series of experiments across multiple real-world classification problems, including classifiers, LLMs, and human agents, and demonstrate significant superiority in efficiency of Thompson sampling (and other MAB approaches) over non-bandit methods such as round-robin querying.
\end{itemize}
For research reproducibility, we will make all our code and datasets publicly available on GitHub for the final version of our paper.

\section{Related Work}

Relevant prior research can broadly be grouped into two main categories, each of which we discuss below: (i) work in algorithmic fairness (largely focusing on offline settings), (ii) work on online model selection (distinct from fairness). In this context, a distinguishing novel feature of our work is that we investigate the intersection of these themes, namely, online model selection in the setting of algorithmic fairness.

\paragraph{Evaluating the Fairness of Prediction Models}
There has been a large body of work on the development of
 ``in-process" training algorithms that leverage fairness metrics to learn fair models (e.g., \citet{pessach2022review}). However,  less attention has been paid to the problem of statistical assessment of how fair pretrained models are, for example models that are being deployed ``in the wild." In the batch setting, there has been some work on developing general approaches for statistical assessment of the fairness properties of a pre-trained model,  emphasizing the need for uncertainty quantification in assessing if the model is fair or not \citep{mcnair2018preventing,ji2021active, miller2021model, barrainkua2024uncertainty,herlihy2024structured, yin2026fairness}. In the sequential online setting, \citet{chugg2023auditing} motivate the importance of statistical auditing of the fairness properties of a deployed prediction model and develop a family of sequential hypothesis testing approaches for this problem. Similarly, \citet{maneriker2023online} develop probabilistic bounding techniques for online monitoring of the fairness properties of a prediction model in a sample-efficient manner.

Our work differs from that described above in that we address the problem of sequential selection from a set of prediction agents (rather than monitoring a single agent) with the goal of identifying the agent that provides the best trade-off between accuracy and fairness (rather than establishing if a particular agent meets the definition of ``fair" or not).

\paragraph{Fair Bandit Algorithms} There has also been work on fair bandit algorithms that encourage fair selection of arms/contexts. \citet{schumann2022group} consider a setting in which the arms are partitioned into two or more sensitive groups, and the goal is to balance the arms pulled from both groups. Other examples include individual fairness in contextual bandits \citep{joseph2016fairness}, where lower-merit actions must not be chosen more than high-merit ones; calibrated fairness that matches selection probabilities to an agent's chance of being best \citep{liu2017calibrated}, with cumulative merit accounting and exposure fairness extensions \citep{wang2021fairness}; unknown-metric individual fairness learned from weak regulator feedback in linear contextual bandits \citep{gillen2018online}; and minimum pull-rate guarantees that ensure per-arm exposure for MABs \citep{chen2020fair,patil2021achieving}. These methods encourage fair arms selection but do not address fair predictions across sensitive groups as we do in our work.
At a high level, the closest work to our setting is \citet{huang2022achieving}, though it addresses user-side fairness exclusively within recommendation systems. Our approach provides regret bounds that explicitly incorporate sensitive subgroup probabilities and employs a Bayesian framework.

Another related line of work on \textbf{fair policy algorithms} \citep{jabbari2017fairness, wen2021algorithms,puranik2022dynamic,yin2023long}, studies fairness in reinforcement learning and Markov decision processes by accounting for long-term effects of decisions. However, these methods focus on policy fairness rather than our focus of predictive fairness across sensitive groups.%

\paragraph{Online Model Selection with Pretrained Models}
Another body of research relevant to our approach is prior work on online model selection from a set of pretrained models (e.g., classifiers or LLMs). The focus of this work is often on the design of efficient sequential algorithms to select the models with the best performance (according to some loss metric) in a streaming online setting.
For example,
\citet{huonline} address the problem of online selection of text-based generative models, conditioned on different prompts, using a contextual-bandit framework with kernel-based feature representations and an upper-confidence-bound (UCB) bandit algorithm. \citet{xia2024llm} also use a UCB bandit approach for the specific problem of selecting the best LLM in the context of non-stationarity, where the performance of models (LLMs) may be increasing over time.
\citet{wei2025learning} present an LLM routing algorithm trained online with bandits.
\citet{dai2024cost} develop combinatorial cost-aware bandit algorithms for selecting optimal combinations of multiple LLMs in the context of online user queries.  The works of \citet{karimi2021online}, \citet{kay2025consensus} and \citet{liu2024contextual} address the problem of online model selection from a different perspective, namely how to minimize the number of label queries needed to find the best-performing model. In our work, we focus on a different variant of model selection to the settings in the papers described above, in that we add a fairness dimension to the model selection problem, using a general MAB setting with querying costs and reward delays, with theoretical regret bounds to support our approach.

\section{Problem Statement}

Let ${\mathcal{A}} = \{1,\cdots, N\}$ be an index set of agents. Given an input $X$ each agent $a$ can produce a prediction of a class label,  $\hat{y}_a$.
Let $c_a \geq 0$ be the fixed deterministic {\it query cost}  incurred to obtain a prediction label from agent $a$.  At each round, $t=1,\cdots,T$, a triplet $(X_{t}, S_{t}, Y_{t})$ is drawn IID from an unknown probability distribution over $\mathcal{X} \times \mathcal{S} \times \mathcal{Y}$ where $\mathcal{X}$ is the feature space, $\mathcal{S}$ is the set of sensitive attributes, and $\mathcal{Y}$ is the set of possible outcomes. For now, we focus on two subgroups and binary classification, i.e., $\mathcal{S} = \{0, 1\}$ and $\mathcal{Y} = \{0, 1\}$--we show that the methods and theory we develop can be extended straightforwardly to more than two subgroups and to multi-class problems using appropriate definitions of fairness \citep{shui2022learning, xian2023fair}. We note that the sensitive attributes are distributed as $\mathbb{P}(S_{t} = 0) = \pi_{0}$ and $\mathbb{P}(S_{t} = 1) = \pi_{1}$ where $\pi_{0} + \pi_{1} = 1$ and $\pi_{\min} = \min\{\pi_{0}, \pi_{1}\} > 0$. We assume that this does not change over $t$.

We formulate online selection of agents as a multi-armed bandit problem where each arm represents an ML classifier or human expert, and the goal is to identify the agent that optimally balances classification error, fairness disparity, and cost.

Formally, at round $t$, the policy learner observes both the classification features $X_t$ and the sensitive attribute value $S_t$.  The learner
must choose an arm $A_{t}$ from the set of available arms at time $t$,  $\mathcal{A}_t$, informed by the history of past actions and outcomes. The prediction $\hat{y}_{A_t}$ for the selected arm is then observed with querying cost $c_{A_t}$. Although features are observed, the learner's selection policy is non-contextual.

The true label $Y_t$ is revealed at time $t+L_t$ where $L_t \in \mathbb{Z}_{\geq 0}$
is assumed to be an IID random variable representing a nonnegative integer delay, reflecting real-world scenarios (such as medical image classification, loan approval, etc) where the true $Y$ might not be known until some time in the future after the prediction. This type of delayed reward has been analyzed in the past in the MAB literature, resulting in regret bounds that scale favorably over time horizons, subject to reasonable conditions on delay distributions \citep{wu2022thompson}.

The accuracy of each arm $a$ (measured marginally over $(X, Y)$) is
    \[\rho_a = \Prb\big(\hat{y}_a(X)=Y\big).\]
The highest accuracy (over all arms/agents) is $\rho_\star=\max_a \rho_{a}$.

Fairness metrics can be expressed as functions of subgroup and label-specific conditional probabilities $\mu_{a,s,y}$. The resulting fairness gap (usually a difference or ratio) is a function $g_a = f(\mu_{a,s,y})$ of these conditional probabilities. Let the fairest arm be $\arg\min_{a\in\mathcal{A}} g_a$ and denote $g_\star=\min_{a\in\mathcal{A}} g_a$.

As an example, for demographic-parity (DP) gap, we may define the subgroup-positive rate $\mu_{a,s}=\Prb\big(\hat{y}_a(X)=1 \mid S=s\big)$ for each arm $a \in \mathcal{A}$ and subgroup $s\in\{0,1\}$. The resulting DP gap of arm $a$ is $g_a = |\mu_{a,0}-\mu_{a,1}|$.

Given the framework above, the utility per agent trades off accuracy, fairness, and cost and is defined as
\begin{equation}\label{eqn:utility}
U_a(\lambda) := \rho_a - \lambda\, g_a - c_a,
      \qquad \lambda \geq 0.
\end{equation}
 Here, $\lambda$ is a user-defined (constant) weight that a decision-maker assigns to indicate the relative weight of the fairness gap compared to classification accuracy, with larger positive values of $\lambda$ indicating greater weight on fairness over accuracy. In the context of algorithmic fairness, the weight $\lambda$ represents a \textit{normative} policy preference (how much accuracy the stakeholder is willing to sacrifice for fairness), rather than a latent statistical parameter. This functional form is similar to that used in learning fair agents \citep{kamishima2011fairness, menon2018cost}. The querying cost $c_a$ is assumed to be expressed in the same units as accuracy $\rho_a$, e.g., in some utility units. Our formulation follows the standard decision theoretic framework (\citet[pp.~176--180]{murphy2012machine}, \citet[p.~41]{bishop2006pattern}) where accuracy is the expected utility of making a correct decision and the cost is normalized relative to the value or utility of a correct prediction.

Let $U_\star(\lambda)=\max_a U_a(\lambda)$, i.e., the utility of the optimal arm $a^\star$. The goal of the learner is to select $A_t$ such that the selected arm's utility is as close as possible to the optimum $U_\star$ up to time $T$. This performance measure is formalized as the cumulative regret of the learner - a quantity that the learner aims to minimize
\begin{equation}\label{eqn:regret}
R_\lambda(T) = \sum_{t=1}^T \big(U_\star(\lambda)-U_{A_t}(\lambda)\big).
\end{equation}
This decomposes linearly into \emph{accuracy}, \emph{fairness}, and \emph{cost-}regret terms:
\begin{equation*}
    R_\lambda(T) \;=\; \underbrace{\sum_{t=1}^T\big(\rho_{a^\star}-\rho_{A_t}\big)}_{\text{accuracy regret}}
    \;+\; \lambda\underbrace{\sum_{t=1}^T\big(g_{A_t}-g_{a^\star}\big)}_{\text{fairness regret}}
    \;+\; \underbrace{\sum_{t=1}^T\big(c_{A_t} - c_{a^\star}\big)}_{\text{cost regret}}.
\end{equation*}

For clarity, we note that the accuracy term in this decomposition is formulated as the performance deficit relative to the optimal arm (i.e.  $\rho_{a^\star}-\rho_{A_t}$). This definition effectively flips the sign relative to the utility function to ensure that lower values indicate better performance, consistent with the standard definition of regret. For fairness and cost, the decomposition remains $g_{A_t}-g_{a^\star}$ and $c_{A_t}-c_{a^\star}$ respectively.

\section{Technical Approach: Bayesian Bandits}
\label{sec:proposed_algo}

\begin{algorithm}[htbp]
\caption{Thompson Sampling for Online Selection}
\label{alg:ts}
\algtext*{EndFor}%
\begin{algorithmic}[1]
    \Require Arms $\mathcal{A}$, horizon $T$, trade-off $\lambda\ge0$, per-arm costs $c_a\ge0$, priors
    \Statex \hskip\algorithmicindent $\tilde\mu_{a,s}\sim\mathrm{Beta}(\alpha_{a,s},\beta_{a,s})$ for $(a,s)\in\mathcal{A}\times\{0,1\}$\footnotemark[1],
    \Statex \hskip\algorithmicindent $\tilde\rho_a\sim\mathrm{Beta}(\alpha_a,\beta_a)$ for $a\in\mathcal{A}$.
    \For{$t=1,\dots,T$}
        \ForAll{$a\in\mathcal A_t$}
            \State Draw $\tilde\mu_{a,0}\sim\mathrm{Beta}(\alpha_{a,0},\beta_{a,0})$, \ $\tilde\mu_{a,1}\sim\mathrm{Beta}(\alpha_{a,1},\beta_{a,1})$.\label{alg:ts:draw-mu}
            \State Draw $\tilde\rho_a\sim\mathrm{Beta}(\alpha_a,\beta_a)$.\label{alg:ts:draw-rho}
            \State $\tilde g_a \leftarrow |\tilde\mu_{a,0}-\tilde\mu_{a,1}|$\footnotemark[1]
            \State $U_a \leftarrow \tilde\rho_a - \lambda\tilde g_a - c_a$.
        \EndFor
        \State Select $A_t\in\arg\max_{a\in\mathcal A_t}U_a$ (ties arbitrary).
        \State Observe $S_t$; set $r_{\text{fair}} = \1\{\hat{y}_{A_t}(X_t) = 1\}$\footnotemark[1]
        \State Update fairness posterior: $\alpha_{A_t,S_t}\!\leftarrow\!\alpha_{A_t,S_t}+r_{\text{fair}}, \ \beta_{A_t,S_t}\!\leftarrow\!\beta_{A_t,S_t}+(1-r_{\text{fair}})$.
        \ForAll{$\tau\le t$ whose label $Y_\tau$ arrives at round $t$}
            \State Set $r_{\text{acc}} = \1\{\hat{y}_{A_\tau}(X_\tau) = Y_\tau\}$.
            \State Update accuracy posterior: $\alpha_{A_\tau}\!\leftarrow\!\alpha_{A_\tau} + r_{\text{acc}}$, \ $\beta_{A_\tau}\!\leftarrow\!\beta_{A_\tau} + (1-r_{\text{acc}})$.
        \EndFor
    \EndFor
\end{algorithmic}

    \rule{0.2\linewidth}{0.1pt} \\
    {\tiny $^{1}$For DP gap, we maintain subgroup positive-rate. $\tilde g_a$ and $r_{\text{fair}}$ follow.}
\end{algorithm}

Formulating online selection as an MAB  problem allows us to use any standard MAB algorithm, including but not limited to UCB-style optimistic methods \citep{auer2002finite}, ($\epsilon$-)greedy heuristics \citep{sutton1998reinforcement}, and Thompson Sampling \citep{russo2018tutorial, russo2014learning}. In this work, we focus on Thompson Sampling (TS): the Bayesian posterior view makes it straightforward to maintain and systematically update uncertainty beliefs. The TS approach handles feedback (immediate or delayed) naturally via conjugate updates, and posterior sampling results in an intuitive mechanism that naturally balances exploration and exploitation. The seminal work of \citet{chapelle2011empirical} also demonstrated highly competitive empirical results for TS in traditional MAB settings, such as online advertisement and recommendation systems.

Our work differs from standard work on MABs in that each arm induces two latent “sub-arms” corresponding to sensitive subgroups $S$. Algorithmically, we must maintain and update separate posteriors for each subgroup as well as marginal accuracy estimates. Analytically, this increases the number of parameters to learn and introduces a dependence on subgroup prevalence: regret bounds therefore decompose into accuracy and fairness components and depend on the smallest subgroup probability (and on label frequencies for conditional fairness notions); our theoretical guarantees presented in Section \ref{sec:theory} reflect this structure. In Algorithm \ref{alg:ts}, we formalize Thompson Sampling for online selection of classifiers, balancing accuracy, fairness, and query cost with potential delays in true label observation. We consider a broader class of model selection tasks and frameworks:

\paragraph{Other fairness metrics}
In addition to TS for DP-gap, we can use other fairness metrics for this algorithm. For metrics such as Equalized Odds and Equality of Opportunity, we would maintain posterior beliefs $\tilde\mu_{a,s,y}\sim\mathrm{Beta}(\alpha_{a,s,y},\beta_{a,s,y})$ for $(a,s,y)\in\mathcal{A}\times\{0,1\}\times\{0,1\}$ that include the true label $Y$. Both $\tilde g_a$ and $r_{\text{fair}}$ change accordingly, and the fairness update waits for $Y$.

\paragraph{Multi-class classification ($|\mathcal{Y}| = C > 2$) and Multi-subgroup sensitive attribute ($|\mathcal{S}| = M > 2$)}

We continue to use Beta-Bernoulli posteriors for the accuracy estimate since the reward is still binary (assuming symmetric utilities for different misclassifications). To capture disparities in class predictions (e.g., unequal prediction rates across classes conditioned on sensitive attributes), we replace the subgroup Beta posteriors with $M$ Dirichlet posteriors (under a Dirichlet-Multinomial model), parameterizing the probability vector over the $C$ classes for each subgroup. We sample probability vectors $\tilde{\mu}_{a,s}$ for all subgroups and use the maximum/average pairwise $L_1$ distance computed on these samples to capture disparity \citep{xian2023fair, barocas2023fairness, agarwal2018reductions}.

\paragraph{Real-valued reward }
The framework extends to bounded real-valued rewards, such as the Brier score of a probabilistic classifier or LLM evaluation scores
\citep{liang2023holistic}. Suppose arm $a$ outputs a score $p_a\in[0,1]$ instead of a label. With reward $1-(Y-p_a)^2$, $\rho_a$ is one minus the arm's Brier score, and the fairness gap $g_a$ is the DP gap of the subgroup mean scores $\mu_{a,s}=\mathbb{E}[p_a\mid S=s]$. For 0/1 scores these reduce to accuracy and the DP gap. If the scores are rounded to a grid $\{0,1/K,\dots,1\}$, scores and rewards take finitely many values, and each Beta posterior is replaced by a Dirichlet posterior over these values (conjugate to the categorical likelihood). With one pseudo-count at 0 and one at 1, the prior is $\mathrm{Beta}(1,1)$ for binary data. TS samples Dirichlet weights $w$ and uses the weighted mean $\sum_k w_k v_k$ of the values $v_k$ as $\tilde\rho_a$ or $\tilde\mu_{a,s}$. Up to the prior, this is non-parametric TS \citep{riou2020bandit} with tied observations grouped.

 \section{Theoretical Results}
\label{sec:theory}

We investigate the regret bounds of our Thompson sampling algorithm (Algorithm \ref{alg:ts}). Let $\Delta_{a} = U_{\star}(\lambda) - U_{a}(\lambda)$ be the utility gap and $a^{\star} \in \text{argmax}_{a}U_{a}(\lambda)$. Recall that the cumulative regret is defined as in Equation \eqref{eqn:regret}.
\begin{theorem}[Regret Bound of Thompson Sampling with Fairness]\label{thm:bound}
For the Thompson sampling described in Section~\ref{sec:proposed_algo}, with no delays and arbitrary (deterministic) querying costs, for a universal constant $C > 0$, we have the gap-dependent regret:
\begin{equation}\label{eqn:gap-dep}
\begin{aligned}
\small
    \mathbb{E}[R_{\lambda}(T)] \leq C & \sum_{a:\Delta_a(\lambda)>0}\frac{1}{\Delta_a(\lambda)} \Bigg[ \log T +\frac{\lambda^2}{\pi_{\min}}\log T\\ &+\left( \frac{\lambda}{\sqrt{\pi_{\min}}} +\frac{\lambda^2}{\pi_{\min}} \right)(\log T)^{7/2} \Bigg] .
\end{aligned}
\end{equation}
Moreover, the gap-free bound is
\begin{equation}\label{eqn:gap-free}
\begin{aligned}
\small
    \mathbb{E}[R_{\lambda}(T)]  \leq &O\Bigg(  \sqrt{NT \log T} + \frac{\lambda}{\sqrt{\pi_{\min}}} \sqrt{NT \log T} \\
    & + \sqrt{\bigg( \frac{\lambda}{\sqrt{\pi_{\min}}} + \frac{\lambda^{2}}{\pi_{\min}} \bigg) T (\log T)^{7/2}}  \Bigg).
\end{aligned}
\end{equation}
When $N \geq (\log T)^{5/2}$, the third term of \eqref{eqn:gap-free}, the additional optimal-arm anti-concentration cost, is dominated by the first two and the gap-free bound reduces to $O\big(\sqrt{NT \log T} + \frac{\lambda}{\sqrt{\pi_{\min}}} \sqrt{NT \log T}\big)$.
\end{theorem}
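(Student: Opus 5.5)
We will follow the standard Agrawal--Goyal style analysis of Beta--Bernoulli Thompson sampling, adapted to a sampled utility that is a Lipschitz function of three independent Beta samples per arm. Write $\tilde U_a(t)=\tilde\rho_a-\lambda|\tilde\mu_{a,0}-\tilde\mu_{a,1}|-c_a$. Since $x\mapsto|x|$ is 1-Lipschitz, we have
\[
|\tilde U_a-U_a|\le|\tilde\rho_a-\rho_a|+\lambda|\tilde\mu_{a,0}-\mu_{a,0}|+\lambda|\tilde\mu_{a,1}-\mu_{a,1}|.
\]
So deviations of the sampled utility are controlled by deviations of three Beta posteriors. The accuracy posterior of arm $a$ is updated every time $a$ is pulled, with $n_a(t)$ observations. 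The subgroup posterior $(a,s)$ is updated only when $S_t=s$, so it has $n_{a,s}(t)\sim\mathrm{Bin}(n_a(t),\pi_s)$ observations. A multiplicative Chernoff bound shows that, except on an event of probability $O(T^{-2})$ (which contributes $O(1)$ regret), we have $n_{a,s}(t)\ge \pi_{\min}n_a(t)/2$ once $n_a(t)\gtrsim \log T/\pi_{\min}$. This is the mechanism by which $\pi_{\min}$ enters. The effective sample size for the $\lambda$-weighted fairness coordinates is $\pi_{\min}n_a$, so their fluctuation scales like $\lambda/\sqrt{\pi_{\min}n_a}$.

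\textbf{Suboptimal arms.} For each suboptimal $a$ we use the usual decomposition with thresholds $x_a<y_a$ splitting $\Delta_a$ into thirds.
\begin{itemize}
\item[(i)] The empirical utility $\hat U_a$ exceeds $x_a$. Hoeffding bounds on $\hat\rho_a$ and $\hat\mu_{a,s}$, combined with the subgroup-count event above, show this happens $O\big(\frac{\log T}{\Delta_a^2}(1+\lambda^2/\pi_{\min})\big)$ times in expectation.
\item[(ii)] The empirical utility is fine but the sample $\tilde U_a$ exceeds $y_a$. Beta tail bounds (via the Beta--Binomial trick), applied to each of the three coordinates with deviation $\Delta_a/(3\max(1,\lambda))$, give the same order.
\item[(iii)] The optimal arm's sample falls below $y_a$, i.e.\ $\tilde U_{a^\star}\le U_\star-\Delta_a/3$. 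Here we bound $\sum_t\mathbb{E}[1/p_{t}-1]$, where $p_t$ is the probability, given the history, that $\tilde U_{a^\star}>y_a$.
\end{itemize}
Multiplying the counts from (i)--(iii) by $\Delta_a$ yields the $\frac{1}{\Delta_a}[\log T+\frac{\lambda^2}{\pi_{\min}}\log T]$ terms of \eqref{eqn:gap-dep}.

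\textbf{Main obstacle: anti-concentration for the optimal arm (step iii).} Unlike standard TS, $\tilde U_{a^\star}$ is not a single Beta. It is a difference involving $|\tilde\mu_{0}-\tilde\mu_{1}|$, which is biased upward, and the optimal arm must be sampled optimistically in all three coordinates at once. We plan to lower bound $p_t$ by the product of the probabilities that each coordinate is at least its true value minus $\Delta_a/9$ (in the favorable direction). By independence this is a product of three Beta anti-concentration probabilities. Each factor is bounded using the Gaussian/Beta lower tail from Agrawal--Goyal, with the subgroup factors evaluated at effective counts $\pi_{\min}n_{a^\star}$. The expected inverse then contributes, for each subgroup factor, a quantity that is polynomial in $\log T$ once we condition on the subgroup-count event. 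We must also handle rounds where the optimal arm's subgroup counts are small; a peeling argument over dyadic ranges of $n_{a^\star,s}$, together with a union bound over the $O(\log T)$ levels and the three coordinates, is what we expect to produce the $(\log T)^{7/2}$ factor. The weights $\lambda/\sqrt{\pi_{\min}}$ and $\lambda^2/\pi_{\min}$ arise from the scale of the fairness-coordinate deviation relative to $\Delta_a$. This step carries the most slack and is where we expect the constant and the exponent $7/2$ to come from. Its contribution per arm is $\frac{1}{\Delta_a}\big(\frac{\lambda}{\sqrt{\pi_{\min}}}+\frac{\lambda^2}{\pi_{\min}}\big)(\log T)^{7/2}$.

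\textbf{Gap-free bound.} Write $K=1+\lambda^2/\pi_{\min}$ and $B=\big(\frac{\lambda}{\sqrt{\pi_{\min}}}+\frac{\lambda^2}{\pi_{\min}}\big)(\log T)^{7/2}$, and pick a threshold $\epsilon$. Arms with $\Delta_a\le\epsilon$ contribute at most $T\epsilon$ in total. Arms with $\Delta_a>\epsilon$ contribute at most $\sum_a\frac{1}{\Delta_a}(K\log T+B)\le \frac{N}{\epsilon}(K\log T+B)$. For the first part of this bound (the $K\log T$ term), optimizing $\epsilon$ gives $\sqrt{NT\log T}\,\sqrt{K}$. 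Since $\sqrt{K}\le 1+\lambda/\sqrt{\pi_{\min}}$, this yields the first two terms of \eqref{eqn:gap-free}. The anti-concentration term $B$ arises only from the single optimal arm. Summing it over suboptimal arms through the pull counts, rather than as $N$ separate copies, and optimizing separately gives $\sqrt{TB}$, the third term of \eqref{eqn:gap-free}. Finally, if $N\ge(\log T)^{5/2}$ then $B\le N\log T\,(\lambda/\sqrt{\pi_{\min}}+\lambda^2/\pi_{\min})$, so the third term is absorbed into the first two.
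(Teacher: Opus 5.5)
Your cases (i) and (ii) and the final cutoff arithmetic follow the paper's outline, apart from two small repairs noted at the end. Step (iii), however, has a genuine gap.

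\textbf{Averaging the product.} Lower bounding $p_t$ by a product $q_\rho q_0 q_1$ of coordinatewise probabilities is legitimate, because the three posterior draws are conditionally independent given the history. The trouble is the next step, where you average the inverse of this product by applying a separate Agrawal--Goyal estimate to each factor. The histories feeding the factors are dependent: correctness $C_*$ and prediction $Z_*$ are recorded from the same cases.

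A history with empirical law $Q$ has probability about $e^{-nD(Q\Vert P_*)}$. Up to the tolerance shift, $1/(q_\rho q_0 q_1)$ can be as large as $e^{n[d(q_C,\rho_*)+\sum_s q_s d(q_{Z,s},\mu_{*,s})]}$. Data processing bounds each of the two pieces of that exponent by $D(Q\Vert P_*)$, but not their sum, which can exceed it (up to $2D(Q\Vert P_*)$). This happens, for example, when excess positive predictions in one subgroup are also errors, so that accuracy and the empirical gap turn pessimistic together. For $n$ small enough that the deviation dominates the tolerance, truncation at $T$ then leaves a bound on $\mathbb E[\min\{T,1/(q_\rho q_0q_1)\}]$ that is polynomial in $T$, not polylogarithmic.

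H\"older or Cauchy--Schwarz decoupling does not rescue this. Already for a single Beta--Binomial factor, $\mathbb E[q^{-2}]$ grows exponentially in $n$ for $y$ close to $\mu$: at $S=0$ one has $\mathbb P(S=0)=(1-\mu)^n$ while $q=(1-y)^{n+1}$.

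\textbf{No one-sided direction for the gap.} A second difficulty is that the subgroup coordinates have no one-sided ``favorable direction'' when $g_*$ is small. The required event $|\tilde\mu_0-\tilde\mu_1|\le g_*+O(\epsilon/\lambda)$ is two-sided. Even on centered histories its probability is only of order $\epsilon\sqrt{N_s}/\lambda$, which Lemma 2.9 of Agrawal--Goyal does not cover.

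\textbf{What the paper uses instead.} The paper does not close this step unconditionally. Its appendix adds two population assumptions that your argument never uses. The first is fixed interiority of the optimal arm's three means in $[\kappa_0,1-\kappa_0]$. The second is information dominance, $J_\lambda(Q)\le D(Q\Vert P_*)$: the cheapest parameter with utility at least $U_*$ costs no more than the history that caused the pessimism. Crucially, that cheapest parameter may trade accuracy against fairness rather than being optimistic in every coordinate, which your product bound forbids.

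Under these assumptions the argument proceeds in four steps:
\begin{enumerate}
\item It compares the posterior cost of an optimistic draw with the contracted relative entropy $X_n$ of the joint history, via a level-shift inequality.
\item It lower bounds $p_n$ by a shifted-volume argument that includes the two-sided strip factor.
\item It controls weighted moments of $e^{uX_n}$ over the joint history.
\item It sums over dyadic shells of $X_n$ up to $8\log T$.
\end{enumerate}
The $(\log T)^{7/2}$ comes from that shell sum combined with $\min\{b_1,b_2\}\le\sqrt{b_1b_2}$, not from peeling over subgroup counts.

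\textbf{Two smaller repairs.}
\begin{itemize}
\item To charge $B$ once rather than $N$ times, you need the exchange inequality for a whole dyadic gap class $B_j$ with the common threshold $y_j=U_*-d_j/3$. Then the class's case-I count is bounded once by the optimal arm's inverse-odds sum at tolerance $d_j/3$. Per-arm thresholds give $\sqrt{NTB}$.
\item Your tolerance $\Delta_a/(3\max(1,\lambda))$ and the $\log T/\pi_{\min}$ burn-in produce a spurious $\log T/(\pi_{\min}\Delta_a)$ term for $\lambda<1$. The paper avoids this by allocating $\epsilon_a/(4\lambda)$ to each subgroup and dropping subgroup control entirely when $\epsilon_a>4\lambda$.
\end{itemize}
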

Without fairness, Thompson sampling has the bound $O (\sqrt{N T \log T})$ \citep{agrawal2017near}. Our bounds add two terms to this accuracy cost: the fairness-estimation cost and the optimal-arm anti-concentration cost. The $\pi_{\min}$ factors quantify the statistical cost of estimating the demographic parity gap across subgroups: rarer groups encourage more exploration. However, incorporating fairness preserves the canonical gap-dependent and gap-free regret rates in $T$. The $\lambda$ factors reflect the sensitivity of the utility to fairness-gap estimation error: heavier fairness weighting amplifies the impact of subgroup-rate uncertainty on arm misordering. The optimal-arm anti-concentration cost arises because noisy subgroup draws can inflate the optimal arm's sampled fairness gap even when its accuracy draw is optimistic, which delays its selection. This cost vanishes when $\lambda = 0$ and does not grow with $N$.

Here we provide a sketch of the proof: full proof in Appendix \ref{app:regret}. We follow the regret bound analysis of \citet{agrawal2017near} to the accuracy–fairness-cost utility. For any suboptimal arm, we consider rounds in which (I) the optimal arm's sampled utility is too low, (II) the suboptimal arm's posterior draw is overly optimistic, or (III) its posterior-mean utility is overly optimistic. The selected arm does not depend on the group of the incoming sample, each observation of an arm comes from group $s$ with probability $\pi_{s}$. Thus, the first $n$ observations of any arm contain about $\pi_{s} n$ observations from group $s$, and the smallest group contributes about $\pi_{\min} n$. Concentration of both accuracy and subgroup rates then controls (II) and (III). Known querying costs enter the utility thresholds and gaps as fixed offsets.

Case (I) requires an additional argument. The exchange step of \citet{agrawal2017near} still applies, but their scalar Beta-Binomial anti-concentration time bound does not carry over as the optimal arm's sampled utility combines an accuracy draw with the absolute difference of two subgroup draws. Moreover, the accuracy and subgroup estimates of the optimal arm can be dependent because they are computed from the same observation. Thus, we bound this joint anti-concentration cost using a positive utility margin. We then group arms by utility gap so that anti-concentration is charged once per group in the gap-free bound. At $\lambda = 0$, the joint anti-concentration step disappears and the scalar accuracy argument reduces to the standard rate.

\begin{theorem}[Regret Bound of Thompson Sampling with Fairness under Delayed Reward]\label{thm:delay}
Consider Algorithm \ref{alg:ts} under the assumptions of Theorem \ref{thm:bound}. For each arm, assume that label delays are IID. Fairness posteriors are updated immediately, whereas accuracy posteriors are updated when labels arrive. Choose $q_{a} \in (0, 1]$ such that $d_a(q_a) < \infty$, and let $q_{\min} = \min_{a} q_{a}$. Then, the gap-free regret bound becomes:
\begin{equation}\nonumber
\begin{aligned}
    \mathbb E[R_\lambda(T)]
    \leq O\Bigg(
    &\sqrt{\frac{NT\log T}{q_{\min}}}
    +
    \frac{\lambda}{\sqrt{\pi_{\min}}}
    \sqrt{NT\log T}
    \\
    &
    + (1+\lambda)\sum_{a=1}^{N}d_a(q_a) + \sqrt{T H_D(T)}
    \Bigg),
\end{aligned}
\end{equation}
where $H_D(T)$ summarizes optimal-arm anti-concentration caused by pessimistic utility draws under delayed feedback. Its formal definition is given in Appendix \ref{app:delay-proof}.
\end{theorem}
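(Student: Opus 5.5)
The plan is to reduce to the no-delay analysis of Theorem~\ref{thm:bound}, using the standard ``censored observations'' device for delayed Thompson sampling \citep{wu2022thompson}. Two facts drive the reduction. Fairness posteriors are updated immediately, since $r_{\text{fair}}$ depends only on $\hat y_{A_t}(X_t)$ and $S_t$. So the subgroup-rate posteriors $\mathrm{Beta}(\alpha_{a,s},\beta_{a,s})$ evolve exactly as in the undelayed setting. Only the accuracy posterior $\mathrm{Beta}(\alpha_a,\beta_a)$ lags. For arm $a$, the quantity $d_a(q_a)$ is the $q_a$-quantile of the delay distribution: the smallest $d$ with $\Prb(L\le d)\ge q_a$.

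Let $n_a(t)$ be the number of pulls of $a$ before $t$ and $m_a(t)$ the number of those whose labels have arrived. The first key step is a counting lemma. Apply a martingale/Freedman bound to the IID indicators $\1\{L_\tau\le d_a(q_a)\}$. On a high-probability event, $m_a(t)\ge q_a\,(n_a(t)-d_a(q_a))/2$ whenever $n_a(t)-d_a(q_a)\gtrsim \log T/q_a$. Labels from the last $d_a(q_a)$ pulls may be missing, and each earlier pull has arrived with probability at least $q_a$. Also, arrivals are independent of outcomes, because delays are IID and independent of $(X,Y)$. The arrived labels therefore form an IID Bernoulli$(\rho_a)$ sample of size $m_a(t)$, and the Beta concentration and anti-concentration bounds used in Theorem~\ref{thm:bound} apply with $n$ replaced by $m_a(t)$.

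Next, rerun the three-case decomposition (I)/(II)/(III) from the proof sketch of Theorem~\ref{thm:bound}.

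\textbf{Cases (II) and (III).} The condition for the suboptimal arm's draw to be accurate in utility splits into an accuracy part and a fairness part. The accuracy part requires $m_a\gtrsim \log T/\Delta_a^2$, which by the counting lemma means $n_a\gtrsim d_a(q_a)+\log T/(q_a\Delta_a^2)$. The fairness part requires $\pi_{\min} n_a\gtrsim \lambda^2\log T/\Delta_a^2$, which is unchanged. This yields a per-arm contribution of order
\[
\frac{\log T}{q_a\Delta_a}+\frac{\lambda^2\log T}{\pi_{\min}\Delta_a}+(1+\lambda)\,d_a(q_a).
\]
The additive delay term is the regret of the roughly $d_a(q_a)$ pulls made blind. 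Its per-pull regret is at most $1+\lambda+c_a-c_{a^\star}$, with the cost offset absorbed as in Theorem~\ref{thm:bound}. The gap-free form then follows from the usual threshold argument on $\Delta_a$ at $\sqrt{N\log T/T}$ and Cauchy--Schwarz. This gives $\sqrt{NT\log T/q_{\min}}+\frac{\lambda}{\sqrt{\pi_{\min}}}\sqrt{NT\log T}$ plus $(1+\lambda)\sum_a d_a(q_a)$.

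\textbf{Case (I), optimal-arm anti-concentration.} This is the main obstacle. In Theorem~\ref{thm:bound}, the exchange step charges $\mathbb E[1/p_{j}-1]$, where $p_j$ is the probability that the optimal arm's sampled utility exceeds a margin-shifted threshold after its $j$-th pull. Under delay, the accuracy component of that sampled utility is based on $m_{a^\star}\le n_{a^\star}$ observations, while the fairness component uses all $n_{a^\star}$. The dependence between accuracy and subgroup estimates is also now only partial.

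My approach would be to define $H_D(T)$ directly as the sum over $j\le T$ of these expected inverse anti-concentration probabilities. The sum is evaluated with the accuracy sample size replaced by the (random) arrived count. I would then bound it by the same positive-margin argument as before. The optimal arm's draw exceeds the threshold whenever the accuracy draw is optimistic by the margin and the fairness draw is not pessimistic by more than margin$/\lambda$. These two events are handled by a union/product bound conditioned on the arrival process, using the Beta anti-concentration bound of \citet{agrawal2017near} for the first event and sub-Gaussian concentration at rate $\pi_{\min} n$ for the second.

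Grouping arms by gap, so that this cost is charged once per group as in the gap-free part of Theorem~\ref{thm:bound}, and combining with Cauchy--Schwarz yields the $\sqrt{T H_D(T)}$ term. Without delay, $H_D(T)$ reduces to $(\frac{\lambda}{\sqrt{\pi_{\min}}}+\frac{\lambda^2}{\pi_{\min}})(\log T)^{7/2}$, recovering Theorem~\ref{thm:bound}. I expect the delicate point to be keeping the conditioning on the arrival process independent of the posterior samples, which I would handle by conditioning on the full delay sequence up front.
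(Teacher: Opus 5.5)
\textbf{What matches the paper.} Your treatment of cases (II) and (III), the once-per-group charging of case (I), and leaving $H_D(T)$ as an inverse-odds functional of the actual delayed histories all match the paper. The paper proves the counting bound in expectation, $\mathbb E[\sum_t \1\{A_t=a,K_a^{\mathrm{obs}}(t-1)<m\}]\le m/q_a+d_a(q_a)$, rather than on a high-probability event.

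One imprecision there: the arrived labels are not an IID sample of size $m_a(t)$. The arrived set is not a prefix of the pulls, and through the adaptive pull times it depends on earlier outcomes; conditioning on the delay sequence does not remove this. What you actually need, and what the paper uses, is the Hoeffding supermartingale with the predictable mask $w_u=\1\{A_u=a,\,u+D_u<t\}$. This gives $\Prb(K_a^{\mathrm{obs}}(t-1)\ge m,\ |\hat\rho_a(t)-\rho_a|>v)\le 2e^{-2mv^2}$.

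\textbf{The exchange step in case (I).} It cannot be indexed by optimal-arm pulls as you propose. Under delay, the posterior of $a^\star$ also changes whenever one of its labels arrives, so there is no single $p_j$ ``after the $j$-th pull''. Bounding the case-(I) count between consecutive optimal pulls by $(1-p)/p$ needs $p$ to be constant over that stretch. The paper instead partitions rounds into maximal epochs on which all three coordinate posteriors of $a^\star$ are unchanged; an epoch ends at a pull or at an arrival. It sets $\mathcal W_D(\epsilon,T)$ to the expected truncated inverse odds summed over epoch starts, and defines $H_D(T)=\sup_{0<\epsilon\le1+\lambda}\epsilon^2\mathcal W_D(\epsilon,T)$. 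The $\epsilon^2$-normalized supremum is what yields $\sum_j 2d_j\mathcal W_D(d_j/3,T)\le 36H_D(T)/d$, and hence $\sqrt{TH_D(T)}$ after optimizing $d$. Evaluating the odds at the pre-pull posterior, via $\sum_t\Prb(\mathrm{I}_t)\le\mathbb E[\sum_t\frac{1-p_t}{p_t}\1\{A_t=a^\star\}]$, would also work.

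\textbf{Bounding $H_D(T)$.} Your plan to bound $H_D(T)$ by combining an Agrawal--Goyal accuracy bound with fairness concentration fails, and the theorem does not need it. The conditional inequality $p\ge q\,r$ is fine. But concentration only shows that histories with small $r$ are rare, not that $\mathbb E[1/r]$ is small. On those histories you can only charge the truncation level $T$, which costs about $T\lambda^2/(\pi_{\min}\epsilon^2)$, a term linear in $T$. Nor can you split $\mathbb E[1/(qr)]$ into $\mathbb E[1/q]\,\mathbb E[1/r]$, because correctness and subgroup predictions come from the same cases. The paper flags exactly this. Even without delays it needs fixed interiority, the information-dominance condition $J_\lambda(Q)\le D(Q\Vert P_*)$, and the weighted history-moment lemma. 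With delays, the adaptively selected arrived subset makes this harder, which is why $H_D(T)$ is left as a definition.

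Correspondingly, the no-delay recovery is $\mathcal W_D\le W$ and $H_D(T)\le C[\log T+B_\lambda(T)]$, obtained from the paper's anti-concentration lemma rather than from a product bound. Note the $\log T$ term, which you dropped.
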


As accuracy updates require the true labels, their estimation cost is inflated by $q_{\min}^{-1/2}$. Demographic parity feedback is available immediately, so the fairness estimation term scales as $\lambda/\sqrt{\pi_{\min}}$ without an additional delay factor. The quantile argument also produces the additive contribution $(1+\lambda)\sum_{a=1}^{N}d_a(q_a)$. The remaining term, $\sqrt{T H_D(T)}$, covers the optimal arm's anti-concentration, both the accuracy-based part and the part due to joint accuracy-fairness uncertainty. \emph{Proof sketch} (full proof in Appendix~\ref{app:delay-proof}). We combine the delay-quantile counting of \citet{wu2022thompson} with the posterior concentration and exchange arguments of Theorem \ref{thm:bound}. Concentration controls overly optimistic draws of a suboptimal arm, using the arrived labels for accuracy and the predictions observed at selection for fairness. The exchange argument reduces the remaining pulls to the anti-concentration cost of the optimal arm, $H_D(T)$, which we charge once per utility-gap group, and the choice of the gap cutoff yields the stated bound. Without delays, the anti-concentration bound in Theorem \ref{thm:bound} controls $H_D(T)$ and recovers its regret rate.

\section{Experiments}

\subsection{Datasets}
\begin{table}[h]
\centering
{%
\begin{tabular}{lcccc}
\toprule
\textbf{Dataset} & Type of Arms ($N$) & $C$ & Sensitive Attribute ($M$) & $\pi_{\min}$ \\
\midrule
Bank & ML classifiers (18) & 2 & Marital Status (2) & 0.396 \\
Adult & ML classifiers (18) & 2 & Race (2) & 0.146 \\
Credit & ML classifiers (18) & 2 & Sex (2) & 0.392 \\
ACSIncome & ML classifiers (18) & 2 & Race (5) & 0.036 \\
Toxicity & LLMs (56) & 2 & Race (2) & 0.500 \\
Bios & LLMs (16) & 28 & Gender (2) & 0.461 \\
Twitter & Human Experts (10) & 2 & Dialect (2) & 0.494 \\
\bottomrule
\end{tabular}}
\caption{Datasets: $N$ arms, $C$ classes and $M$ subgroups}
\label{tab:datasets}
\end{table}

We conduct experiments on real-world datasets involving classification problems that have a sensitive attribute (see Table \ref{tab:datasets}).
For {\bf ML Classifiers as Arms} we use 4 tabular datasets where the arms are pre-trained classifiers. For each dataset we train models on 60\% of the data and use the remaining 40\% to create an online stream. On every dataset we train the same 18 classifiers. They range from standard models (e.g., Gradient Boosted Trees and Logistic Regression), to models trained with a fairness penalty and deliberately biased models; most do not use the sensitive attribute (details in Appendix~\ref{app:datasets}). The first dataset is Bank Marketing ({\bf Bank}) \citep{bank_marketing_222}, where the classification goal is to predict if the client will subscribe to a bank term deposit, with 19 features and where marital status is the sensitive attribute. {For the {\bf Adult} dataset \citep{adult_2}, the goal is to predict whether a person's income exceeds \$50K, with 11 features and where race (white vs.\ non-white) is the sensitive attribute.
The {\bf Credit} dataset \citep{yeh2009comparisons}  is a credit default classification task, with 22 features, and customer sex is the sensitive attribute.
For a {\bf Multi-subgroup Sensitive Attribute} ($M>2$) we use the updated ({\bf ACSIncome}) \citep{ding2021retiring}, with the same prediction task as Adult, using 9 features. The sensitive attribute is race/ethnicity with five groups
and the DP gap is the largest gap between any two groups.}
Additional experiments with the Coupon dataset \citep{wang2017bayesian} and COMPAS \citep{Larson_Angwin_Kirchner_Mattu_2016} are included in Appendix~\ref{app:datasets}.

For {\bf LLMs as Arms} we use the CivilComments Toxicity detection dataset ({\bf Toxicity}) \citep{borkan2019nuanced}. For this dataset, the arms correspond to 56 LLMs accessed through API queried on 2000 prompts, where the classification task is whether a passage is toxic or not \citep{liang2023holistic}. Race is the sensitive attribute. {We include a {\bf Multi-class Classification} ($C>2$) setup with LLMs, using the   Bias in Bios ({\bf Bios}) dataset \citep{de2019bias}, where the goal is to predict a person's occupation (28 classes) from their online biography. Gender is the sensitive attribute. The arms are 16 open instruction-tuned LLMs (0.36B--9B parameters) queried zero-shot on 30{,}000 biographies.}

For {\bf Humans as Arms} we use the TwitterAAE dataset (\textbf{Twitter}) \citep{keswani2021towards} with Tweets annotated using Mechanical Turk
for whether they contain hate-speech or not \citep{davidson2017automated}
The sensitive attribute is whether the post contains African-American English
\citep{blodgett2017dataset}. The original dataset contained annotations from 170 experts, each covering only ~4\% of the data. To reduce sparsity in the response, we grouped experts into 10 arms by randomly combining 18 experts per arm and using majority voting, resulting in each arm classifying $\approx$60\% of the examples.

\subsection{Baseline Methods}

\noindent\textbf{Round-Robin} The learner cycles deterministically through available arms, guaranteeing even coverage and predictable scheduling; it yields unbiased exploration but no exploitation of observed rewards. It is equivalent, in expectation, to a random policy.

\noindent\textbf{$\epsilon$-Greedy} Under the same Bayesian setup as Thompson Sampling, we consider a greedy strategy that selects the arm with the largest posterior mean.
For the subgroup posterior estimate (Algorithm \ref{alg:ts}, Step~\ref{alg:ts:draw-mu}) $\tilde\mu_{a,s} = \mathbb{E}[\mathrm{Beta}(\alpha_{a,s},\beta_{a,s})]$ for $(a,s)\in\mathcal{A}\times\{0,1\}$, and for accuracy (Algorithm \ref{alg:ts}, Step~\ref{alg:ts:draw-rho})  $\tilde\rho_a = \mathbb{E}[\mathrm{Beta}(\alpha_a,\beta_a)]$ for $a\in\mathcal{A}$. At each round, with probability $1-\epsilon$ the arm with the largest mean is selected, and (to encourage exploration) with probability $\epsilon$ an arm is selected uniformly at random.

We compare against \textbf{UCB1} where the learner selects the arm with the largest optimistic utility \citep{auer2002finite} and \textbf{Bayes-UCB} \citep{kaufmann2012bayesian}, where, under the same Bayesian setup as TS, posterior samples are replaced with posterior quantiles at level $1-1/t$. Additional implementation details in Appendix \ref{app:UCB}.

\subsection{Methodology}\label{sec:methodology}

We average results over 10{,}000 runs. Each run is a stream of $T=50{,}000$ instances sampled with replacement from the test dataset (results with fewer rounds are in Appendix \ref{app:fewer}), and all algorithms and values of $\lambda$ see the same stream. We report the mean regret per example, $R_\lambda(T)/T$ in \%, computed from the arms' true accuracies and gaps on the test dataset. On Twitter, regret is measured against the best available arm in each round. Bayesian methods are always initialized with uninformative priors $\mathrm{Beta}(1,1)$ or $\mathrm{Dirichlet}(1,\dots,1)$.%
For the experiments, the fairness gap is defined as the demographic parity gap (for Bios, half the $L_1$ distance between the subgroups' class distributions; for ACSIncome, the largest gap between any two subgroups) and traded off with the utility score (Equation \ref{eqn:utility}). $\lambda=0$ represents accuracy-only selection ($U_a= \rho_a - c_a$) and $\lambda=\infty$ fairness-only selection ($U_a = -g_a - c_a$). We use 29--38 values of $\lambda$ per dataset (Table~\ref{tab:wins}), log-spaced and denser near the values at which the best arm changes.
For all  results in the main paper, the query cost is the same for all arms (so the cost term drops out of the utility) and the delay $L_t$ is zero. Results with non-zero costs and with delays are in Appendices \ref{app:cost} and \ref{app:delay}.

\subsection{Results}

In Table \ref{tab:endpoints} we compare Thompson Sampling and the 5 baselines across the 7 datasets in Table~\ref{tab:datasets} at the two ``endpoint'' special cases of  (a) $\lambda=0$ (where regret only depends on model accuracy) and (b) $\lambda = \infty$ (where regret only depends on fairness). Thompson sampling has the smallest regret in 12 of the 14 cases, and in the other two it is within 0.1 of the best.

\begin{table}[htbp]
\centering
{%
\setlength{\tabcolsep}{3pt}%
{%
\begin{tabular}{@{}lccccccc@{}}
\toprule
\textbf{Algorithm} & Twitter & Bank & Adult & Credit & Toxicity & Bios & ACSIncome \\
\midrule
\multicolumn{8}{@{}l}{\textbf{(a) Accuracy} ($\lambda=0$)} \\
\midrule
Round-Robin & 4.141 & 3.494 & 3.521 & 3.155 & 16.73 & 17.80 & 7.115 \\
UCB1 & 1.924 & 0.819 & 1.544 & 0.975 & 4.930 & 3.344 & 2.476 \\
Bayes-UCB & 0.864 & 0.563 & 0.679 & 0.615 & 2.229 & 0.669 & 0.845 \\
$\epsilon$-Greedy (0) & 1.200 & 0.836 & 2.505 & 1.109 & 4.151 & 8.327 & 5.255 \\
$\epsilon$-Greedy (0.1) & 0.915 & 0.634 & 0.552 & \textbf{0.530} & 2.406 & 2.236 & 1.001 \\
TS & \textbf{0.770} & \textbf{0.507} & \textbf{0.548} & 0.532 & \textbf{1.948} & \textbf{0.523} & \textbf{0.676} \\
\midrule
\multicolumn{8}{@{}l}{\textbf{(b) Fairness} ($\lambda=\infty$)} \\
\midrule
Round-Robin & 3.706 & 5.612 & 9.266 & 4.956 & 12.89 & 10.44 & 21.37 \\
UCB1 & 3.739 & 2.344 & 6.472 & 1.612 & 11.80 & 10.44 & 15.12 \\
Bayes-UCB & 2.619 & 1.627 & 5.181 & 0.764 & 6.902 & 7.820 & 3.136 \\
$\epsilon$-Greedy (0) & 2.010 & 1.624 & 5.335 & 0.815 & 1.874 & 2.860 & 8.378 \\
$\epsilon$-Greedy (0.1) & 1.715 & 1.123 & 2.435 & 0.653 & 1.453 & \textbf{2.039} & 5.151 \\
TS & \textbf{1.367} & \textbf{1.023} & \textbf{2.416} & \textbf{0.179} & \textbf{0.548} & 2.109 & \textbf{1.495} \\
\bottomrule
\end{tabular}
}}
\caption{Regret per example (\%) after $T=50$k rounds for each algorithm (rows) and dataset (columns), for (a) accuracy ($\lambda=0$) and (b) fairness ($\lambda=\infty$), averaged over 10{,}000 runs (the endpoints of the curves in Figure~\ref{fig:tradeoff-50k-no-delay}). Minimum regret bolded.}
\label{tab:endpoints}
\end{table}

\begin{figure*}[htbp]
    \centering
    \includegraphics[width=\textwidth]{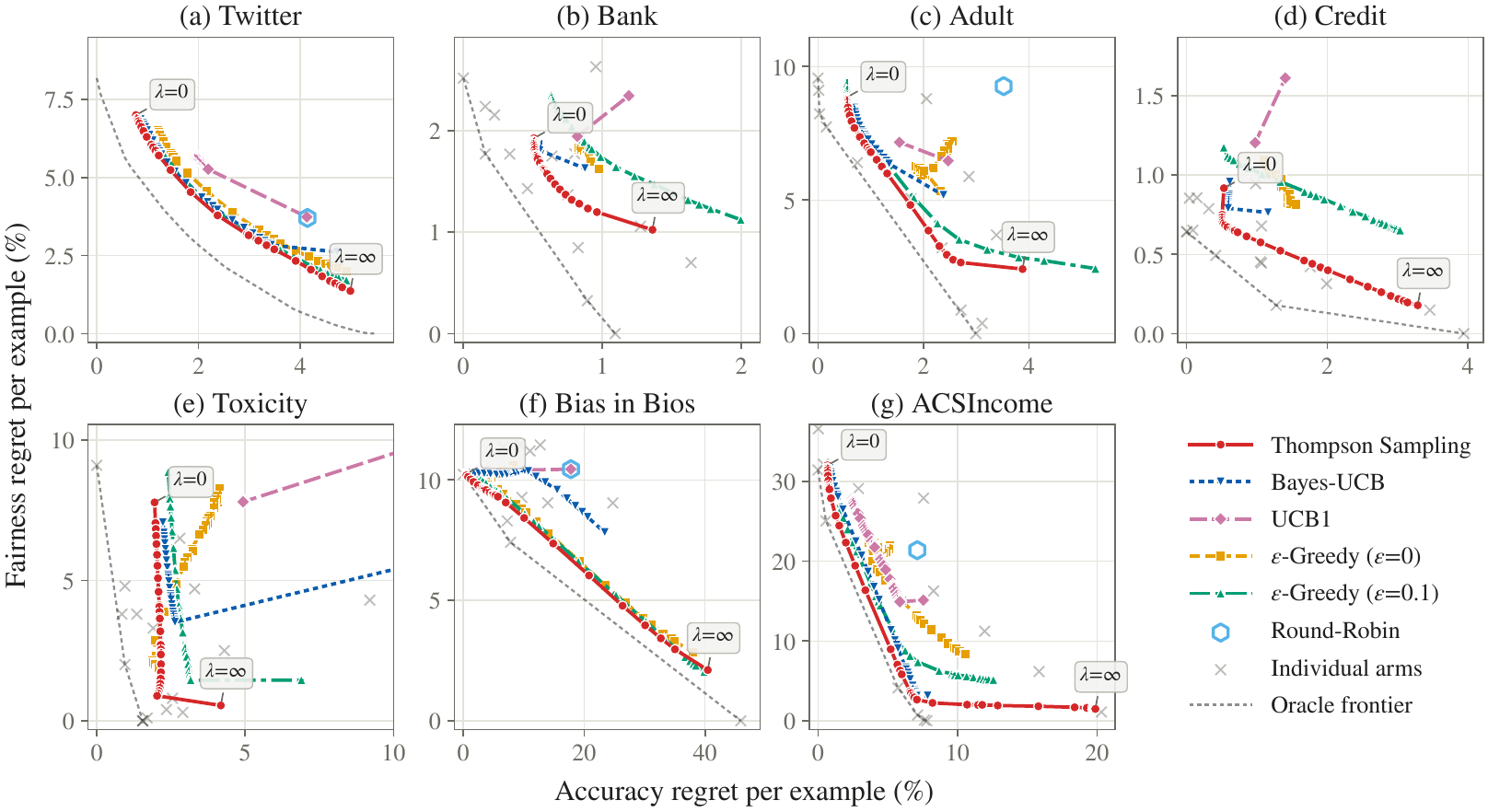}
    \caption{Trade-off between accuracy regret (x-axis) and fairness regret (y-axis) after $T=50$k rounds, as average regret per example (\%) over 10{,}000 runs. Each point on a curve is one value of $\lambda$, from $\lambda=0$ (all weight on accuracy) to $\lambda=\infty$ (all weight on fairness). Datasets (a)--(e) are binary, (f) is multi-class and (g) is multi-subgroup. Grey crosses are the individual arms and the dotted line is the oracle frontier. Round-Robin does not depend on $\lambda$ and is omitted from (b), (d) and (e), where it is off-scale.}
    \label{fig:tradeoff-50k-no-delay}
\end{figure*}

Figure \ref{fig:tradeoff-50k-no-delay} shows the trade-off for $\lambda$ values between these two endpoints, indicating the cumulative accuracy regret (x-axis) and cumulative fairness regret (y-axis) for that $\lambda$ value as a percentage of total rounds, averaged over 10{,}000 runs each of $T=50$k rounds. The endpoints of each curve are the values in Table~\ref{tab:endpoints}. Additional details are provided in Appendix \ref{app:datasets}.

The non-MAB method (Round-Robin) is clearly not competitive with the MAB methods. In fact for the Bank, Credit and Toxicity datasets, Round-Robin is off-the-scale to the upper-right. Among the MAB methods, Thompson Sampling (TS) is closest to the oracle frontier on every dataset (in some cases by a large margin), with occasional exceptions for some values of $\lambda$ (which is not surprising given that none of these methods are provably dominant empirically for all MAB problems and their relative performance is dataset-dependent \citep{russo2018tutorial}). The UCB methods are far from the frontier at large $\lambda$, because their lower bound on the DP gap stays at zero until the subgroup confidence intervals separate. The overall picture from Figure \ref{fig:tradeoff-50k-no-delay} is that (i) MAB methods significantly outperform non-MAB methods, and (ii) TS is more reliable in achieving good solutions than the $\epsilon$-Greedy and UCB methods.

Table~\ref{tab:wins} extends the comparison of Table~\ref{tab:endpoints} to every value of $\lambda$. It summarizes Figure~\ref{fig:tradeoff-50k-no-delay} in tabular form how often each algorithm has the best point. TS has the lowest regret at all values on four datasets and at all but one on Credit. On ACSIncome, Bayes-UCB is lower where two arms have nearly equal utility. On Bios, $\epsilon$-Greedy ($\epsilon=0.1$) is lower at large $\lambda$: with 28 classes, the flat Dirichlet prior inflates the sampled gap of a rarely selected arm, so TS can under-explore the fairest LLM.

\begin{table}[htbp]
\centering
{%
\setlength{\tabcolsep}{3pt}%
{%
\begin{tabular}{@{}lccccccc@{}}
\toprule
\textbf{Algorithm} & Twitter & Bank & Adult & Credit & Toxicity & Bios & ACSIncome \\
\midrule
Round-Robin & 0 & 0 & 0 & 0 & 0 & 0 & 0 \\
UCB1 & 0 & 0 & 0 & 0 & 0 & 0 & 0 \\
Bayes-UCB & 0 & 0 & 0 & 0 & 0 & 0 & 18 \\
$\epsilon$-Greedy (0) & 0 & 0 & 0 & 0 & 0 & 0 & 0 \\
$\epsilon$-Greedy (0.1) & 0 & 0 & 0 & 3 & 0 & 38 & 0 \\
TS & \textbf{100} & \textbf{100} & \textbf{100} & \textbf{97} & \textbf{100} & \textbf{62} & \textbf{82} \\
\bottomrule
\end{tabular}
}}
\caption{\% of $\lambda$ values at which each algorithm has the lowest average cumulative regret after $T=50$k rounds.}
\label{tab:wins}
\end{table}

We report further experiments with real-valued rewards (Appendix~\ref{app:real}), multi-class classification on a second dataset (Appendix~\ref{app:multi-class}), fewer rounds (Appendix~\ref{app:fewer}), fewer arms (Appendix~\ref{app:fewer-arms}), delayed feedback (Appendix~\ref{app:delay}), query costs (Appendix~\ref{app:cost}) and smaller values of $\pi_{\min}$ (Appendix~\ref{app:pi_min_experiment}). TS is consistently the best-performing method across scenarios, with the lowest regret at most values of $\lambda$. The main exceptions are short horizons ($T\leq10$k) and long delays.

\section{Discussion and Conclusion}
\label{sec:discussion}
In terms of {\bf limitations}, our theoretical proofs focus on binary classifiers, non-contextual bandits, and difference-based fairness metrics for binary sensitive attributes. Nonetheless, we believe our results represent an important step in model selection that balances accuracy, fairness, and cost, providing a basis for a variety of directions for {\bf future work} in terms of theory, algorithms, and experiments.
An obvious direction is model selection via \textit{contextual bandits} \citep{chapelle2011empirical}, where the selected arm $A_t$ is dependent on the instance $X_t$,
e.g., modeling the expected reward via GLMs conditioned on features $X_t$ and utilizing Laplace approximations to maintain a tractable Gaussian posterior over the parameters. This could result in policies that   allocate tasks to arms that are better for specific regions of the context space.
Additional potential directions for future research include: extensions to time-to-event models, multi-play algorithms to provide additional exploration of small subgroups (small $\pi_{\min}$), and leveraging probabilistic information from ML models.
The algorithmic template would remain largely unchanged for many of these extensions: maintain per-arm/per-subgroup posteriors for the quantities that define utility, sample those posteriors at decision time, and select based on a utility that trades-off accuracy, fairness, and cost; only the choice of conjugate family and the disparity functional differ.

\clearpage

\bibliographystyle{abbrvnat}
\bibliography{references}

\clearpage
\appendix

\section{Regret Bound Proofs for Thompson Sampling}
\label{app:regret}

We assume the standard setting where outcomes and sensitive attributes are independent across arms and IID over time. The utility $U_{a}(\lambda)$ depends on the underlying Bernoulli means through the accuracy $\rho_a$ and the fairness gap $g_a = |\mu_{a,0} - \mu_{a,1}|$. Since the absolute difference is a 1-Lipschitz function, the overall utility is Lipschitz continuous with respect to the parameters. This allows us to apply standard posterior concentration results (e.g., \citet{agrawal2017near}).

\subsection{Regret Bound of the Thompson Sampling with No Delays (Theorem \ref{thm:bound})}

We follow the proof structure of \citet{agrawal2017near}. We first bound the expected number of selections of each suboptimal arm, then derive the gap-free regret bound. The high-level picture is:
\begin{enumerate}
    \item Decompose selections on the utility scale: we define utility thresholds and split the selections of a suboptimal arm into three cases: (I) an ordinary competitor beats a pessimistic draw of the optimal arm, (II) the competitor's posterior draw is overly optimistic, or (III) its empirical estimate is overly optimistic.
    \item Bound the cases in the order of \citet{agrawal2017near}: we use the exchange step and the optimal-arm observation epochs for (I), empirical concentration for (III), and then conditional Beta concentration for (II). (II) and (III) can be addressed the same way as in \citet{agrawal2017near}; however (I) needs a new optimism estimate.
    \item Combine and group by gap: adding three cases proves the selection-count theorem. For the gap-free regret, we charge small gaps directly and charge the anti-concentration once per group of larger gaps. This keeps the additional anti-concentration contribution free of a factor $\sqrt{N}$.
\end{enumerate}

We note that, at $\lambda=0$, the joint anti-concentration step reduces to the scalar argument of \citet{agrawal2017near}.

We consider $N$ fixed arms that are always available. Cases are IID, the policy is noncontextual, and feedback is immediate. Algorithm \ref{alg:ts} keeps its original independent Beta$(1,1)$ priors on each coordinate, its posterior draws, and its updates from each selected case. Let $C_a$ be the correctness, $Z_a$ the binary prediction, and $S\in\{0,1\}$ the subgroup. We define
\begin{equation}
\rho_a=\mathbb{P}(C_a=1),\qquad \mu_{a,s}=\mathbb{P}(Z_a=1| S=s),\qquad
g_a=|\mu_{a,0}-\mu_{a,1}|,\qquad U_a=\rho_a-\lambda g_a-c_a.
\end{equation}
The costs $c_a\geq0$ are known, $\lambda\geq0$ is finite, and $\pi_s=\mathbb{P}(S=s)>0$. We choose $a^*\in\arg\max_a U_a$, set $\Delta_a=U_*-U_a$, and let $K_a(T)$ count the selections of arm $a$. We write
\begin{equation}
\pi_{\min}=\min_s\pi_s,\qquad V_\lambda=1+\frac{\lambda^2}{\pi_{\min}}.
\end{equation}
Then $R_\lambda(T)=\sum_a\Delta_a K_a(T)$. All bounds below are for $T\geq2$.

For positive fairness weights, we impose the following two population conditions. First, the three true means of the optimal arm lie in $[\kappa_0,1-\kappa_0]$ with the fixed numerical margin $\kappa_0=1/100$. Second, let $P_*$ be the joint law of $(S,Z_*,C_*)$, and define
\begin{align*}
\mathcal{U}(\rho,\mu_0,\mu_1)&=\rho-\lambda|\mu_0-\mu_1|-c_{a^*},\\
K(Q,\theta)&=d(q_C,\rho)+\sum_s q_s d(q_{Z,s},\mu_s),\\
J_\lambda(Q)&=\min_{\theta:\mathcal{U}(\theta)\geq U_*}K(Q,\theta).
\end{align*}
Here $d$ is the Bernoulli KL divergence, $q_C=Q(C=1)$, $q_s=Q(S=s)$, and $q_{Z,s}=Q(Z=1| S=s)$. A conditional term with zero weight is zero. We assume
\begin{equation}
J_\lambda(Q)\leq D(Q\Vert P_*)
\quad\text{\normalfont for every $Q$ with finite divergence}.
\label{eq:information}
\end{equation}
Both sides of \eqref{eq:information} are exponential rates for $n$ observations of the optimal arm with empirical law $Q$. The product Beta posterior has density proportional to $e^{-nK(Q,\theta)}$, so it draws a parameter with utility at least $U_*$ with probability $e^{-nJ_\lambda(Q)}$, up to polynomial factors in $n$. A history with empirical law $Q$ occurs with probability at most $e^{-nD(Q\Vert P_*)}$. The condition therefore says that a history that makes the optimal arm pessimistic is at least as rare as the pessimism it causes, so such histories cannot dominate the expected anti-concentration for an optimistic draw. At $\lambda=0$, the condition holds automatically by \eqref{eq:information-zero}. For $\lambda>0$, $K$ adds the accuracy and subgroup costs separately, whereas $D(Q\Vert P_*)$ is computed from the joint law of correctness and prediction, so \eqref{eq:information} is an additional condition on $P_*$.

Throughout, $C\geq1$ denotes a universal numerical constant that may change from line to line.

\begin{theorem}\label{thm:main}
Under the conditions above, for every arm $a$ with $\Delta_a>0$ and $T\geq2$, Algorithm \ref{alg:ts} satisfies
\begin{equation}
\mathbb{E}[K_a(T)]\leq\frac{C}{\Delta_a^2}\Big[
\underbrace{\log T}_{\text{\normalfont accuracy}}
+\underbrace{\frac{\lambda^2}{\pi_{\min}}\log T}_{\text{\normalfont fairness estimation}}
+\underbrace{\left(\frac{\lambda}{\sqrt{\pi_{\min}}}+
\frac{\lambda^2}{\pi_{\min}}\right)(\log T)^{7/2}}_{\text{\normalfont additional optimal-arm anti-concentration}}
\Big].
\label{eq:main-count}
\end{equation}
At $\lambda=0$, the two additional population conditions are not needed. Moreover, we also have:
\begin{equation}
\mathbb{E}[R_\lambda(T)]\leq C\Big[
\underbrace{\sqrt{NT\log T}}_{\text{\normalfont accuracy}}
+\underbrace{\frac{\lambda}{\sqrt{\pi_{\min}}}\sqrt{NT\log T}}_{\text{\normalfont fairness estimation}}
+\underbrace{\sqrt{\left(\frac{\lambda}{\sqrt{\pi_{\min}}}+
\frac{\lambda^2}{\pi_{\min}}\right)T(\log T)^{7/2}}}_{\text{\normalfont additional optimal-arm anti-concentration}}\Big].
\label{eq:main-free}
\end{equation}
\end{theorem}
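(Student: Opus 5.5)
We follow the three-case decomposition of \citet{agrawal2017near}, carried out on the utility scale. Fix a suboptimal arm $a$ with $\Delta_a>0$ and pick thresholds $U_a<x_a<y_a<U_*$ at spacing $\Delta_a/3$. For each round $t$, write $\hat U_a(t)$ for the posterior-mean utility and $\tilde U_a(t)$ for the sampled utility. The event $\{A_t=a\}$ is then split into three cases: (III) $\hat U_a(t)>x_a$; (II) $\hat U_a(t)\le x_a$ but $\tilde U_a(t)>y_a$; and (I) $\tilde U_a(t)\le y_a$ while $a$ still beats $a^*$, which forces $\tilde U_{a^*}(t)\le y_a$.

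\textbf{Cases (II) and (III): concentration.} Since $|x|$ is $1$-Lipschitz, a utility deviation of $\Delta_a/3$ requires either an accuracy deviation of order $\Delta_a$ or a subgroup-rate deviation of order $\Delta_a/\lambda$. Arm selection does not depend on $S_t$, so among the first $n$ pulls of $a$ the group-$s$ count is $\mathrm{Bin}(n,\pi_s)$. A Chernoff bound shows that, outside a negligible event, this count is at least $\pi_{\min}n/2$. Hoeffding's inequality then gives, for the empirical means, a tail of $e^{-cn\Delta_a^2}+e^{-cn\pi_{\min}\Delta_a^2/\lambda^2}$. Summing over $n$ bounds case (III) by $C V_\lambda/\Delta_a^2$. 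For case (II), we use the Beta tail bounds of \citet{agrawal2017near} on each coordinate separately, with a union bound over the three coordinates. After $n\ge C V_\lambda\log T/\Delta_a^2$ pulls, each overshoot probability is at most $1/T$. This yields the terms $\log T+\frac{\lambda^2}{\pi_{\min}}\log T$.

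\textbf{Case (I): exchange and anti-concentration.} The exchange step of \citet{agrawal2017near} still applies. Let $p_{j}=\Prb(\tilde U_{a^*}>y_a\mid\text{history after }j\text{ pulls of }a^*)$. Then the expected number of case-(I) rounds is bounded by $\sum_j\mathbb{E}[1/p_j-1]$, and the whole task reduces to bounding $\mathbb{E}[1/p_j]$. This is the step we expect to be the main obstacle. The sampled utility $\tilde\rho-\lambda|\tilde\mu_0-\tilde\mu_1|$ is not a scalar Beta, and the three posteriors are built from the same observations, so they are dependent. We will proceed in two stages.

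First, we lower bound $p_j$ by the product-Beta mass of the set $\{\mathcal U\ge U_*\}$, which is a set of utility margin $U_*-y_a\ge\Delta_a/3$ around the empirical law $Q$. The product Beta posterior has density proportional to $e^{-jK(Q,\theta)}$, and a Laplace-type estimate on this set gives $p_j\gtrsim \mathrm{poly}(j)^{-1}e^{-jJ_\lambda(Q)}$. The margin $\kappa_0$ keeps the Beta densities regular; this is where the polynomial factors, and eventually the $(\log T)^{7/2}$, come from.

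Second, we take expectations over histories using Sanov's method of types: $\Prb(Q)\le e^{-jD(Q\Vert P_*)}$. By the information condition \eqref{eq:information}, the exponentials cancel, leaving $\mathbb{E}[1/p_j]\le\mathrm{poly}(j)$ uniformly in $j$. For large $j$, we bound $1/p_j-1$ by $e^{-cj\Delta_a^2/V_\lambda}$ via concentration, exactly as in cases (II)--(III). Only $j\lesssim V_\lambda(\log T)/\Delta_a^2$ epochs then contribute, which gives the anti-concentration term. We expect its scaling to be $(\frac{\lambda}{\sqrt{\pi_{\min}}}+\frac{\lambda^2}{\pi_{\min}})(\log T)^{7/2}/\Delta_a^2$ once the polynomial factors are tracked, where the $\lambda/\sqrt{\pi_{\min}}$ part comes from the cross term between accuracy and fairness widths. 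At $\lambda=0$ this stage collapses to the scalar Beta argument, and the condition is not needed. Adding the three cases proves \eqref{eq:main-count}.

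\textbf{Gap-free bound \eqref{eq:main-free}.} We write $R_\lambda(T)=\sum_a\Delta_aK_a(T)$ and fix a threshold $\epsilon$. Arms with $\Delta_a\le\epsilon$ contribute at most $\epsilon T$. For arms with $\Delta_a>\epsilon$, the concentration terms give $\sum_a CV_\lambda\log T/\Delta_a\le CNV_\lambda\log T/\epsilon$, and $\epsilon=\sqrt{NV_\lambda\log T/T}$ yields the first two terms, using $\sqrt{V_\lambda}\le1+\lambda/\sqrt{\pi_{\min}}$.

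The anti-concentration cost concerns only $a^*$, so it must not be charged per arm. We group arms into dyadic gap classes $\Delta\in(2^{-k-1},2^{-k}]$ and use a common threshold $y$ for each class, so that the exchange bound for $a^*$ is paid once per class. Class $k$ then contributes $2^{-k}\cdot 2^{2k}B$, where $B=(\frac{\lambda}{\sqrt{\pi_{\min}}}+\frac{\lambda^2}{\pi_{\min}})(\log T)^{7/2}$. Summing geometrically down to the cutoff $\epsilon'$ gives $B/\epsilon'+\epsilon' T$, which we balance as $\sqrt{BT}$.
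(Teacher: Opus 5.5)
\textbf{Verdict.} Three parts of your proposal match the paper: the three-case decomposition, the concentration bounds for cases (II)--(III), and the dyadic grouping with a common threshold per gap class (so the optimal arm's anti-concentration is paid once per class). The genuine gap is in case (I), which is the only new step.

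\textbf{The gap in case (I).} You lower bound $p_j$ by the posterior mass of $\{\mathcal U\ge U_*\}$, so the margin $U_*-y_a$ never enters the exponent. The information condition then gives only exact cancellation. Summing $\Prb(Q)/p_j(Q)\le\mathrm{poly}(j)\,e^{-j(D(Q\Vert P_*)-J_\lambda(Q))}$ over the polynomially many joint types yields $\mathbb{E}[1/p_j]\le\mathrm{poly}(j)$, with no decay in $j$.

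Concentration cannot supply that decay. It bounds $\mathbb{E}[1-p_j]$, not the inverse odds $\mathbb{E}[(1-p_j)/p_j]$. The inverse odds are carried by exactly the atypical histories on which $p_j$ is exponentially small, and there you have only the non-decaying cancellation bound. So nothing truncates the sum over $j<T$.

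Even granting truncation at $j\lesssim V_\lambda\log T/\Delta_a^2$, summing a $\mathrm{poly}(j)$ bound gives $\mathrm{poly}(V_\lambda\log T/\Delta_a^2)$. That is a higher power of $1/\Delta_a$ than the claimed $1/\Delta_a^2$. The $(\log T)^{7/2}/\Delta_a^2$ scaling is asserted in your proposal, not derived.

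\textbf{How the paper handles it.} It needs three further ideas.

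\emph{Localization.} On histories where the subgroup posteriors keep $\lambda(\tilde g_*-g_*)\le\epsilon/2$ with conditional probability at least $3/4$, conditional independence gives $p_n\ge\frac34\Prb(\tilde\rho_*>\rho_*-\epsilon/2\mid\mathcal H_n)$. The scalar Lemma 2.9 of \citet{agrawal2017near} then applies. On the remaining histories, the relative-entropy cost $X_n$ of the history must satisfy $n+1\le C\frac{\lambda^2}{\pi_{\min}\epsilon^2}(X_n+1)$.

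\emph{A margin-strengthened information comparison.} Interpolate a minimizer of the unrelaxed problem toward the empirical parameter, using concavity of $\mathcal U$, convexity of KL, and information dominance. This shows that the posterior cost $j_n$ of reaching utility $U_*-\epsilon/2$ satisfies $\sqrt{j_n}\le(\sqrt{X_n}-\epsilon\sqrt n/(C\sqrt{V_\lambda}))_+$. Atypical histories therefore gain a factor $e^{-c\epsilon\sqrt{nX_n}/\sqrt{V_\lambda}}$, which makes them summable in $n$.

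\emph{An exponent strictly below one.} The posterior-mass lower bound interpolates a fraction $\delta$ toward a box around $\theta_*$; this is where fixed interiority enters. The moment bounds $\mathbb{E}[D_{n,\delta}e^{(1-\delta)X_n}]\le C\delta^{-2}$, with $D_{n,\delta}$ a count-dependent polynomial factor, hold uniformly in $n$. This avoids your type-counting loss.

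Finally, choose $\delta\asymp1/x$ on dyadic shells $X_n\in[x,2x)$ with $x\le8\log T$; larger $X_n$ is negligible by Markov's inequality. This puts every polynomial loss in $\log T$ rather than in $n$, and yields the bound $\frac{C}{\epsilon^2}\big[\log T+\min\{\frac{\lambda^2}{\pi_{\min}}(\log T)^{9/2},V_\lambda(\log T)^{5/2}\}\big]$. The $\lambda/\sqrt{\pi_{\min}}$ term comes from bounding this minimum by the geometric mean of its two entries, not from a cross term between accuracy and fairness widths.
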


The displayed bounds separate three contributions: accuracy, fairness estimation from subgroup observations, and additional optimal-arm anti-concentration. The accuracy term already includes the scalar anti-concentration. The additional anti-concentration cost is shared across competitors, so its gap-free contribution has no factor $\sqrt N$. Both fairness contributions vanish at $\lambda=0$. We do not change the algorithm or its sampling rule.

The anti-concentration term in \eqref{eq:main-free} is absorbed into the first two terms when the number of arms is large or the fairness weight is small. Either of the two conditions
\begin{equation}
N\geq(\log T)^{5/2}
\quad\text{\normalfont or}\quad
\lambda\leq\frac{N\sqrt{\pi_{\min}}}{(\log T)^{5/2}}
\label{eq:absorption-condition}
\end{equation}
implies
\begin{equation}
\mathbb{E}[R_\lambda(T)]\leq C\left(1+\frac{\lambda}{\sqrt{\pi_{\min}}}\right)\sqrt{NT\log T}.
\label{eq:two-term}
\end{equation}
The ratio of the anti-concentration term to the accuracy and fairness-estimation terms in \eqref{eq:main-free} is at most $(\log T)^{5/4}/\sqrt N$, so the anti-concentration term is relatively negligible whenever $N/(\log T)^{5/2}\to\infty$. This compares contributions to the upper bound only. It does not imply that the absolute anti-concentration term decreases with $N$, or that a fixed $N$ suffices as $T\to\infty$. We also have no lower bound showing that the extra logarithmic factors are necessary.

Lemma 2.9 of \citet{agrawal2017near} bounds the expected inverse odds of a scalar Beta optimism event, averaging over a binomial success count. Our exchange argument remains valid, but for $\lambda>0$ the corresponding event is
\begin{equation}
\tilde\rho_*-\rho_*+\epsilon>
\lambda(\tilde g_*-g_*).
\end{equation}
This event involves accuracy and the absolute difference of two subgroup draws with random observation counts. At $g_*=0$, a disagreement in either direction creates a fairness penalty. Although the posterior draws are conditionally independent, their empirical histories can be dependent because accuracy and subgroup observations come from the same cases. Consequently, coordinatewise concentration bounds alone do not control the required expected inverse odds, and products of conditional probabilities cannot generally be averaged by multiplying their expectations.

The anti-concentration proof first combines the scalar bound with concentration on histories where the fairness error can be controlled within the utility margin. The remaining histories require a multivariate argument. The information condition compares the minimum KL cost of an optimistic parameter, $nJ_\lambda(Q)$, with the relative-entropy cost of its history, $nD(Q\Vert P_*)$. Fixed interiority of the optimal arm supports a finite-sample lower bound on the corresponding posterior mass. The positive utility margin strengthens the exponential bound and supplies decay when summing over observation counts. Weighted information-moment bounds then control averaging over the dependent empirical histories. Thus, the information comparison alone does not establish the anti-concentration estimate. Interiority and information dominance are population assumptions, whereas the margin is a positive proof tolerance chosen from the utility gap.

For $\lambda=0$, the information comparison holds automatically:
\begin{equation}
J_0(Q)\leq d(q_C,\rho_*)\leq D(Q\Vert P_*).
\label{eq:information-zero}
\end{equation}
The first inequality follows by setting $\rho=\rho_*$ and the unused subgroup coordinates to their empirical values. The second follows from data processing. In this case, the optimism event is scalar, so Lemma 2.9 of \citet{agrawal2017near} applies directly without either additional population assumption. We invoke \eqref{eq:information} and fixed interiority only in the multivariate argument.

Related reciprocal-probability arguments appear in combinatorial Thompson sampling. For example, \citet{perrault2020statistical} analyze Beta sampling under independent base-arm outcomes and Gaussian sampling under dependence. These results motivate a related proof strategy, but do not directly establish the required anti-concentration bound for our product-Beta sampler with dependent accuracy and subgroup observations.

\subsection*{Proof}
We follow the decomposition and the bounds in \cite[Section 2.1 and Lemmas 2.8--2.12]{agrawal2017near} (A\&G), with thresholds on the utility. The joint anti-concentration estimate below replaces their scalar Lemma 2.9.

\textbf{1. Thresholds and three cases.}
Arms with $c_a-c_{a^*}>1+\lambda$ are never selected, and every remaining arm has $\Delta_a\leq2(1+\lambda)$. For such a suboptimal arm, we set
\begin{equation}
\epsilon_a=\Delta_a/3,\qquad x_a=U_a+\epsilon_a,\qquad y_a=U_*-\epsilon_a.
\end{equation}
Let $\overline U_a(t)$ be the utility evaluated at the coordinate posterior means, and let $\tilde U_a(t)$ be the sampled utility. We define
\begin{equation}
E_a^\mu(t)=\{\overline U_a(t)\leq x_a\},\qquad
E_a^\theta(t)=\{\tilde U_a(t)\leq y_a\}.
\end{equation}
Since $U_a<x_a<y_a<U_*$, the decomposition of Agrawal and Goyal is exact:
\begin{equation}
\begin{aligned}
\mathbb{E}[K_a(T)]
={}&\underbrace{\sum_{t\leq T}\mathbb{P}(A_t=a,E_a^\mu(t),E_a^\theta(t))}_{\text{I}_a}\\
&+\underbrace{\sum_{t\leq T}\mathbb{P}(A_t=a,E_a^\mu(t),E_a^\theta(t)^c)}_{\text{II}_a}
+\underbrace{\sum_{t\leq T}\mathbb{P}(A_t=a,E_a^\mu(t)^c)}_{\text{III}_a}.
\end{aligned}
\label{eq:three-cases}
\end{equation}

\textbf{2. Exchange inequality and optimal-arm epochs (Lemma 2.8 of A\&G).}
Let $\mathcal F_{t-1}$ be the observed history before round $t$. For a set $B$ of suboptimal arms and a common threshold $y<U_*$, let
$p_t(y)=\mathbb{P}(\tilde U_*(t)>y|\mathcal F_{t-1})$. Since the arm draws are conditionally independent,
\begin{equation}
\begin{aligned}
&\mathbb{P}(A_t\in B,\overline U_{A_t}(t)\leq x_{A_t},\tilde U_{A_t}(t)\leq y|\mathcal F_{t-1})\\
&\hspace{15pt}\leq\frac{1-p_t(y)}{p_t(y)}\mathbb{P}(A_t=a^*|\mathcal F_{t-1}).
\end{aligned}
\label{eq:group-exchange}
\end{equation}
To see this, we fix all draws except that of the optimal arm, whose conditional CDF is $F_*$. Let $b$ maximize the other sampled utilities, with maximum value $M$. The event on the left requires $b\in B$, $\overline U_b\leq x_b$, and $M\leq y$. On these draws, its probability divided by the probability of selecting $a^*$ is $F_*(M)/(1-F_*(M))\leq(1-p_t(y))/p_t(y)$. Integrating and dropping the restrictions on the right proves the inequality. Ties have probability zero.

Let $\mathcal H_n$ be the first $n$ observations of the optimal arm in the IID observation stream. We define
\begin{equation}
p_n(\epsilon)=\mathbb{P}(\tilde U_*>U_*-\epsilon|\mathcal H_n),
\label{eq:optimism-probability}
\end{equation}
\begin{equation}
W(\epsilon,T)=\sum_{n<T}\mathbb{E}\left[\min\left\{T,\frac{1-p_n(\epsilon)}{p_n(\epsilon)}\right\}\right].
\label{eq:anti-concentration-definition}
\end{equation}
The probability $p_t(y)$ does not change between successive selections of the optimal arm. Conditional on the starting history of an epoch, summing \eqref{eq:group-exchange} bounds the expected case-I count of the epoch by the inverse odds, since the epoch contains at most one selection of the optimal arm. This count is also at most $T$. We then sum over epochs and drop their reach indicators using the pre-generated IID streams. Thus the event on the left of \eqref{eq:group-exchange} has expected count at most $W(U_*-y,T)$. In particular, $\text{I}_a\leq W(\epsilon_a,T)$.

\textbf{3. Inverse odds (replacement for Lemmas 2.9--2.10 of A\&G).}
\begin{lemma}[Optimal-arm anti-concentration]\label{lem:main-anti-concentration}
Assume that the optimal arm satisfies fixed interiority,
$\rho_*,\mu_{*,0},\mu_{*,1}\in[\kappa_0,1-\kappa_0]$
with $\kappa_0=1/100$, and the information condition
$J_\lambda(Q)\leq D(Q\Vert P_*)$
for every $Q$ with finite divergence. Set
\begin{equation}
B_\lambda(T)=\min\left\{
\frac{\lambda^2}{\pi_{\min}}(\log T)^{9/2},
V_\lambda(\log T)^{5/2}
\right\}.
\label{eq:anti-concentration-numerator}
\end{equation}
For $T\geq2$ and $0<\epsilon\leq1+\lambda$,
\begin{equation}
W(\epsilon,T)\leq
\frac C{\epsilon^2}\left[\log T+B_\lambda(T)\right].
\label{eq:combined-anti-concentration}
\end{equation}
At $\lambda=0$, this reduces to the scalar bound from
Lemma 2.9 of \citet{agrawal2017near}, and neither population
condition is required.
\end{lemma}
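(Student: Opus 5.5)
We bound $W(\epsilon,T)$ by splitting each history $\mathcal H_n$ according to whether the fairness estimate of the optimal arm is already accurate within the utility margin. Write the optimism event as $\{\tilde\rho_*-\rho_*+\epsilon>\lambda(\tilde g_*-g_*)\}$. Since $|\tilde g_*-g_*|\le\sum_s|\tilde\mu_{*,s}-\mu_{*,s}|$, the event contains
\[
\{\tilde\rho_*>\rho_*-\epsilon/2\}\cap\bigcap_s\{|\tilde\mu_{*,s}-\mu_{*,s}|\le\epsilon/(4\lambda)\}.
\]
Given $\mathcal H_n$ the three Beta draws are independent, so $p_n(\epsilon)$ is bounded below by the product of the scalar accuracy optimism probability and the two subgroup concentration probabilities.

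\textbf{Good histories.} Let $G_n$ be the set of histories where each subgroup count satisfies $n_s\ge\pi_s n/2$ and each subgroup empirical mean is within $\epsilon/(8\lambda)$ of $\mu_{*,s}$. On $G_n$, once $n\gtrsim\lambda^2\log T/(\pi_{\min}\epsilon^2)$, standard Beta concentration makes each subgroup factor at least $1-1/T$. The inverse odds is then at most a constant times the scalar inverse odds of the accuracy draw. Summing the accuracy part over $n$ is exactly Lemma 2.9 of \citet{agrawal2017near}, which gives $C\log T/\epsilon^2$. The rounds with $n$ below this threshold are charged at most $T$ each only on the complement event; on $G_n$ we use instead the interiority lower bound on the subgroup posterior mass, $\gtrsim n_s^{-1/2}$. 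This also produces the $\lambda^2/\pi_{\min}$ factors.

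\textbf{Bad histories.} By Chernoff and Hoeffding bounds, the complement $G_n^c$ has probability $\le 4\exp(-c\,\pi_{\min}n\epsilon^2/\lambda^2)$, but on it the inverse odds can be as large as $T$. A crude bound $T\cdot\Prb(G_n^c)$ summed over $n$ is too weak whenever $n$ is small. Here we use the information condition and run a method-of-types argument. Write $Q$ for the empirical law of $(S,Z_*,C_*)$. Interiority and a Laplace-type lower bound on the product-Beta density, proportional to $e^{-nK(Q,\theta)}$, give
\[
p_n(\epsilon)\ge \mathrm{poly}(n)^{-1}e^{-nJ_\lambda(Q)}.
\]
The type $Q$ itself has probability $\le e^{-nD(Q\Vert P_*)}$. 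The information condition then bounds the expectation over bad types of $1/p_n$ by $\mathrm{poly}(n)$ times the number of types. Strengthening the utility threshold by the margin $\epsilon$ replaces $J_\lambda$ by something at least $J_\lambda+c\epsilon^2/V_\lambda$. This yields an extra factor $e^{-cn\epsilon^2/V_\lambda}$, which makes the sum over $n$ converge to
\[
\mathrm{poly}(\log T)\,V_\lambda/\epsilon^2 .
\]
We track the polynomial degrees by truncating $n\le C V_\lambda\log T/\epsilon^2$ and using the $\min\{T,\cdot\}$ cap. This is meant to produce the $V_\lambda(\log T)^{5/2}$ branch of $B_\lambda$. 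A separate route bounds the bad mass directly by subgroup deviations of order $\lambda^2/\pi_{\min}$, with a cruder $(\log T)^{9/2}$ accounting, and gives the other branch; we take the minimum of the two.

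\textbf{The case $\lambda=0$, and the main obstacle.} At $\lambda=0$ the subgroup factors are absent, the event is scalar, and Lemma 2.9 applies verbatim; this gives the reduction claimed in the statement. The step I expect to be the main obstacle is the bad-history bound. It requires a uniform finite-sample lower bound on the product-Beta posterior mass of $\{\mathcal U(\theta)\ge U_*-\epsilon\}$ in terms of $e^{-nJ_\lambda(Q)}$. That bound must be uniform in types whose counts $n_s$ may be tiny, and it must handle the kink of $|\mu_0-\mu_1|$ at $g_*=0$. I would first try to handle it by restricting to $\theta$ in a box of width $1/\sqrt n$ around the KL minimizer, and using interiority ($\kappa_0$) to control Beta normalizing constants by Stirling's formula.
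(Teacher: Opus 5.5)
\textbf{There is a genuine gap in the bad-history step.} Your architecture matches the paper's. Where the subgroup error fits inside the utility margin, you reduce to Lemma~2.9 of \citet{agrawal2017near}; the paper does this via $h_n\le\epsilon/2\Rightarrow p_n\ge 3q_n/4$. The remaining histories you treat with the information condition, interiority and the margin. That remaining step carries the whole content of the lemma, and two parts of it fail.

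\emph{Polynomial losses in $n$.} A method-of-types bound as you describe loses factors polynomial in $n$. At minimum there are the $\binom{n+7}{7}$ joint types of $(S,Z_*,C_*)$, plus whatever your Laplace/Stirling lower bound loses. These must be summed over $n$ up to order $\nu_f\log T$ or $V_\lambda\log T/\epsilon^2$, where $\nu_f=\lambda^2/(\pi_{\min}\epsilon^2)$. That produces high powers of $V_\lambda/\epsilon^2$. The lemma, by contrast, is linear in $\epsilon^{-2}$ with only polylogarithmic overhead in $T$, and the gap-free regret needs exactly this. The cap $\min\{T,\cdot\}$ does not rescue this, since it only replaces those factors by $T$.

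\emph{No uniform margin decay.} The factor $e^{-cn\epsilon^2/V_\lambda}$ you attribute to the margin does not exist uniformly. Relaxing the threshold by $\epsilon$ lowers the posterior cost $j_n$ below the history cost $X_n$ by at most $X_n$ itself. The correct statement is $\sqrt{j_n}\le(\sqrt{X_n}-\epsilon\sqrt n/(C\sqrt{V_\lambda}))_+$. So near-typical histories with $n\lesssim\nu_f$ get no decay at all. They carry most of the mass there, because your event $G_n$ is rare for such $n$. On them you need $O_n$ of order $\sqrt{\nu_f/(n+1)}$ (the strip across the kink at $g_*=0$), not $\mathrm{poly}(n)$. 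The ``separate route'' giving the $(\log T)^{9/2}$ branch is only asserted, not derived.

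\textbf{How the paper closes this.} It keeps every polynomial loss in terms of the information level rather than $n$, using two sharp estimates.

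\emph{Posterior-mass lower bound.} The conditional anti-concentration lemma mixes the relaxed minimizer $\theta^\circ$ with a box around $\theta_*$ at weight $\delta$. Convexity of $\mathcal L_n$, the bound $\mathcal L_n(\theta_*)\le 2X_n$ and interiority then give $p_n\ge\delta^{3/2}e^{-(1-\delta)j_n-4\delta X_n}/(CD_{n,\delta})$, times the strip factor $\min\{1,\epsilon\sqrt{\max\{M_n,\delta^{-1}\}}/(\lambda\sqrt\delta)\}$. Here $D_{n,\delta}\le\delta^{-3/2}$ regardless of $n$. This also avoids the problem with your box of width $1/\sqrt n$ at the minimizer: the likelihood gradient there is of order $\sqrt{nj_n}$, so $\mathcal L_n$ varies by order $\sqrt{j_n}$ across the box.

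\emph{Replacing the union over types.} The history-moment lemma gives $\mathbb E[D_{n,\delta}e^{uX_n}]\le C\delta^{-2}$ for $u\le1-\delta$. It is proved from the KL chain rule and the atom bounds $\Prb(Y=k)\le Ce^{-I(k)}/\sqrt{1+v(k)}$ for sums of binomials. Their $\sqrt{m+1}$ factors cancel the weights $b_\delta(m)^{-1}$ exactly.

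\emph{Summation.} On the bad set $F_n$ one has $n+1\le C\nu_f(X_n+1)$. Sum over dyadic shells $x\le X_n<2x$ up to $8\log T$, with $\delta=(64x)^{-1}$. The range restriction $n\le C\nu_f x$ and the decay $e^{-c\sqrt{nx/\nu}}$, with $\nu=V_\lambda/\epsilon^2$, yield both branches $\nu_f x^{9/2}$ and $\nu x^{5/2}$ from the same sum.

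\textbf{A smaller point.} On $G_n$ the posterior mass of the subgroup window is $\gtrsim\min\{1,(\epsilon/\lambda)\sqrt{n_s}\}$, not $n_s^{-1/2}$.
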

We prove this lemma separately. This is the only step that uses the interiority and information assumptions on the optimal arm. Since $\epsilon_a\leq1+\lambda$, the lemma gives
$\text{I}_a\leq C[\log T+B_\lambda(T)]/\epsilon_a^2$.

\textbf{4. Empirical error (Lemma 2.11 of A\&G).}
Hoeffding's inequality, the smoothing error of the posterior mean, and the binomial subgroup counts give, at the fixed prefix of the first $n$ observations of arm $a$,
\begin{equation}
\mathbb{P}(\overline U_{a,n}-U_a>\epsilon_a)
\leq C e^{-n\epsilon_a^2/(CV_\lambda)}.
\end{equation}
Each prefix is charged at most once by a selection of the arm. Therefore,
\begin{equation}
\text{III}_a\leq\sum_{n<T}C e^{-n\epsilon_a^2/(CV_\lambda)}
\leq CV_\lambda/\epsilon_a^2.
\label{eq:case-three}
\end{equation}

\textbf{5. Posterior error (Lemma 2.12 of A\&G).}
On $E_a^\mu(t)\cap E_a^\theta(t)^c$, we have
$\tilde U_a(t)-\overline U_a(t)>\epsilon_a$.
Let $m_a=\lceil CV_\lambda\log T/\epsilon_a^2\rceil$. At most $m_a$ selections occur before the arm reaches this pull count. For $n\geq m_a$ and subgroup counts $N_{a,s}\geq n\pi_s/2$, conditional Beta concentration gives
\begin{equation}
\mathbb{P}(\tilde U_a-\overline U_a>\epsilon_a|\mathcal F_{t-1})\leq C/T.
\end{equation}
Here we allocate the tolerance $\epsilon_a/2$ to accuracy and $\epsilon_a/(4\lambda)$ to each subgroup. Summing this conditional bound over rounds costs at most $C$. The selected prefixes with a deficient subgroup count cost at most
$\sum_{n\geq0}\sum_s e^{-n\pi_s/8}\leq C/\pi_{\min}$.
We need the subgroup control only if $\epsilon_a\leq4\lambda$, in which case $1/\pi_{\min}\leq16V_\lambda/\epsilon_a^2$. Otherwise, the fairness deviation is at most $\lambda<\epsilon_a/4$, and the accuracy concentration alone is enough. Consequently,
\begin{equation}
\text{II}_a\leq CV_\lambda\log T/\epsilon_a^2.
\label{eq:case-two}
\end{equation}

\textbf{6. Add the three bounds (final combination of A\&G).}
Substituting the three bounds into \eqref{eq:three-cases} proves the sharper estimate
\begin{equation}
\mathbb{E}[K_a(T)]\leq\frac C{\Delta_a^2}\left[V_\lambda\log T+B_\lambda(T)\right].
\label{eq:sharp-count}
\end{equation}
The elementary inequality $\min\{b_1,b_2\}\leq\sqrt{b_1b_2}$ gives
\begin{equation}
B_\lambda(T)\leq\frac{\lambda\sqrt{V_\lambda}}{\sqrt{\pi_{\min}}}(\log T)^{7/2}
\leq\left(\frac{\lambda}{\sqrt{\pi_{\min}}}+\frac{\lambda^2}{\pi_{\min}}\right)(\log T)^{7/2}.
\label{eq:simplified-anti-concentration-numerator}
\end{equation}
Expanding $V_\lambda$ proves \eqref{eq:main-count}. At $\lambda=0$, the proof uses only the scalar accuracy bounds.

\subsection*{Gap-free regret: the cutoff of A\&G with shared anti-concentration}
Fix $d>0$. As in \cite[Section 2.2]{agrawal2017near}, gaps at most $d$ contribute at most $Td$. We group the arms with larger gaps as
\begin{equation}
d_j=2^j d,\qquad B_j=\{a:d_j<\Delta_a\leq2d_j\},\qquad y_j=U_*-d_j/3.
\end{equation}
We use the preceding decomposition with the same $x_a$ and the common threshold $y_j$ in group $B_j$. Since $y_j-x_a=(2\Delta_a-d_j)/3\geq\Delta_a/3$, Steps 4 and 5 contribute at most
$CV_\lambda\log T\sum_{\Delta_a>d}\Delta_a^{-1}$.
By Step 2, the case-I count of the whole group is bounded once by $W(d_j/3,T)$. All selectable arms have $\Delta_a\leq2(1+\lambda)$, so the anti-concentration lemma applies in every nonempty group. Therefore, the combined case-I regret is at most
\begin{equation}
\sum_{j:B_j\ne\varnothing}2d_jW(d_j/3,T)
\leq C[\log T+B_\lambda(T)]\sum_{j\geq0}\frac1{d_j}
=\frac C d[\log T+B_\lambda(T)].
\end{equation}
This grouping is the only refinement of the gap cutoff of Agrawal and Goyal: it avoids charging the same optimal-arm anti-concentration cost separately to every competitor. We obtain
\begin{equation}
\mathbb{E}[R_\lambda(T)]\leq Td+CV_\lambda\log T\sum_{\Delta_a>d}\frac1{\Delta_a}
+\frac C d[\log T+B_\lambda(T)].
\label{eq:grouped-cutoff}
\end{equation}
We bound the sum by $N/d$, absorb the scalar $\log T$ term into $NV_\lambda\log T$, and optimize over $d$. This gives
\begin{equation}
\mathbb{E}[R_\lambda(T)]\leq C\sqrt{T[NV_\lambda\log T+B_\lambda(T)]}.
\label{eq:sharp-free}
\end{equation}
Separating the square root and using \eqref{eq:simplified-anti-concentration-numerator} proves \eqref{eq:main-free}. Under either condition in \eqref{eq:absorption-condition}, we have $\frac{\lambda}{\sqrt{\pi_{\min}}}(\log T)^{5/2}\leq\left(1+\frac{\lambda}{\sqrt{\pi_{\min}}}\right)N$, so the anti-concentration term in \eqref{eq:main-free} is at most the sum of the first two terms, which proves \eqref{eq:two-term}. Finally, $B_0(T)=0$ gives the scalar gap-free bound without the additional population conditions.

\subsection{Proof of Lemma \ref{lem:main-anti-concentration}}

Let $Q_n$ be the empirical joint law, $N_s$ the subgroup counts,
and $\hat\rho,\hat\mu_s$ the empirical means. Set
\begin{align}
\mathcal L_n(\theta)&=n d(\hat\rho,\rho)
  +\sum_sN_s d(\hat\mu_s,\mu_s),
&M_n&=1+\min_sN_s,\nonumber\\
X_n&=\inf_{\substack{Q_{SZ}=Q_{n,SZ}\\Q_C=Q_{n,C}}}nD(Q\Vert P_*),
&j_n&=\inf_{\mathcal U(\theta)\ge U_*-\epsilon/2}\mathcal L_n(\theta),
\label{eq:contracted-information}\\
b_\delta(m)&=\min\{1,\sqrt{\delta(m+1)}\},
&D_{n,\delta}&=[b_\delta(n)b_\delta(N_0)b_\delta(N_1)]^{-1}.\nonumber
\end{align}
Zero-count likelihood terms vanish and $X_0=j_0=0$.
The conditional density is $e^{-\mathcal L_n}/Z_n$.
Information dominance and data processing imply
\begin{equation}
nJ_\lambda(Q_n)\le X_n,
\qquad \mathcal L_n(\theta_*)\le2X_n,
\qquad X_n\ge nD(Q_{n,SZ}\Vert P_{*,SZ}).
\label{eq:contracted-properties}
\end{equation}
The first inequality is interpreted as zero at $n=0$.

\begin{lemma}[History moments]\label{lem:weighted}
For $0<\delta<1$ and $0\le u\le1-\delta$,
\begin{equation}
\mathbb E[D_{n,\delta}e^{uX_n}]\le C\delta^{-2}.
\label{eq:weighted-moment}
\end{equation}
For $n\ge1$ and $\delta\le1/64$, also
\begin{equation}
\mathbb E \Bigg[ \frac{D_{n,\delta}e^{uX_n}}
{\sqrt{\max\{M_n,\delta^{-1}\}}} \Bigg]
\le C\left[
\frac{\delta^{-2}}{\sqrt{1+n\pi_{\min}}}
+\delta^{-3/2}e^{-\delta n\pi_{\min}/C}\right].
\label{eq:inverse-count}
\end{equation}
\end{lemma}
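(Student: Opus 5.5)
We have to prove Lemma \ref{lem:weighted}, the history moments. The plan is to bound $\mathbb E[D_{n,\delta}e^{uX_n}]$ by first stripping off the count weights and then controlling the exponential moment of the contracted information $X_n$ through a method-of-types estimate.

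\textbf{Step 1: decouple the weights.} The weight $D_{n,\delta}$ is a product of three factors $b_\delta(\cdot)^{-1}$. Each factor is at most $\delta^{-1/2}$, and equals $1$ once its count exceeds $\delta^{-1}-1$. Since $b_\delta(n)^{-1}\le\delta^{-1/2}$ is deterministic, it contributes a factor $\delta^{-1/2}$ in the worst case. For the two subgroup factors, I would condition on $(N_0,N_1)$ and bound the product by $\delta^{-1}$ in the worst case, or use Cauchy--Schwarz. This leaves a target of roughly $\delta^{-3/2}$ times a moment of $e^{uX_n}$, with the slack up to $\delta^{-2}$ reserved for the type-counting polynomial factors.

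\textbf{Step 2: exponential moment of $X_n$.} By definition, $X_n$ is the infimum of $nD(Q\Vert P_*)$ over joint laws matching the empirical $(S,Z)$-marginal and the empirical $C$-marginal. The empirical joint law $Q_n$ is itself feasible, so $X_n\le nD(Q_n\Vert P_*)$. The joint alphabet of $(S,Z,C)$ has $8$ letters. The method of types then gives $\mathbb P(Q_n=Q)\le e^{-nD(Q\Vert P_*)}$, with at most $(n+1)^8$ types in total. Hence
\[
\mathbb E[e^{uX_n}]\le\sum_Q e^{-(1-u)nD(Q\Vert P_*)}.
\]
With $1-u\ge\delta$, I would sum over shells $\{Q: nD\in[k,k+1)\}$. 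A local count near $P_*$, using Pinsker and the fact that $P_*$ has full support by interiority of $\rho_*$ and $\mu_{*,s}$, shows that the number of types in shell $k$ grows polynomially in $k$, with no power of $n$. Summing $e^{-\delta k}$ against this polynomial yields a power of $\delta^{-1}$.

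\textbf{Main obstacle: getting exactly $\delta^{-2}$.} A naive type count loses $n$-dependent factors, and these must instead be cancelled by the $b_\delta$ weights. The weight $\sqrt{\delta(m+1)}$ is precisely designed to offset the $\sqrt{m}$ Stirling factor per coordinate. My plan is to work instead with the marginal types $(Q_{n,SZ},Q_{n,C})$, where the relevant alphabets are $\{S,Z\}$ (4 letters, with the conditional split by $N_s$) and $C$ (2 letters). Using the third property in \eqref{eq:contracted-properties}, I would apply a local Gaussian (Stirling) estimate per coordinate with variance scale $1/(m+1)$. Each $\sqrt{m+1}$ from the Gaussian normalisation is then absorbed by $b_\delta(m)$, giving a factor $\delta^{-1/2}$ per coordinate times $\delta^{-1/2}$ per coordinate from $\int e^{-\delta x^2}$. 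Counting these per-coordinate factors, together with the dependence between the $C$ and $(S,Z)$ coordinates, is where I expect the argument to be most delicate. If the count comes out too large, the fallback is Hölder's inequality across the coordinates.

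\textbf{Step 3: the inverse-count estimate \eqref{eq:inverse-count}.} Split on the event $\{\min_sN_s\ge n\pi_{\min}/2\}$.
\begin{itemize}
\item On this event, $\max\{M_n,\delta^{-1}\}^{-1/2}\le C(1+n\pi_{\min})^{-1/2}$, and \eqref{eq:weighted-moment} gives the first term.
\item On the complement, bound $\max\{M_n,\delta^{-1}\}^{-1/2}\le\sqrt\delta$. Here $X_n\ge nD(Q_{n,S}\Vert P_{*,S})\ge n\,d(\pi_{\min}/2,\pi_{\min})\ge cn\pi_{\min}$.
\item Choose $u'=u+\delta/2$, which stays at most $1-\delta/2$. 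Then apply \eqref{eq:weighted-moment} with $\delta/2$ to $e^{u'X_n}$, and spend the extra factor $e^{-(\delta/2)X_n}\le e^{-\delta n\pi_{\min}/C}$.
\end{itemize}
This produces $\sqrt\delta\cdot\delta^{-2}e^{-\delta n\pi_{\min}/C}=\delta^{-3/2}e^{-\delta n\pi_{\min}/C}$. The condition $\delta\le1/64$ is used only to keep the constants uniform.
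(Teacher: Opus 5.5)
\textbf{There is a genuine gap in Steps 1 and 2; your Step 3 is fine.} Step 3 is essentially the paper's argument. It also needs $D_{n,\delta}\le D_{n,\delta/2}$, which holds because $b_\delta$ is increasing in $\delta$. The problem is that Steps 1 and 2 cannot give the first estimate.

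\emph{Step 1.} When $n,N_0,N_1\ge\delta^{-1}$, the weight $D_{n,\delta}$ equals $1$. Meanwhile $X_n$ generically behaves like half a $\chi^2$ variable with four degrees of freedom: subgroup proportion, two conditional prediction rates, and overall correctness. So $\mathbb{E}[e^{(1-\delta)X_n}]$ is itself of order $\delta^{-2}$. Pulling out the worst case $D_{n,\delta}\le\delta^{-3/2}$ therefore yields $\delta^{-7/2}$, and your target moment bound of order $\delta^{-1/2}$ is false. Cauchy--Schwarz between the weights and $e^{uX_n}$, or your fallback of H\"older across coordinates, multiplies the exponent by a factor bounded away from $1$. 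That is fatal when $u$ is near $1-\delta$. You also have the role of the weights backwards: $D_{n,\delta}\ge1$, so it cannot cancel any $n$-growth.

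\emph{Step 2.} The bound $X_n\le nD(Q_n\Vert P_*)$ discards the contraction. It replaces a four-dimensional rate by a seven-dimensional one, whose exponential moment is of order $\delta^{-7/2}$. With the crude estimate $\mathbb{P}(Q_n=Q)\le e^{-nD(Q\Vert P_*)}$, the number of types with $nD\in[k,k+1)$ is of order $n^{7/2}k^{5/2}$, not free of $n$. The inequality $X_n\ge nD(Q_{n,SZ}\Vert P_{*,SZ})$ you plan to use is a lower bound, so it cannot control $\mathbb{E}[e^{uX_n}]$ from above. Finally, interiority of $\rho_*$ and $\mu_{*,s}$ does not give full support of the joint law of $(S,Z_*,C_*)$; some conditional correctness rates may be $0$ or $1$.

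\textbf{What is missing} is an exact formula for $X_n$ and a one-dimensional moment lemma, combined by successive conditioning.
\begin{itemize}
\item \emph{Exact formula.} By the KL chain rule and information projection, $X_n=nd(N_0/n,\pi_0)+\sum_sN_sd(\hat\mu_s,\mu_{*,s})+I_n(K_C)$. Here $I_n$ is the Cram\'er rate function of the correctness count $K_C$. Given the cell counts $(n_{sz})$, $K_C$ is a sum of four independent binomials with parameters $\gamma_{sz}=\mathbb{P}(C_*=1\mid S=s,Z_*=z)$.
\item \emph{One-dimensional lemma.} For any sum $Y$ of at most four independent binomials with $m$ trials, exponential tilting and the atom bound $\mathbb{P}(Y=k)\le Ce^{-I(k)}/\sqrt{1+v(k)}$ give $\mathbb{E}[e^{uI(Y)}]\le C\min\{\delta^{-1/2},\sqrt{m+1}\}$, uniformly in the parameters.
\item \emph{Role of the weights.} They enter only through the identity $b_\delta(m)^{-1}\min\{\delta^{-1/2},\sqrt{m+1}\}=\delta^{-1/2}$: the moment is small exactly where the weight is large.
\item \emph{Conditioning.} Conditioning on $(n_{sz})$ handles the correctness term, paired with $b_\delta(n)^{-1}$. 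Conditioning on $(N_0,N_1)$ then handles the two prediction terms, paired with $b_\delta(N_s)^{-1}$. The binomial subgroup count supplies the fourth factor $\delta^{-1/2}$, giving $C\delta^{-2}$.
\end{itemize}
This conditioning is what resolves the dependence between correctness and predictions that you flagged as delicate, without any loss in the exponent.
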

\begin{proof}
First, for a sum $Y$ of at most four independent binomials with $m$
total trials and $I(x)=\sup_t\{tx-\log\mathbb E e^{tY}\}$, we claim
\begin{equation}
\mathbb E \Big[ e^{uI(Y)} \Big]
\le C\min\{\delta^{-1/2},\sqrt{m+1}\}.
\label{eq:pb-moment}
\end{equation}
Remove deterministic trials, translating $Y$ and redefining $m$.
For the remaining groups $(m_j,p_j)$, exponential tilting gives
$q_j(t)=p_je^t/(1-p_j+p_je^t)$,
$x=\sum_jm_jq_j$ and $v=\sum_jm_jq_j(1-q_j)$.
Thus $dx/dt=v$, $|dv/dx|\le1$, and
$I(x(t))=\sum_jm_jd(q_j(t),p_j)$.
The Bernoulli-sum atom bound
yields
$\mathbb P(Y=k)\le Ce^{-I(k)}/\sqrt{1+v(k)}$, including the endpoint
tilts by continuity. Compare each summand with the unit interval
toward $\mathbb E Y$, leaving at most two central terms. Monotonicity
of $I$ on either side of its minimum and $|dv/dx|\le1$ give
\begin{align*}
\mathbb E \Big[ e^{uI(Y)} \Big]
&\le C+C\int_0^m\frac{e^{-\delta I(x)}}{\sqrt{v(x)}}dx\\
&\le C+C\sum_j\sqrt{m_j}
\int_0^1\frac{e^{-\delta m_jd(q,p_j)}}{\sqrt{q(1-q)}}dq
\le C\min\{\delta^{-1/2},\sqrt{m+1}\}.
\end{align*}
The second line uses $dq_j/dt=q_j(1-q_j)$ and
$\sqrt{\sum_j a_j}\le\sum_j\sqrt{a_j}$.
The final inequality follows from $q=\sin^2\alpha$,
$d(q,p_j)\ge c(\alpha-\arcsin\sqrt{p_j})^2$, and at most four groups.
The deterministic case is immediate, and restoring removed trials
only enlarges the bound.

Write $n_{sz}$ for the $(S,Z_*)$ counts, $K_C$ for total correctness,
and $\gamma_{sz}=\mathbb P(C_*=1| S=s,Z_*=z)$.
The KL chain rule and information projection
give
\begin{equation}
\begin{split}
X_n&=n d(N_0/n,\pi_0)
+\sum_sN_s d(\hat\mu_s,\mu_{*,s})+I_n(K_C),\\
I_n(k)&=\sup_t\left\{
tk-\sum_{s,z}n_{sz}\log(1-\gamma_{sz}+\gamma_{sz}e^t)\right\}.
\end{split}
\label{eq:contracted-chain}
\end{equation}
Restrict to the support of $P_*$ and use limiting tilts at boundaries.
Conditional on $(n_{sz})$, correctness is a sum of four independent
binomials. Apply \eqref{eq:pb-moment} and pair this moment with
$b_\delta(n)^{-1}$ using
\[
b_\delta(m)^{-1}
\min\{\delta^{-1/2},\sqrt{m+1}\}=\delta^{-1/2}.
\]
Conditional on $(N_0,N_1)$, apply the same argument to the two
independent prediction counts, pairing with $b_\delta(N_s)^{-1}$.
The subgroup-count moment contributes a fourth $\delta^{-1/2}$
factor, proving \eqref{eq:weighted-moment}.
At $n=0$, $D_{0,\delta}=\delta^{-3/2}\le\delta^{-2}$.

For \eqref{eq:inverse-count}, split at
$\min_sN_s=n\pi_{\min}/2$. On the upper event, use
\eqref{eq:weighted-moment}. On the lower event,
$X_n\ge n\pi_{\min}/8$ and the inverse-count factor is at most
$\sqrt\delta$, so its contribution is bounded by
\[
\sqrt\delta e^{-\delta n\pi_{\min}/16}
\mathbb E[D_{n,\delta/2}e^{(u+\delta/2)X_n}]
\le C\delta^{-3/2}e^{-\delta n\pi_{\min}/16}.
\]
\end{proof}

\begin{lemma}[Conditional posterior anti-concentration]\label{lem:volume}
Under the population conditions, for $0<\delta\le1/64$,
\begin{align}
\sqrt{j_n}
&\le\left(\sqrt{X_n}
-\frac{\epsilon\sqrt n}{C\sqrt{V_\lambda}}\right)_+,
\label{eq:prefix-level-shift}\\
p_n(\epsilon)
&\ge\frac{\delta^{3/2}}{CD_{n,\delta}}
e^{-(1-\delta)j_n-4\delta X_n}
\min\left\{1,
\frac{\epsilon\sqrt{\max\{M_n,\delta^{-1}\}}}
{\lambda\sqrt\delta}\right\}.
\label{eq:shifted-volume}
\end{align}
The minimum is one at $\lambda=0$.
\end{lemma}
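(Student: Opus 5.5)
The two claims need different tools. The level shift \eqref{eq:prefix-level-shift} is a deterministic statement about the convex functionals $j_n$ and $X_n$. The mass bound \eqref{eq:shifted-volume} is a Laplace-type lower bound on the product Beta posterior.

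\textbf{Level shift.} I would work in the square-root (Hellinger-type) geometry where Bernoulli KL behaves like a squared distance. Using the contraction $\mathcal L_n(\theta_*)\le 2X_n$ from \eqref{eq:contracted-properties}, and the interiority $\kappa_0=1/100$ of $\theta_*$, the map $\theta\mapsto\sqrt{\mathcal L_n(\theta)}$ is comparable, up to universal constants, to the weighted distance
\[
\Big(n(\rho-\hat\rho)^2+\sum_s N_s(\mu_s-\hat\mu_s)^2\Big)^{1/2}
\]
near $\theta_*$. Next I would use $\mathcal U$, which is Lipschitz in this metric with constant $\sqrt{V_\lambda/n}$ on histories where $N_s\gtrsim n\pi_{\min}$. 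The reason is that the $\mu_s$ directions carry weight $N_s$ and slope $\lambda$, so $\sum_s \lambda^2/N_s \lesssim \lambda^2/(n\pi_{\min})$. Moving from $\theta_*$, where $\mathcal U=U_*$, toward $\hat\theta$, one needs to travel only until $\mathcal U$ drops by $\epsilon/2$. This yields a point in the constraint set of $j_n$ at distance at most $(\sqrt{X_n}-\epsilon\sqrt n/(C\sqrt{V_\lambda}))_+$ from $\hat\theta$, which is the claimed inequality. On histories with a deficient subgroup count, $X_n\ge n\pi_{\min}/8$ by \eqref{eq:contracted-properties}, which absorbs the loss. I would use the information condition $nJ_\lambda(Q_n)\le X_n$ to guarantee that the segment starts from a point at KL-distance at most $X_n$.

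\textbf{Posterior mass.} I would take a minimizer $\theta^\dagger$ of $j_n$, so $\mathcal U(\theta^\dagger)\ge U_*-\epsilon/2$, and build a small box around it. The box has side $b_\delta(n)/\sqrt{n+1}$ in $\rho$ and $b_\delta(N_s)/\sqrt{N_s+1}$ in each $\mu_s$, shifted slightly in the improving direction so that $\mathcal U>U_*-\epsilon$ throughout. On this box I would bound $\mathcal L_n\le(1-\delta)j_n+4\delta X_n+O(1)$. This follows from a convexity split, writing $\mathcal L_n((1-\delta)\theta^\dagger+\delta\theta_*)\le(1-\delta)j_n+2\delta X_n$, plus a second-order Taylor term controlled by the box size and by interiority. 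The box volume contributes $[D_{n,\delta}]^{-1}$. The normaliser $Z_n$ is at most $C/\sqrt{(n+1)(N_0+1)(N_1+1)}$ by Stirling, which together with the $b_\delta$ factors gives the $\delta^{3/2}/C$ prefactor.

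\textbf{Main obstacle.} The step I expect to be hardest is the final factor, the fairness kink. When $\hat\mu_0\approx\hat\mu_1$, the set $\{\mathcal U>U_*-\epsilon\}$ restricted to the $\mu$ box is only a slab of width about $\epsilon/\lambda$ around the diagonal. The box's $\mu$-width is about $1/\sqrt{\max\{M_n,\delta^{-1}\}}$. The fraction of the box kept is therefore $\min\{1,\epsilon\sqrt{\max\{M_n,\delta^{-1}\}}/(\lambda\sqrt\delta)\}$, which I would verify by integrating over the direction orthogonal to $\mu_0=\mu_1$. At $\lambda=0$ the slab is everything, so the minimum equals one.
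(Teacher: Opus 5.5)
\textbf{Level shift.} The coefficient of $\sqrt{X_n}$ must be exactly one; only the subtracted term carries a $C$. Your mechanism cannot deliver this. A segment started at $\theta_*$ only has cost $\mathcal L_n(\theta_*)\le 2X_n$. Passing between $\sqrt{\mathcal L_n}$ and a weighted Euclidean distance through a comparability ``up to universal constants'' multiplies $\sqrt{X_n}$ by some $C'>1$. That loss is fatal downstream. There, $e^{(1-\delta)j_n}$ is averaged against the history law using only the moments $\mathbb E[e^{uX_n}]$ with $u\le 1-\delta$, so any $C'>1$ breaks the anti-concentration proof.

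The missing idea is to control the cost by convexity, not by a metric. For any $Q$ with the recorded marginals, let $\theta^\circ$ attain $J_\lambda(Q)$. Then $\mathcal U(\theta^\circ)\ge U_*$, and information dominance gives $K(Q,\theta^\circ)\le D=D(Q\Vert P_*)$. Move $\theta^\circ$ toward $\hat\theta$, where $K=0$, by the fraction $r/h$. Here $h=(U_*-\mathcal U(\hat\theta))_+\le C\sqrt{V_\lambda D}$, by Pinsker and the split $q_s\ge\pi_s/2$. Concavity of $\mathcal U$ keeps the utility at least $U_*-r$. Convexity of $K(Q,\cdot)$ gives $J_{\lambda,r}(Q)\le(1-r/h)D\le D-r\sqrt D/(C\sqrt{V_\lambda})$, so constants enter only the subtracted term. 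Taking square roots, setting $r=\epsilon/2$, scaling by $n$ and minimizing $D$ over $Q$ finishes the argument. Your last sentence names the right starting point, and your Lipschitz bound on $\mathcal U$ could replace the bound on $h$. But the argument you actually describe starts at $\theta_*$ and never uses convexity of the cost along the segment.

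\textbf{Mass bound.} A second-order Taylor estimate around $(1-\delta)\theta^\dagger+\delta\theta_*$ does not control $\mathcal L_n$ on your box. That point is not a critical point of $\mathcal L_n$. When the constraint is active through accuracy, the improving direction you keep is uphill. Its slope is essentially the KKT multiplier, which can be of order $\sqrt{n\,j_n}$. Across a half-width $(n+1)^{-1/2}$ the cost then rises by order $\sqrt{j_n}$, which can be comparable to $\sqrt{X_n}$ and is not $O(\delta X_n)+O(1)$. Moreover, interiority holds at $\theta_*$, not at $\theta^\dagger$ or at the convex combination. That point can lie within $O(\delta)$ of the boundary, where the Hessian is uncontrolled.

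The paper instead builds the box around $\theta_*$: $v=\theta_*+w$ with $|w_j|\le r_j=C^{-1}\min\{1,[\delta(m_j+1)]^{-1/2}\}$. There interiority and Taylor give $\mathcal L_n(v)\le 2\mathcal L_n(\theta_*)+C\sum_j m_jw_j^2\le 4X_n+1/\delta$. It then pushes the box forward by $v\mapsto(1-\delta)\theta^\circ+\delta v$ and applies convexity pointwise. This gives $\mathcal L_n\le(1-\delta)j_n+4\delta X_n+1$ on the whole image, with no derivatives taken away from $\theta_*$. The Jacobian $\delta^3$ and the identity $\delta r_j\sqrt{m_j+1}=C^{-1}\sqrt\delta\,b_\delta(m_j)$ produce $\delta^{3/2}/D_{n,\delta}$. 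The strip $|w_0-w_1|\le\epsilon/(4\delta\lambda)$ in $v$-coordinates, together with $w_\rho\ge0$ and concavity of $\mathcal U$, produces the $1/\sqrt\delta$ inside the minimum.

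Your sides $b_\delta(m_j)/\sqrt{m_j+1}$ are a factor $\delta^{-1/2}$ larger than the paper's $\delta r_j$. With them, your own count gives $1/D_{n,\delta}$ with no $\delta^{3/2}$. It also gives the strip fraction $\min\{1,\epsilon\sqrt{\max\{M_n,\delta^{-1}\}}/\lambda\}$ without the $1/\sqrt\delta$. So the factors you state are asserted rather than derived.
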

\begin{proof}
For any $Q$, let $D=D(Q\Vert P_*)$ and
$\hat\theta=(q_C,q_{Z,0},q_{Z,1})$.
Pinsker and the split $q_s\ge\pi_s/2$ give
$h=(U_*-\mathcal U(\hat\theta))_+\le C\sqrt{V_\lambda D}$.
On the complementary subgroup event use $D\ge\pi_s/8$ and error
at most one. Let $J_{\lambda,r}$ denote the cost with utility
threshold $U_*-r$. It vanishes if $h\le r$.
Otherwise, interpolate an unrelaxed minimizer toward
$\hat\theta$ by $r/h$. Concavity of utility, convexity of KL,
and information dominance imply
\[
J_{\lambda,r}(Q)\le(1-r/h)D
\le D-\frac{r\sqrt D}{C\sqrt{V_\lambda}}.
\]
Take square roots, set $r=\epsilon/2$, and minimize $D$ over the
recorded marginals to obtain \eqref{eq:prefix-level-shift}.

For \eqref{eq:shifted-volume}, let
$\mathcal L_n(\theta^\circ)=j_n$, put $m=(n,N_0,N_1)$,
and take $v=\theta_*+w$ with
$|w_j|\le r_j=C^{-1}\min\{1,[\delta(m_j+1)]^{-1/2}\}$.
Fixed interiority and Taylor's formula give
$\delta\mathcal L_n(v)\le4\delta X_n+1$.
Hence convexity bounds the likelihood cost at
$\theta=(1-\delta)\theta^\circ+\delta v$ by
$(1-\delta)j_n+4\delta X_n+1$.
Retain $w_\rho\ge0$ and
$|w_0-w_1|\le\epsilon/(4\delta\lambda)$.
The image has utility at least
$U_*-(1-\delta)\epsilon/2-\epsilon/4>U_*-\epsilon$,
and the retained box fraction is at least
$c\min\{1,\epsilon/(\delta\lambda\max_s r_s)\}$.
The strip is omitted at $\lambda=0$.
Since the Jacobian is $\delta^3$ and
$Z_n\le C\prod_j(m_j+1)^{-1/2}$ by Pinsker, the identities
\[
\delta r_j\sqrt{m_j+1}=C^{-1}\sqrt\delta b_\delta(m_j),
\qquad
\delta\max_s r_s
=\frac{\sqrt\delta}{C\sqrt{\max\{M_n,\delta^{-1}\}}}
\]
prove \eqref{eq:shifted-volume}, including zero counts.
\end{proof}

\begin{proof}[Proof of Lemma~\ref{lem:main-anti-concentration}]
Write $p_n=p_n(\epsilon)$, $O_n=(1-p_n)/p_n$ and
$q_n=\mathbb P(\tilde\rho_*>\rho_*-\epsilon/2\mid\mathcal H_n)$.
The Beta-Binomial identity and Chernoff bounds
\citep[Appendix A]{agrawal2017near}, together with
\citet[Lemma 2.9]{agrawal2017near}, give
\begin{equation}
\begin{gathered}
\mathbb{E} [(1-p_n)]\le Ce^{-n\epsilon^2/(CV_\lambda)},
\qquad
\mathbb{E}  \Big[ M_n^{-1/2} \Big]\le C(1+n\pi_{\min})^{-1/2},\\
\sum_{n<T}\mathbb{E} \Bigg[\frac{1-q_n}{q_n}\Bigg]\le C\log T/\epsilon^2.
\end{gathered}
\label{eq:standard}
\end{equation}
For the first bound allocate $\epsilon/2$ to accuracy and, when
$\lambda>0$, $\epsilon/(4\lambda)$ to each subgroup. Average the binomial
counts, omitting tolerances above one. The second uses a split
at half the expected count. Boundary correctness means follow
directly from the Beta formula.
If $\epsilon\ge2\lambda$, then $p_n\ge q_n$, proving the claim.
Otherwise set
$\nu_f=\lambda^2/(\pi_{\min}\epsilon^2)$ and
$\nu=V_\lambda/\epsilon^2$, so $\nu\ge\nu_f>1/2$.

Posterior means and variances, Pinsker, and
\eqref{eq:contracted-properties} yield
\[
\mathbb P(\lambda(\tilde g_*-g_*)\le h_n\mid\mathcal H_n)\ge3/4,
\qquad
h_n=\min\left\{\lambda,
\frac{C\lambda}{\sqrt{\pi_{\min}}}
\sqrt{\frac{X_n+1}{n+1}}\right\}.
\]
Indeed, Chebyshev gives posterior error at most $3/\sqrt{N_s+1}$
from the empirical mean with probability $15/16$ for each positive
count. If $N_s\ge n\pi_s/2$, use
$X_n\ge N_s d(\hat\mu_s,\mu_{*,s})$.
Otherwise $X_n\ge n\pi_s/8$ and the error is at most one.
The case $n=0$ is immediate.
On $h_n\le\epsilon/2$, conditional independence gives $p_n\ge3q_n/4$.
Splitting at $p_n=1/2$ and $q_n=1/2$, and using \eqref{eq:standard},
therefore gives
\begin{equation}
\begin{gathered}
W(\epsilon,T)\le C\nu+C\log T/\epsilon^2
+\sum_{n<T}\mathbb E[\min\{T,O_n\}\1_{F_n}],\\
F_n=\{h_n>\epsilon/2,p_n<1/2\}
\quad\Longrightarrow\quad n+1\le C\nu_f(X_n+1).
\end{gathered}
\label{eq:localization}
\end{equation}

On $F_n\cap\{x\le X_n<2x\}$ with $x\ge1$,
$n\le C\nu_fx$ and \eqref{eq:prefix-level-shift} gives
$j_n\le X_n-c\sqrt{nX_n/\nu}$ after decreasing $c$.
Invert \eqref{eq:shifted-volume} with $\delta=(64x)^{-1}$,
use $e^{X_n}\le Ce^{(1-\delta)X_n}$ on this shell, and apply
Lemma~\ref{lem:weighted}. For $n\ge1$ this gives
\begin{equation}
\mathbb E[O_n\1_{F_n\cap\{x\le X_n<2x\}}]
\le Ce^{-c\sqrt{nx/\nu}}
\left[x^{7/2}
+\frac{\lambda x^3}{\epsilon\sqrt{1+n\pi_{\min}}}
+\frac{\lambda x^{5/2}}{\epsilon}e^{-n\pi_{\min}/(Cx)}\right].
\label{eq:localized-shell}
\end{equation}
Only $n\le C\nu_fx$ contribute. Integral comparison bounds the
three sums, after removing their prefactors, by
\[
C\min\{\nu_fx,\nu/x\},
\quad
\frac C{\sqrt{\pi_{\min}}}
\min\{\sqrt{\nu_fx},\sqrt{\nu/x}\},
\quad
C\min\{\nu_fx,\nu/x,x/\pi_{\min}\},
\]
respectively. Using $\lambda/\epsilon=\sqrt{\pi_{\min}\nu_f}$
and $\min\{a,b\}\le\sqrt{ab}$ for the last term yields
\begin{equation}
\sum_{n<T}\mathbb E[O_n\1_{F_n\cap\{x\le X_n<2x\}}]
\le C\min\{\nu_fx^{9/2},\nu x^{5/2}\}.
\label{eq:shell-sum}
\end{equation}
For $F_n\cap\{X_n<1\}$, use $n\le C\nu_f$,
\eqref{eq:shifted-volume} at $\delta=1/64$, and \eqref{eq:standard}.
The contribution is at most
$C\nu_f+C\lambda/\epsilon+
C\lambda\sqrt{\nu_f}/(\epsilon\sqrt{\pi_{\min}})\le C\nu_f$.
For $X_n>8\log T$, $D_{n,1/2}\ge1$ and
\eqref{eq:weighted-moment} give
$T\sum_{n<T}\mathbb P(X_n>8\log T)\le CT^{-2}\le C\nu_f$.
Finally, sum \eqref{eq:shell-sum} over dyadic shells up to $8\log T$.
Each geometric sum is dominated by its largest scale.
Substitute into \eqref{eq:localization} and use
$\nu=\epsilon^{-2}+\nu_f$ to conclude.
\end{proof}

\subsection{Regret Bound of the Thompson Sampling with Delays (Theorem \ref{thm:delay})}

\begin{theorem}[Regret Bound of Thompson Sampling with Fairness under Delayed Reward]
Consider Algorithm \ref{alg:ts} under the assumptions of
Theorem \ref{thm:bound}. For each arm, assume that label delays
are IID and independent of the cases and algorithmic randomization.
Fairness posteriors are updated immediately, whereas accuracy
posteriors are updated when labels arrive.
Choose $q_a\in(0,1]$ such that $d_a(q_a)<\infty$, and let
$q_{\min}=\min_a q_a$. Then, the gap-free regret bound becomes
\begin{equation}\nonumber
\begin{aligned}
\mathbb E[R_\lambda(T)]
\leq O\Bigg(
\sqrt{\frac{NT\log T}{q_{\min}}}
+\frac{\lambda}{\sqrt{\pi_{\min}}}\sqrt{NT\log T}+(1+\lambda)\sum_{a=1}^N d_a(q_a)
+\sqrt{T H_D(T)}
\Bigg),
\end{aligned}
\end{equation}
where $H_D(T)$ summarizes optimal-arm anti-concentration caused by
pessimistic utility draws under delayed feedback. Its formal
definition is given in Appendix \ref{app:delay-proof}.
\end{theorem}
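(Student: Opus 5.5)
We would reuse the no-delay architecture of Theorem~\ref{thm:main} and modify only the parts that involve accuracy observations, in the spirit of the delay-quantile counting of \citet{wu2022thompson}. Fix for each arm $a$ the quantile $d_a(q_a)$, so that $\mathbb P(L\le d_a(q_a))\ge q_a$. Let $n_a(t)$ be the number of pulls of arm $a$ before round $t$. Let $\bar n_a(t)$ be the number of those pulls whose labels have arrived. Fairness counts are still $n_a(t)$, because predictions and $S_t$ are seen at selection. The first step is a counting lemma: with high probability (Bernoulli thinning plus a Chernoff bound on the independent delays), $\bar n_a(t)\ge q_a\,(n_a(t)-d_a(q_a))/2$ for all $t$. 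The failure probability summed over rounds contributes $O(1)$ per arm. Once this holds, every accuracy concentration statement that used $n$ observations in Steps 4--5 holds with $q_a n$ observations, after paying an additive $d_a(q_a)$ warm-up of pulls.

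Next we redo the decomposition \eqref{eq:three-cases} with the same thresholds $x_a,y_a$. For Case III, the empirical accuracy error on $q_a n$ arrived labels gives $\mathbb P(\overline U_{a}-U_a>\epsilon_a)\le C e^{-q_a n\epsilon_a^2/C}+Ce^{-n\pi_{\min}\epsilon_a^2/(C\lambda^2)}$. Summing over $n$ and adding the warm-up gives $C(q_a^{-1}+\lambda^2/\pi_{\min})/\epsilon_a^2+d_a(q_a)$. For Case II we set $m_a=\lceil C(q_a^{-1}+\lambda^2/\pi_{\min})\log T/\epsilon_a^2\rceil+d_a(q_a)$ and argue exactly as in Step 5. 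The accuracy tolerance now needs $q_a n$ arrived labels, while the subgroup tolerance is unchanged. The regret from warm-up pulls is at most $\Delta_a d_a(q_a)\le 2(1+\lambda)d_a(q_a)$, which is the additive $(1+\lambda)\sum_a d_a(q_a)$ term.

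For Case I, the exchange inequality \eqref{eq:group-exchange} needs no change. The reason is that it is a statement about conditional independence of the draws given $\mathcal F_{t-1}$, and delayed arrivals are part of $\mathcal F_{t-1}$. What does change is the epoch argument. The optimal arm's posterior now changes at label arrivals as well as at its own selections. We therefore index epochs by pairs (number of fairness observations $n$, number of arrived accuracy labels $\bar n$), and bound the case-I count by the sum over these states of $\mathbb E[\min\{T,(1-p)/p\}]$. This quantity is what we define as $H_D(T)$, grouped by the utility-gap classes $B_j$ exactly as in the gap-free cutoff. Defining $H_D(T)$ this way is legitimate, since the statement leaves it as a summary. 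At zero delay $\bar n=n$, so Lemma~\ref{lem:main-anti-concentration} bounds it, and the earlier rate is recovered as the remark after Theorem~\ref{thm:delay} claims.

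The final step repeats the grouped cutoff \eqref{eq:grouped-cutoff}. This gives $Td+C(q_{\min}^{-1}+\lambda^2/\pi_{\min})N\log T/d+H_D(T)/d+(1+\lambda)\sum_a d_a(q_a)$. Optimizing over $d$ and splitting the square root yields the stated terms. The main obstacle is the epoch/exchange step. The posterior of $a^\star$ can move between its selections, so the ``at most one selection per epoch'' argument must be replaced by a per-state charging argument. In addition, the delayed accuracy history is no longer synchronized with the fairness history. Our first attempt is to pre-generate the IID observation and delay streams and charge each round to the current state $(n,\bar n)$. Each state can be occupied for a random number of rounds, and we bound that number through the geometric selection probability $p$. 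This is what makes the bound inverse-odds times a factor bounded via the delay quantile.
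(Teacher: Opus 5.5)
Your proposal follows essentially the paper's proof. It uses the same three-case decomposition with the exchange inequality unchanged, and charges Case I over maximal epochs on which all three coordinate posteriors of $a^\star$ are constant (exactly your states $(n,\bar n)$), with $\mathcal W_D(\epsilon,T)$ the expected sum of truncated inverse odds at epoch starts and $H_D(T)=\sup_{0<\epsilon\le 1+\lambda}\epsilon^2\,\mathcal W_D(\epsilon,T)$ (this scale-free normalization is what makes the grouped Case-I regret at most $36H_D(T)/d$); Cases II--III use delay-quantile counting plus a predictable-mask martingale for concentration over the arrived labels, which your ``holds with $q_a n$ observations'' glosses over, and the grouped cutoff is the same. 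The Case I obstacle you anticipate does not arise: because fairness feedback is immediate, a selection of $a^\star$ increments $n$ and ends the current state, so each state still contains at most one optimal-arm selection (label arrivals only end it early), and summing the exchange inequality over a state bounds its expected case-I count by the truncated inverse odds directly, with no occupancy-time argument and no extra delay-quantile factor.
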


\subsection*{Proof}
\label{app:delay-proof}

We use the notation and decomposition from the proof of
Theorem \ref{thm:bound}. In particular, $U_a$, $\Delta_a$,
$K_a(T)$, $\overline U_a(t)$, $\tilde U_a(t)$, and
$\mathcal F_{t-1}$ have exactly the same meanings. We retain
\begin{equation}
V_\lambda=1+\frac{\lambda^2}{\pi_{\min}},
\qquad
B_\lambda(T)=\min\left\{
\frac{\lambda^2}{\pi_{\min}}(\log T)^{9/2},
V_\lambda(\log T)^{5/2}
\right\}.
\label{eq:delay-main-quantities}
\end{equation}
Recall that $W(\epsilon,T)$ in that proof averages the truncated
inverse optimism odds over the first $n$ complete observations
of the optimal arm, and its anti-concentration lemma gives
\begin{equation}
W(\epsilon,T)\le
\frac{C}{\epsilon^2}\left[\log T+B_\lambda(T)\right],
\qquad 0<\epsilon\le1+\lambda.
\label{eq:delay-recalled-optimism}
\end{equation}
Thus $B_\lambda(T)$ is the additional fairness contribution.
The scalar $\log T$ contribution is absorbed into
$NV_\lambda\log T$ in Theorem \ref{thm:bound}.

We first specify the two changes needed for delayed labels.
A label generated at round $u$ with integer delay $D_u$ is used
at the end of round $u+D_u$. Write
\begin{equation}
\begin{aligned}
K_a^{\mathrm{obs}}(t-1)
&=\sum_{u<t}\1\{A_u=a,u+D_u<t\},\\
N_{a,s}(t)
&=\sum_{u<t}\1\{A_u=a,S_u=s\}.
\end{aligned}
\label{eq:delay-observation-counts}
\end{equation}
Here $d_a(q_a)$ is the $q_a$-quantile of arm $a$'s delay
distribution. The accuracy posterior uses
$K_a^{\mathrm{obs}}(t-1)$ labels.
The subgroup posterior uses the immediate count $N_{a,s}(t)$,
as in Theorem \ref{thm:bound}.
In particular, $K_a(T)$ continues to count selections.

For the anti-concentration term, retain the optimism probability
from the main proof:
\begin{equation}
p_t(y)=
\mathbb P\left(\tilde U_*(t)>y|\mathcal F_{t-1}\right).
\label{eq:delay-optimism-probability}
\end{equation}
Partition the decision rounds into maximal consecutive epochs
during which all three optimal-arm coordinate posteriors remain
unchanged, including the subgroup posteriors when $\lambda=0$.
Let $\sigma_e$ be the first round of epoch $e$. Define
\begin{equation}
\mathcal W_D(\epsilon,T)=
\mathbb E\left[
\sum_{e:\sigma_e\le T}
\min\left\{
T,
\frac{1-p_{\sigma_e}(U_*-\epsilon)}
{p_{\sigma_e}(U_*-\epsilon)}
\right\}
\right],
\label{eq:delay-inverse-odds}
\end{equation}
and
\begin{equation}
H_D(T)=
\sup_{0<\epsilon\le1+\lambda}
\epsilon^2\mathcal W_D(\epsilon,T).
\label{eq:delay-H}
\end{equation}
Here $\mathcal W_D$ plays the role of $W$ in the exchange argument,
with the actual delayed posterior histories.
Consequently, $H_D(T)$ bounds the total optimal-arm
anti-concentration coefficient. Its immediate-feedback
counterpart is $\log T+B_\lambda(T)$, rather than $B_\lambda(T)$
alone. We establish this connection explicitly below.

As in Theorem \ref{thm:bound}, arms with
$c_a-c_{a^*}>1+\lambda$ are never selected, and every selectable
arm satisfies $\Delta_a\le2(1+\lambda)$.
The cases $N=1$ and $N>T$ follow from zero regret and the trivial
bound $2(1+\lambda)T$, respectively. We therefore take
$2\le N\le T$. All arm groups below contain selectable arms.

Use exactly the thresholds and events of the main proof:
\begin{equation}
\begin{gathered}
\epsilon_a=\Delta_a/3,
\qquad x_a=U_a+\epsilon_a,
\qquad y_a=U_*-\epsilon_a,\\
E_a^\mu(t)=\{\overline U_a(t)\le x_a\},
\qquad
E_a^\theta(t)=\{\tilde U_a(t)\le y_a\}.
\end{gathered}
\end{equation}
The same three-case decomposition gives
\begin{equation}
\begin{aligned}
\mathbb E[K_a(T)]
={}&\underbrace{\sum_{t\le T}
\mathbb P(A_t=a,E_a^\mu(t),E_a^\theta(t))}_{\mathsf I_a}\\
&+\underbrace{\sum_{t\le T}
\mathbb P(A_t=a,E_a^\mu(t),E_a^\theta(t)^c)}_{\mathsf {II}_a}\\
&+\underbrace{\sum_{t\le T}
\mathbb P(A_t=a,E_a^\mu(t)^c)}_{\mathsf {III}_a}.
\end{aligned}
\label{eq:delay-three-cases}
\end{equation}

The exchange inequality in Step 2 of the main proof applies
conditionally on $\mathcal F_{t-1}$ without modification.
It is the same conditional-draw argument as
\citet[Lemma 2.8]{agrawal2017near}.
For a group $B$ and a common threshold $y<U_*$, it gives
\begin{equation}
\begin{aligned}
\mathbb P\left(
A_t\in B,E_{A_t}^\mu(t),\tilde U_{A_t}(t)\le y
|\mathcal F_{t-1}
\right)\le
\frac{1-p_t(y)}{p_t(y)}
\mathbb P(A_t=a^*|\mathcal F_{t-1}).
\end{aligned}
\label{eq:delay-exchange}
\end{equation}
Within epoch $e$, $p_t(y)=p_{\sigma_e}(y)$.
Each epoch contains at most one optimal-arm selection, because
that selection immediately updates a subgroup posterior.
An accuracy-label arrival can end the epoch earlier.
Conditional on its starting history, summing
\eqref{eq:delay-exchange} bounds the expected group case-I count
by the inverse odds, and also by $T$.
Averaging and summing over the reached epochs gives
\begin{equation}
\mathbb{E}\Bigg[\sum_{t\le T}
\1\{A_t\in B,E_{A_t}^\mu(t),\tilde U_{A_t}(t)\Bigg]\le y\}
\le \mathcal W_D(U_*-y,T).
\label{eq:delay-group-I}
\end{equation}
In particular, $\mathsf I_a\le\mathcal W_D(\epsilon_a,T)$.
This is precisely the use of $W(\epsilon_a,T)$ in the main
proof, with delayed posterior epochs.

For cases II and III, we use the coordinate concentration
calculations in Steps 4 and 5 of that proof.
The observation-counting change is
\begin{equation}
\begin{aligned}
\mathbb E \Bigg[\sum_{t\le T}
\1\{A_t=a,K_a^{\mathrm{obs}}(t-1)<m\}\Bigg]
&\le \frac{m}{q_a}+d_a(q_a),\\
\mathbb E \Bigg[\sum_{t\le T}
\1\{A_t=a,N_{a,s}(t)<m\}\Bigg]
&\le \frac{m}{\pi_s}
\end{aligned}
\label{eq:delay-counting}
\end{equation}
for every integer $m\ge1$.
For the first inequality, pre-generate the arm's IID delays
in selection order. The expected index of the $m$th delay at
most $d_a(q_a)$ is at most $m/q_a$.
After that selection, all $m$ designated labels arrive within
$d_a(q_a)$ rounds. At most $d_a(q_a)$ further selections can
occur during those rounds.
The second inequality is the same negative-binomial argument
for immediate subgroup observations.
This is the delay-quantile counting principle of
\citet{wu2022thompson}.

For completeness, the accuracy concentration calculation remains
valid for the observed count in \eqref{eq:delay-observation-counts}.
Fix a calendar round $t$. Let $C_u$ be the correctness observed
for the case selected at round $u$, and let $\hat\rho_a(t)$
be the empirical accuracy from arm $a$'s arrived labels.
For analysis, expose complete past
cases, then the current action and delay, before the fresh case.
The mask $w_u=\1\{A_u=a,u+D_u<t\}$ is predictable.
Delay independence and noncontextual selection give
\begin{equation}
\mathbb E\Bigg[\exp\left(
\eta\sum_{u<t}w_u(C_u-\rho_a)
-\frac{\eta^2}{8}\sum_{u<t}w_u
\right)\Bigg] \le 1.
\end{equation}
Taking $\eta=4v$ and $\eta=-4v$ yields
\begin{equation}
\mathbb P\left(
K_a^{\mathrm{obs}}(t-1)\ge m,
|\hat\rho_a(t)-\rho_a|>v
\right)\le2e^{-2mv^2}.
\label{eq:delay-accuracy-concentration}
\end{equation}
The subgroup argument is identical after exposing the current
group. Posterior-mean smoothing and conditional Beta
concentration are unchanged. No independence between accuracy
and subgroup empirical histories is used.

As in the main proof, allocate tolerance $\epsilon_a/2$ to
accuracy and $\epsilon_a/(4\lambda)$ to each subgroup.
Choose observation thresholds of orders
$\log T/\epsilon_a^2$ and
$\lambda^2\log T/\epsilon_a^2$, respectively.
Omit a coordinate condition when its tolerance is at least one,
and omit the subgroup conditions at $\lambda=0$.
Above these thresholds, the empirical and posterior error
probabilities at each fixed round are at most $C/T^3$.
Below them, apply \eqref{eq:delay-counting}.
Combining the two cases before charging the insufficient-count
events gives
\begin{equation}
\begin{aligned}
\mathsf {II}_a+\mathsf {III}_a
&\le
\frac{C\log T}{\Delta_a^2}
\left(q_a^{-1}+\frac{\lambda^2}{\pi_{\min}}\right)
+d_a(q_a)\\
&=
\frac{C\log T}{\Delta_a^2}
\left(V_\lambda+q_a^{-1}-1\right)
+d_a(q_a).
\end{aligned}
\label{eq:delay-cases-II-III}
\end{equation}
Threshold ceilings and the summed failure probabilities are
absorbed into the leading term, using
$\Delta_a^2\le8(q_a^{-1}+\lambda^2/\pi_{\min})$.
Thus only the accuracy contribution $1$ in $V_\lambda$ becomes
$q_a^{-1}$. The fairness contribution remains
$\lambda^2/\pi_{\min}$.

Adding the three cases gives the delayed counterpart of the
selection-count bound in Theorem \ref{thm:bound}:
\begin{equation}
\mathbb E[K_a(T)]
\le
\frac{C(V_\lambda+q_a^{-1}-1)\log T}{\Delta_a^2}
+d_a(q_a)+\mathcal W_D(\Delta_a/3,T).
\label{eq:delay-count-bound}
\end{equation}

For the gap-free bound, use the same cutoff $d$ and groups
$d_j,B_j,y_j$ as in the main proof:
\begin{equation}
d_j=2^j d,
\qquad B_j=\{a:d_j<\Delta_a\le2d_j\},
\qquad y_j=U_*-d_j/3.
\end{equation}
Keep $x_a=U_a+\Delta_a/3$.
Since $y_j-x_a\ge\Delta_a/3$ for $a\in B_j$,
\eqref{eq:delay-cases-II-III} applies with the common threshold.
Charge gaps at most $d$ by $Td$ and use
\eqref{eq:delay-group-I} once per nonempty group. This gives
exactly the main proof's grouped cutoff with its delayed terms:
\begin{equation}
\begin{aligned}
\mathbb E[R_\lambda(T)]
\le{}&Td+
C\log T\sum_{\Delta_a>d}
\frac{V_\lambda+q_a^{-1}-1}{\Delta_a}\\
&+\sum_{\Delta_a>d}\Delta_a d_a(q_a)
+\sum_{j:B_j\ne\varnothing}2d_j\mathcal W_D(d_j/3,T).
\end{aligned}
\label{eq:delay-grouped-cutoff}
\end{equation}
Every nonempty group has $d_j/3\le1+\lambda$.
The definition of $H_D(T)$ therefore gives
\begin{equation}
\sum_{j:B_j\ne\varnothing}2d_j\mathcal W_D(d_j/3,T)
\le18H_D(T)\sum_{j\ge0}\frac1{2^j d}
=\frac{36H_D(T)}d.
\end{equation}
Also,
\begin{equation}
\sum_{\Delta_a>d}
\frac{V_\lambda+q_a^{-1}-1}{\Delta_a}
\le\frac{N(V_\lambda+q_{\min}^{-1}-1)}d,
\qquad
\sum_{\Delta_a>d}\Delta_a d_a(q_a)
\le2(1+\lambda)\sum_{a=1}^N d_a(q_a).
\end{equation}
Optimizing over $d$ as in Theorem \ref{thm:bound} yields
\begin{equation}
\begin{aligned}
\mathbb E[R_\lambda(T)]
\le{}&
C\sqrt{T\left[
N(V_\lambda+q_{\min}^{-1}-1)\log T+H_D(T)
\right]}\\
&+C(1+\lambda)\sum_{a=1}^N d_a(q_a).
\end{aligned}
\label{eq:delay-sharp-free}
\end{equation}
Expanding
$V_\lambda+q_{\min}^{-1}-1
=q_{\min}^{-1}+\lambda^2/\pi_{\min}$
and separating the square root proves the stated theorem.

Finally, the connection to Theorem \ref{thm:bound} uses its
original $W$ and $B_\lambda$.
When labels arrive immediately, take $q_a=1$ and $d_a(1)=0$.
Each reached epoch is then the posterior after a prefix of the
optimal-arm IID stream.
Dropping its reach indicator gives
\begin{equation}
\mathcal W_D(\epsilon,T)\le W(\epsilon,T),
\qquad
H_D(T)\le C\left[\log T+B_\lambda(T)\right].
\label{eq:delay-no-delay-connection}
\end{equation}
Substitution into \eqref{eq:delay-sharp-free} gives
\begin{equation}
\begin{aligned}
\mathbb E[R_\lambda(T)]
&\le C\sqrt{T\left[
NV_\lambda\log T+\log T+B_\lambda(T)
\right]}\\
&\le C\sqrt{T\left[NV_\lambda\log T+B_\lambda(T)\right]}.
\end{aligned}
\end{equation}
This is the sharper gap-free bound in the main proof.
Applying its same inequality
\begin{equation}
B_\lambda(T)
\le
\left(
\frac{\lambda}{\sqrt{\pi_{\min}}}
+\frac{\lambda^2}{\pi_{\min}}
\right)(\log T)^{7/2}
\end{equation}
recovers all three displayed terms of Theorem \ref{thm:bound}.
In particular, $B_0(T)=0$ recovers its scalar result.
For positive delays, \eqref{eq:delay-sharp-free} retains
$H_D(T)$ as defined in \eqref{eq:delay-H}.
The no-delay anti-concentration lemma is used only in
\eqref{eq:delay-no-delay-connection}.

\subsection*{non-IID $L_t$} In practice, delays may correlate with subgroup identity, for example, when each subgroup has its own stationary feedback-delay distribution. We believe the proof would extend naturally to this case. The key point is that our analysis already tracks per-subgroup effective sample sizes separately — the delayed-filtration argument of \cite{wu2022thompson} can be applied per-subgroup rather than globally. If delays are correlated with $S$, the bottleneck becomes the subgroup with the worst combination of low prevalence and long delays: intuitively, $\pi_{\min} \cdot q_{min}$ in Theorem \ref{thm:delay} would be replaced by $\min_s(\pi_s \cdot q_s)$, where $q_s$ is the subgroup-specific delay quantile. The rest of the posterior concentration argument proceeds as before, since updates remain order-invariant within each subgroup. In the worst case, the minority subgroup also experiences the longest delays, compounding the $\pi_{\min}$ factor. For other non-IID delays, the impact on the theoretical guarantees depends on the structure of the dependence.

\section{Additional Details About Experiments}\label{app:experiments}

\subsection{Dataset Details}\label{app:datasets}

For the ML datasets, the arms are trained on 60\% of the data (split stratified on $Y$), and the remaining 40\% is the stream population. Twitter, Toxicity and Bios have no training step, and all their instances form the stream population. Each of the 10{,}000 runs samples $T=50{,}000$ instances with replacement from the stream population (Section~\ref{sec:methodology}).

The grid of $\lambda$ values depends on the oracle breakpoints, the values of $\lambda$ at which the optimal arm changes. With $b_{\min}$ and $b_{\max}$ the smallest and largest breakpoint, the grid contains $0$ and $\infty$, 21 log-spaced values from $b_{\min}/10$ to $10\,b_{\max}$, and $0.9b$, $b$ and $1.1b$ for every breakpoint $b$.

For each dataset we give the $\lambda$ grid, a table of the arms and a figure with four panels. Panels (a)--(c) show the regret per example, $R_\lambda(t)/((1+\lambda)t)$, against the round $t$ at $\lambda=0$, at an intermediate $\lambda$ and at $\lambda=\infty$. The intermediate $\lambda$ is the grid value closest to $\sqrt{b_{\min}b_{\max}}$. Panel (d) shows the same quantity after $T=50$k rounds against $\lambda$, with the breakpoints as dotted vertical lines. Dividing by $1+\lambda$ makes the values at $\lambda=0$ and $\lambda=\infty$ the accuracy and fairness regret of Figure~\ref{fig:tradeoff-50k-no-delay}. All values are means over 10{,}000 runs; standard errors are smaller than the markers. As in Figure~\ref{fig:tradeoff-50k-no-delay}, Random is omitted because it coincides with Round-Robin.

\subsubsection{Classifier Arms for the Tabular Datasets}
\label{app:ml-arms}
Bank, Adult, Credit, ACSIncome, Coupon and COMPAS use the same 18 arms. Numeric features are standardized and categorical features one-hot encoded. All models are fitted on the training split.
\begin{itemize}
    \item \emph{Standard models (do not use $S$).} Gradient boosted trees (GBM), a random forest (RF, 200 trees of depth at most 12), logistic regression (LR) and a decision tree of depth 3 (Tree-d3).
    \item \emph{GBM with $S$ as a feature} (GBM+S).
    \item \emph{Operating points of the GBM score} (GBM-OP1 to GBM-OP6). Each uses one threshold for all subgroups: GBM-OP$k$ uses the most accurate threshold whose DP gap is at most $(1-k/7)$ times the GBM's. The thresholds are set on the stream population, so the six arms have evenly spaced gaps.
    \item \emph{Fairness in-processing} ($S$ is used only in training). LR-DP$\eta$, $\eta\in\{3,10,30,100\}$, adds $\eta$ times the squared difference between the subgroups' mean predicted probabilities to the logistic loss (for $M>2$, the sum over all pairs of subgroups). GBM-RW is a GBM trained with the reweighing of \citet{kamiran2012data}.
    \item \emph{Biased arms.} GBM-BiasedLabels is a GBM+S trained on labels in which 40\% of the positives of the subgroup with the lowest base rate are flipped to 0. Biased predicts $\hat y=S$ with probability 0.7 and is otherwise a fair coin (for $M>2$, it predicts 1 for the subgroup with the highest base rate).
\end{itemize}
The fairness arms do not use $S$ at decision time. With subgroup-specific thresholds, parity costs at most 0.1 accuracy points on Bank, Credit and Coupon, so such arms tie and there is no trade-off to learn.

\clearpage
\subsubsection{Twitter}
The TwitterAAE dataset \citep{keswani2021towards} includes 1471 Twitter posts (a subset of a larger dataset from \citet{davidson2017automated}) labeled for containing hate speech by 170 Mechanical Turk workers. The sensitive attribute - dialect (African American English (AAE) and non-AAE) - is annotated using a model from \citet{blodgett2017dataset}, and this is treated as ground truth. Since each worker doesn't label all the tweets the dataset is sparse - each tweet is labeled by around 4\% of experts. We randomly combine 18 expert predictions into an arm $a$, taking the majority vote as their prediction $\hat{y}_a$. This results in 10 arms, each with around 40\% missingness. Unavailable arms are dynamically excluded from the action space for all algorithms, and regret is measured against the best available arm in each round.

The 29 values of $\lambda$ are $\{0,\allowbreak 0.0235,\allowbreak 0.0318,\allowbreak 0.043,\allowbreak 0.0583,\allowbreak 0.0789,\allowbreak 0.107,\allowbreak 0.144,\allowbreak 0.196,\allowbreak 0.211,\allowbreak 0.235,\allowbreak 0.258,\allowbreak 0.265,\allowbreak 0.358,\allowbreak 0.485,\allowbreak 0.656,\allowbreak 0.889,\allowbreak 0.901,\allowbreak 1,\allowbreak 1.1,\allowbreak 1.2,\allowbreak 1.63,\allowbreak 2.2,\allowbreak 2.98,\allowbreak 4.04,\allowbreak 5.46,\allowbreak 7.4,\allowbreak 10,\allowbreak \infty\}$.

\begin{table}[h!]
\centering
\caption{The 10 arms of Twitter: accuracy, DP gap and share of tweets without a prediction (\%). $^\star$: optimal for some $\lambda$.}
\label{tab:twitter-arms}
\small
\begin{tabular}{lrrr}
\toprule
\textbf{Arm} & \textbf{Accuracy} & \textbf{DP gap} & \textbf{\% Missing} \\
\midrule
Arm 0 & 75.1 & 6.4 & 39.4 \\
Arm 1 & 79.5 & 5.2 & 37.5 \\
Arm 2$^\star$ & 81.5 & 6.2 & 45.8 \\
Arm 3$^\star$ & 76.5 & 1.2 & 45.4 \\
Arm 4$^\star$ & 83.0 & 12.6 & 39.8 \\
Arm 5 & 80.4 & 11.5 & 34.2 \\
Arm 6 & 75.4 & 4.5 & 41.4 \\
Arm 7 & 74.0 & 5.0 & 39.7 \\
Arm 8 & 77.5 & 4.4 & 38.5 \\
Arm 9 & 76.8 & 3.4 & 46.4 \\
\bottomrule
\end{tabular}
\end{table}

\begin{figure}[h!]
\centering
\includegraphics[width=\textwidth]{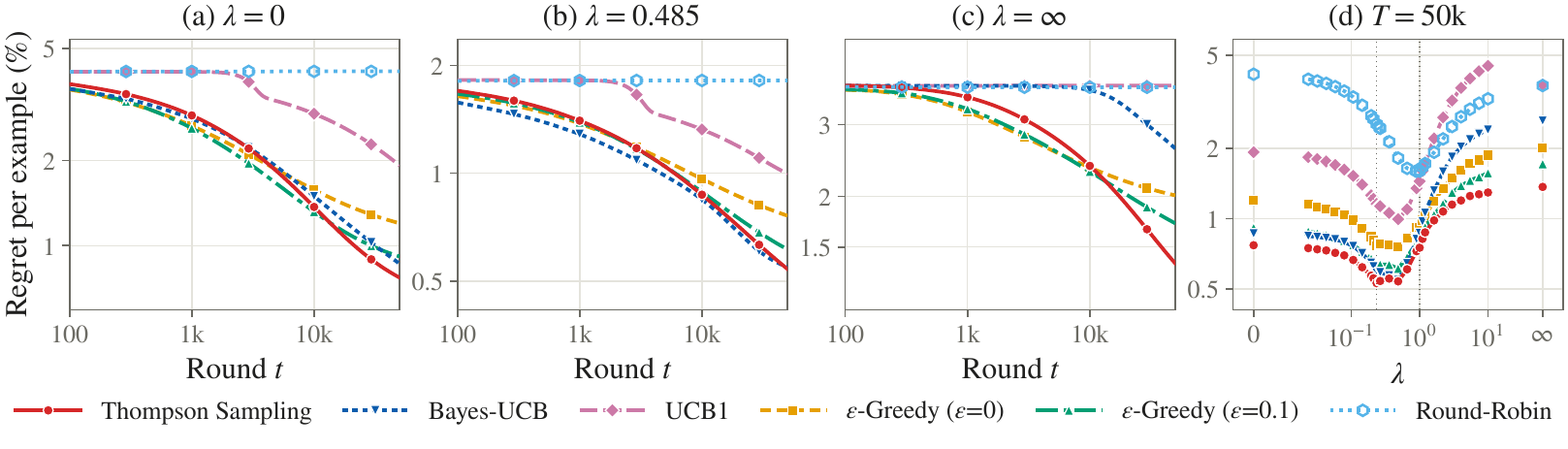}
\caption{Twitter: regret per example (\%) against the round $t$ at three values of $\lambda$ (a)--(c), and after $T=50$k rounds against $\lambda$ (d).}
\label{fig:regret-twitter}
\end{figure}
\clearpage
\subsubsection{Bank}
The Bank Marketing dataset \citep{bank_marketing_222} collects data from direct marketing campaigns of a Portuguese banking institution. The classification goal is to predict if the client will subscribe to a term deposit. The dataset does not contain any missing values. There are a total of 41,188 instances (24,712 used for training the arms and 16,476 used to create the online stream). The sensitive attribute is Marital status (Married vs Single/Divorced) - 60.4\% of the individuals are married and 39.6\% are Single/Divorced. 88.7\% have true label $Y=0$ and 11.3\% have true label $Y=1$. The arms use 19 features.

The 32 values of $\lambda$ are $\{0,\allowbreak 0.0211,\allowbreak 0.0281,\allowbreak 0.0373,\allowbreak 0.0495,\allowbreak 0.0658,\allowbreak 0.0874,\allowbreak 0.116,\allowbreak 0.154,\allowbreak 0.19,\allowbreak 0.205,\allowbreak 0.211,\allowbreak 0.232,\allowbreak 0.272,\allowbreak 0.361,\allowbreak 0.457,\allowbreak 0.48,\allowbreak 0.508,\allowbreak 0.556,\allowbreak 0.559,\allowbreak 0.618,\allowbreak 0.638,\allowbreak 0.68,\allowbreak 0.847,\allowbreak 1.13,\allowbreak 1.49,\allowbreak 1.99,\allowbreak 2.64,\allowbreak 3.5,\allowbreak 4.65,\allowbreak 6.18,\allowbreak \infty\}$.

\begin{table}[h!]
\centering
\caption{The 18 arms of Bank: accuracy and DP gap (\%) on the stream population. $^\star$: optimal for some $\lambda$. Arms are described in Appendix~\ref{app:ml-arms}.}
\label{tab:bank-arms}
\small
\begin{tabular}{lrr}
\toprule
\textbf{Arm} & \textbf{Accuracy} & \textbf{DP gap} \\
\midrule
GBM$^\star$ & 91.90 & 2.52 \\
RF & 91.26 & 1.75 \\
LR & 91.10 & 1.77 \\
Tree-d3 & 90.94 & 2.63 \\
GBM+S & 91.74 & 2.24 \\
GBM-OP1 & 91.67 & 2.16 \\
GBM-OP2 & 91.56 & 1.77 \\
GBM-OP3 & 91.44 & 1.43 \\
GBM-OP4 & 90.62 & 1.06 \\
GBM-OP5 & 90.26 & 0.70 \\
GBM-OP6 & 89.31 & 0.33 \\
LR-DP3 & 91.14 & 1.37 \\
LR-DP10 & 91.07 & 0.85 \\
LR-DP30$^\star$ & 91.01 & 0.33 \\
LR-DP100$^\star$ & 90.80 & 0.00 \\
GBM-RW$^\star$ & 91.74 & 1.77 \\
GBM-BiasedLabels & 90.44 & 8.68 \\
Biased & 43.25 & 69.71 \\
\bottomrule
\end{tabular}
\end{table}

\begin{figure}[h!]
\centering
\includegraphics[width=\textwidth]{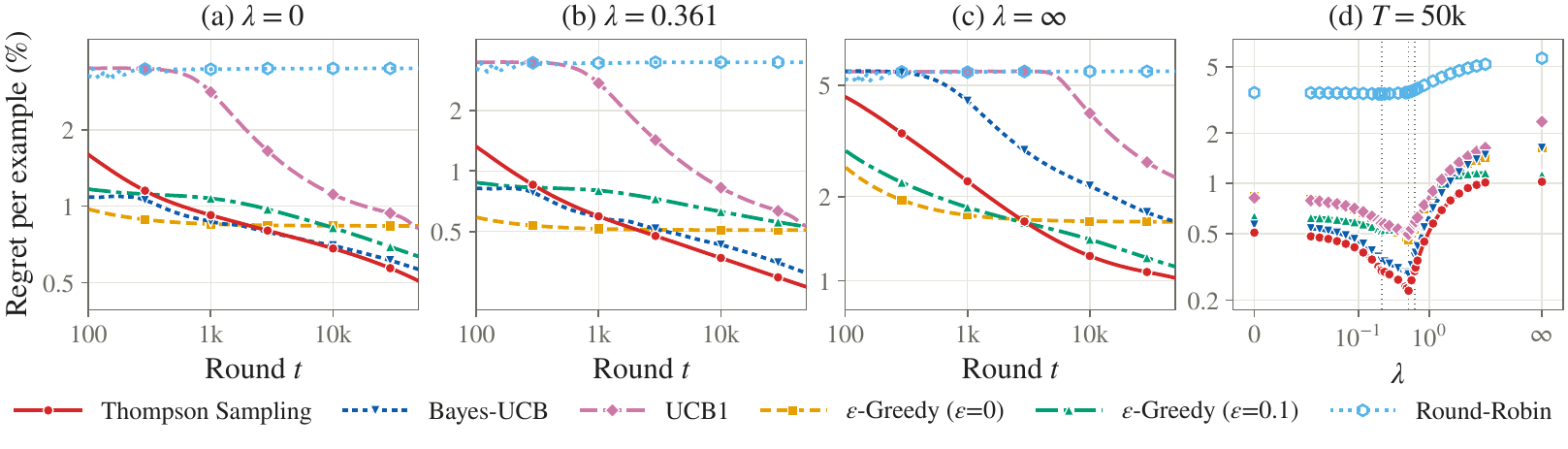}
\caption{Bank: regret per example (\%) against the round $t$ at three values of $\lambda$ (a)--(c), and after $T=50$k rounds against $\lambda$ (d).}
\label{fig:regret-bank}
\end{figure}
\clearpage
\subsubsection{Adult}
The Adult dataset \citep{adult_2} contains 48,842 records from the 1994 US Census (the UCI training and test files combined). The goal is to predict whether income exceeds \$50K. The sensitive attribute is race, white ($S=0$) vs.\ non-white ($S=1$). We drop race, the census sampling weight and the categorical education feature (it duplicates the numeric one), which leaves 11 features. 29,305 instances are used for training and 19,537 for the stream. 14.6\% of the stream population is non-white and 23.9\% has $Y=1$.

The 32 values of $\lambda$ are $\{0,\allowbreak 0.0023,\allowbreak 0.00332,\allowbreak 0.0048,\allowbreak 0.00694,\allowbreak 0.01,\allowbreak 0.0145,\allowbreak 0.0207,\allowbreak 0.021,\allowbreak 0.023,\allowbreak 0.0253,\allowbreak 0.0303,\allowbreak 0.0439,\allowbreak 0.0634,\allowbreak 0.0917,\allowbreak 0.133,\allowbreak 0.192,\allowbreak 0.235,\allowbreak 0.262,\allowbreak 0.277,\allowbreak 0.288,\allowbreak 0.33,\allowbreak 0.366,\allowbreak 0.401,\allowbreak 0.403,\allowbreak 0.58,\allowbreak 0.838,\allowbreak 1.21,\allowbreak 1.75,\allowbreak 2.53,\allowbreak 3.66,\allowbreak \infty\}$.

\begin{table}[h!]
\centering
\caption{The 18 arms of Adult: accuracy and DP gap (\%) on the stream population. $^\star$: optimal for some $\lambda$. Arms are described in Appendix~\ref{app:ml-arms}.}
\label{tab:adult-arms}
\small
\begin{tabular}{lrr}
\toprule
\textbf{Arm} & \textbf{Accuracy} & \textbf{DP gap} \\
\midrule
GBM & 87.47 & 9.46 \\
RF & 86.54 & 7.45 \\
LR & 85.43 & 9.15 \\
Tree-d3 & 84.62 & 6.23 \\
GBM+S$^\star$ & 87.49 & 9.92 \\
GBM-OP1$^\star$ & 87.33 & 8.09 \\
GBM-OP2 & 86.74 & 6.75 \\
GBM-OP3 & 85.72 & 5.40 \\
GBM-OP4 & 84.10 & 4.04 \\
GBM-OP5 & 81.82 & 2.69 \\
GBM-OP6 & 79.50 & 1.34 \\
LR-DP3 & 85.13 & 3.56 \\
LR-DP10 & 84.77 & 1.23 \\
LR-DP30$^\star$ & 84.49 & 0.35 \\
LR-DP100 & 84.37 & 0.74 \\
GBM-RW$^\star$ & 87.45 & 8.58 \\
GBM-BiasedLabels & 87.23 & 17.81 \\
Biased & 61.15 & 70.29 \\
\bottomrule
\end{tabular}
\end{table}

\begin{figure}[h!]
\centering
\includegraphics[width=\textwidth]{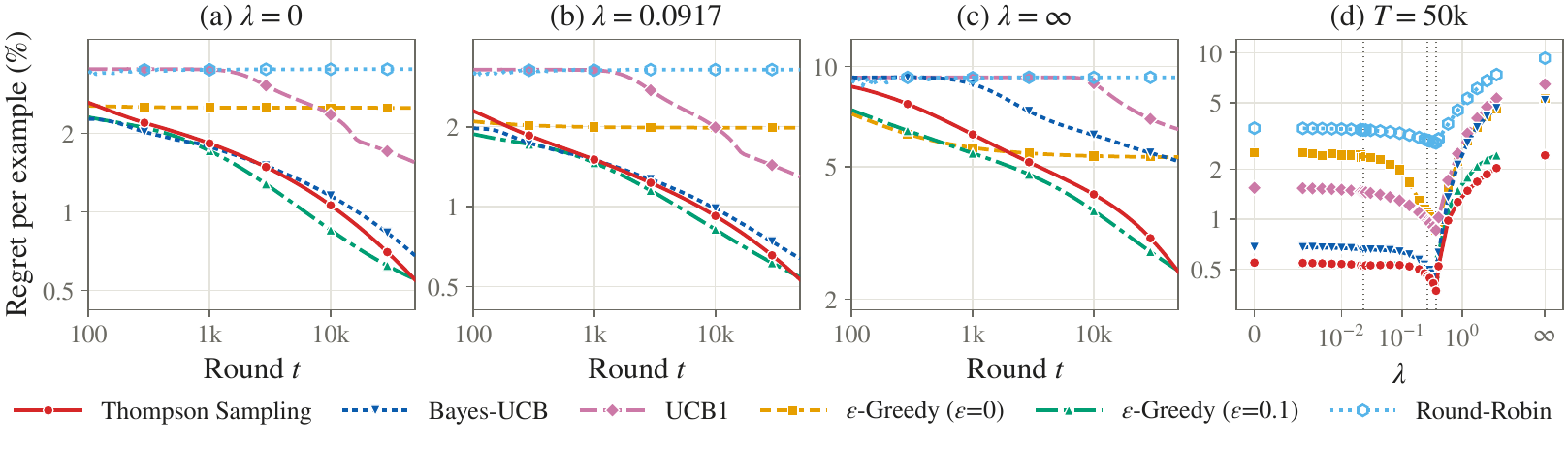}
\caption{Adult: regret per example (\%) against the round $t$ at three values of $\lambda$ (a)--(c), and after $T=50$k rounds against $\lambda$ (d).}
\label{fig:regret-adult}
\end{figure}
\clearpage
\subsubsection{Credit}
The Default of Credit Card Clients dataset \citep{yeh2009comparisons} was created to predict customers' credit default. There are a total of 30,000 instances. We use 18,000 to train arms and 12,000 to create the online stream. 60.8\% have sensitive attribute $S=1$ (female). 22.1\% have true label $Y=1$. The arms use 22 features.

The 29 values of $\lambda$ are $\{0,\allowbreak 0.274,\allowbreak 0.376,\allowbreak 0.515,\allowbreak 0.706,\allowbreak 0.967,\allowbreak 1.33,\allowbreak 1.82,\allowbreak 2.47,\allowbreak 2.49,\allowbreak 2.74,\allowbreak 3.02,\allowbreak 3.41,\allowbreak 4.67,\allowbreak 6.4,\allowbreak 8.77,\allowbreak 12,\allowbreak 13.4,\allowbreak 14.9,\allowbreak 16.4,\allowbreak 16.5,\allowbreak 22.6,\allowbreak 30.9,\allowbreak 42.4,\allowbreak 58.1,\allowbreak 79.6,\allowbreak 109,\allowbreak 149,\allowbreak \infty\}$.

\begin{table}[h!]
\centering
\caption{The 18 arms of Credit: accuracy and DP gap (\%) on the stream population. $^\star$: optimal for some $\lambda$. Arms are described in Appendix~\ref{app:ml-arms}.}
\label{tab:credit-arms}
\small
\begin{tabular}{lrr}
\toprule
\textbf{Arm} & \textbf{Accuracy} & \textbf{DP gap} \\
\midrule
GBM & 81.84 & 0.93 \\
RF & 81.57 & 0.87 \\
LR & 80.90 & 1.03 \\
Tree-d3 & 81.73 & 0.94 \\
GBM+S & 81.88 & 0.72 \\
GBM-OP1$^\star$ & 81.88 & 0.72 \\
GBM-OP2 & 81.47 & 0.57 \\
GBM-OP3 & 80.82 & 0.52 \\
GBM-OP4 & 79.89 & 0.39 \\
GBM-OP5 & 78.42 & 0.23 \\
GBM-OP6$^\star$ & 77.94 & 0.08 \\
LR-DP3 & 80.82 & 0.76 \\
LR-DP10 & 80.83 & 0.53 \\
LR-DP30$^\star$ & 80.61 & 0.26 \\
LR-DP100 & 80.12 & 0.50 \\
GBM-RW & 81.79 & 0.73 \\
GBM-BiasedLabels & 80.08 & 10.45 \\
Biased & 44.50 & 70.42 \\
\bottomrule
\end{tabular}
\end{table}

\begin{figure}[h!]
\centering
\includegraphics[width=\textwidth]{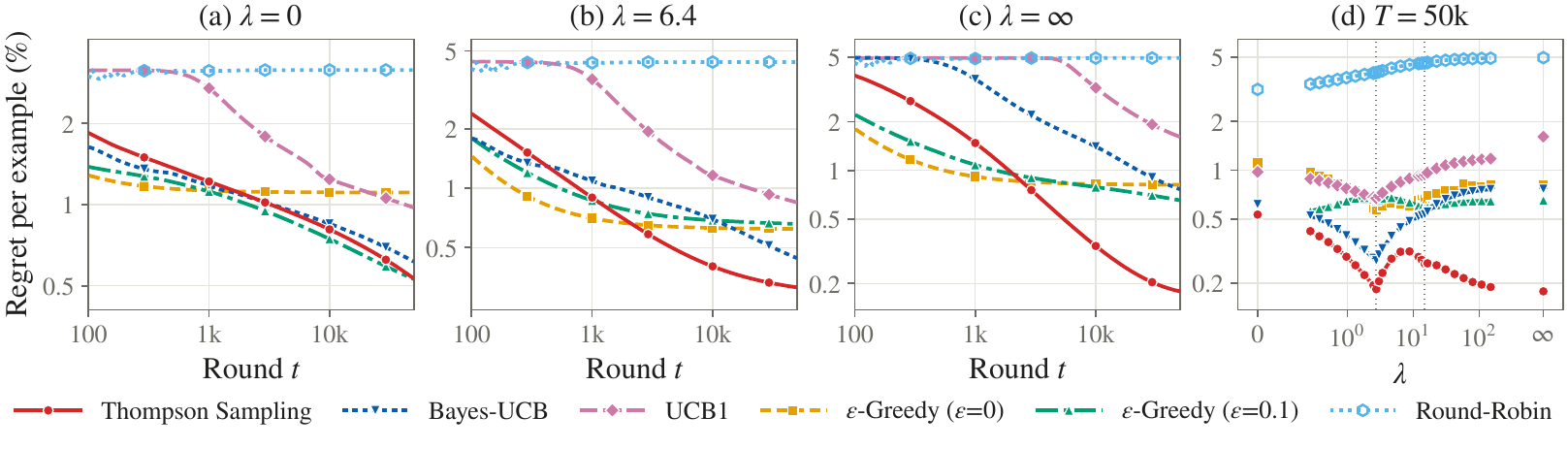}
\caption{Credit: regret per example (\%) against the round $t$ at three values of $\lambda$ (a)--(c), and after $T=50$k rounds against $\lambda$ (d).}
\label{fig:regret-credit}
\end{figure}
\clearpage
\subsubsection{Toxicity}
We parse raw runs from \citet{liang2023holistic} available at \url{https://crfm.stanford.edu/helm/classic/latest/#/runs}. Here, 66 different LLMs were queried - the inputs start with a comment (either toxic or non-toxic from the CivilComments dataset) followed by \textit{“\textbackslash n Question: Is the passage above toxic?”}. Outputs are either “True” or “False.” Accuracy is measured using quasi-exact match. Any LLMs with missing responses were discarded. We do not include perturbed instances. This leaves us with 56 LLMs (arms) and 2000 prompts (1000 prompts mention \textit{Black} demographics and 1000 mention \textit{White}). The motivation behind the dataset is that toxicity classifiers pick up on biases in the training data and spuriously associate toxicity with the mention of certain demographics. A fair classifier would have a low DP-gap since classifications of Toxic should be equal for both subgroups. 29.4\% of the prompts have true label Toxic. The marginal accuracy and DP-gap of the 56 LLMs used for experiments are summarized in Table \ref{tab:toxicity-arms}.

The 29 values of $\lambda$ are $\{0,\allowbreak 0.0134,\allowbreak 0.0175,\allowbreak 0.023,\allowbreak 0.0301,\allowbreak 0.0395,\allowbreak 0.0518,\allowbreak 0.0679,\allowbreak 0.089,\allowbreak 0.117,\allowbreak 0.12,\allowbreak 0.134,\allowbreak 0.147,\allowbreak 0.153,\allowbreak 0.2,\allowbreak 0.263,\allowbreak 0.27,\allowbreak 0.3,\allowbreak 0.33,\allowbreak 0.344,\allowbreak 0.451,\allowbreak 0.591,\allowbreak 0.775,\allowbreak 1.02,\allowbreak 1.33,\allowbreak 1.75,\allowbreak 2.29,\allowbreak 3,\allowbreak \infty\}$.

\begin{table}[h!]
\centering
\caption{The 56 LLMs of Toxicity: accuracy and DP gap (\%). $^\star$: optimal for some $\lambda$.}
\label{tab:toxicity-arms}
\resizebox{0.8\textwidth}{!}{%
\begin{tabular}{lrr}
\toprule
\textbf{Arm} & \textbf{Accuracy} & \textbf{DP gap} \\
\midrule
together\_opt-66b & 29.5 & 0.2 \\
cohere\_medium-20220720 & 29.7 & 1.0 \\
together\_opt-175b & 30.9 & 2.1 \\
together\_gpt-neox-20b & 31.2 & 1.3 \\
together\_gpt-j-6b & 32.1 & 2.1 \\
openai\_davinci & 34.9 & 11.4 \\
cohere\_large-20220720 & 37.3 & 16.8 \\
cohere\_xlarge-20220609 & 38.5 & 3.2 \\
together\_redpajama-incite-base-7b & 38.8 & 38.2 \\
cohere\_command-medium-beta & 38.8 & 6.9 \\
ai21\_j2-jumbo & 40.1 & 11.2 \\
ai21\_j1-jumbo & 41.5 & 9.7 \\
meta\_llama-13b & 43.0 & 16.7 \\
AlephAlpha\_luminous-base & 43.1 & 43.3 \\
meta\_llama-7b & 43.8 & 44.0 \\
lmsys\_vicuna-13b-v1.3 & 45.1 & 2.5 \\
mosaicml\_mpt-30b & 45.1 & 20.2 \\
openai\_curie & 46.2 & 24.2 \\
eleutherai\_pythia-6.9b & 46.5 & 14.2 \\
lmsys\_vicuna-7b-v1.3 & 47.2 & 27.2 \\
meta\_llama-2-13b & 47.9 & 15.5 \\
microsoft\_TNLGv2\_530B & 48.7 & 0.3 \\
meta\_llama-2-70b & 51.4 & 1.3 \\
microsoft\_TNLGv2\_7B & 51.7 & 53.7 \\
together\_redpajama-incite-base-3b-v1 & 53.8 & 76.1 \\
meta\_llama-65b & 55.5 & 9.5 \\
anthropic\_stanford-online-all-v4-s3 & 56.2 & 10.2 \\
together\_t5-11b & 58.0 & 39.2 \\
\bottomrule
\end{tabular}
\hfill
\begin{tabular}{lrr}
\toprule
\textbf{Arm} & \textbf{Accuracy} & \textbf{DP gap} \\
\midrule
cohere\_command-xlarge-beta & 58.7 & 5.1 \\
ai21\_j2-large & 59.7 & 7.4 \\
together\_redpajama-incite-instruct-7b & 60.7 & 9.3 \\
tiiuae\_falcon-40b-instruct & 60.9 & 28.7 \\
together\_bloom & 61.8 & 21.0 \\
eleutherai\_pythia-12b-v0 & 62.9 & 4.3 \\
openai\_text-davinci-003 & 65.5 & 10.7 \\
tiiuae\_falcon-40b & 65.8 & 27.6 \\
AlephAlpha\_luminous-supreme & 66.5 & 17.6 \\
openai\_ada & 67.8 & 22.5 \\
ai21\_j1-large & 67.8 & 2.5 \\
together\_yalm & 68.8 & 4.7 \\
cohere\_xlarge-20221108 & 69.2 & 0.3 \\
ai21\_j1-grande & 69.3 & 6.5 \\
AlephAlpha\_luminous-extended & 69.6 & 0.8 \\
openai\_babbage & 69.8 & 0.4 \\
tiiuae\_falcon-7b & 70.2 & 3.3 \\
cohere\_medium-20221108 & 70.5 & 0.1 \\
openai\_text-ada-001$^\star$ & 70.6 & 0.0 \\
together\_glm$^\star$ & 70.6 & 0.0 \\
cohere\_small-20220720$^\star$ & 70.6 & 0.0 \\
openai\_text-babbage-001$^\star$ & 70.6 & 0.0 \\
ai21\_j2-grande & 70.8 & 3.8 \\
meta\_llama-30b & 71.2 & 4.8 \\
tiiuae\_falcon-7b-instruct$^\star$ & 71.2 & 2.0 \\
ai21\_j1-grande-v2-beta & 71.3 & 3.8 \\
meta\_llama-2-7b & 72.0 & 23.1 \\
together\_redpajama-incite-instruct-3b-v1$^\star$ & 72.2 & 9.1 \\
\bottomrule
\end{tabular}}
\end{table}

\begin{figure}[h!]
\centering
\includegraphics[width=\textwidth]{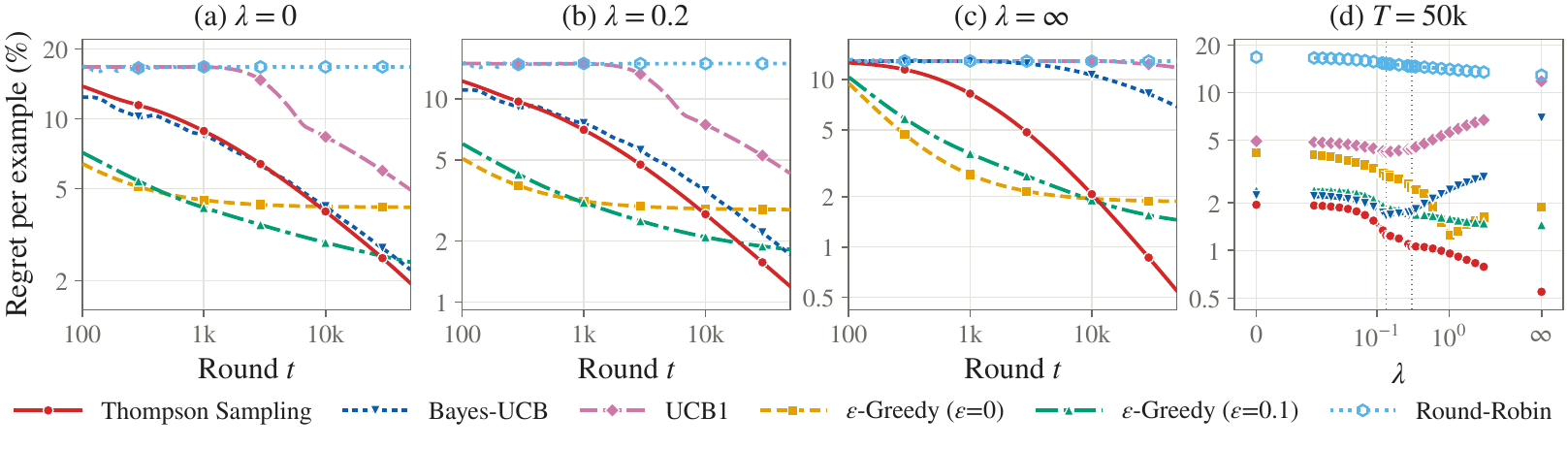}
\caption{Toxicity: regret per example (\%) against the round $t$ at three values of $\lambda$ (a)--(c), and after $T=50$k rounds against $\lambda$ (d).}
\label{fig:regret-toxicity}
\end{figure}
\clearpage
\subsubsection{Bios}\label{app:bios}
Bias in Bios \citep{de2019bias} contains online biographies labeled with one of 28 occupations, with the first sentence (which names the occupation) removed. The sensitive attribute is gender. We sample 30{,}000 biographies at random from the test split; 46.1\% are women, and the most frequent occupation (professor) accounts for 30.0\%. The arms are 16 open instruction-tuned LLMs with 0.36B to 9B parameters, fixed before any model was run. Each model receives a zero-shot prompt with the biography and the list of the 28 occupations, and its greedy decoding is constrained to one of the occupation names. The gap is the total variation distance between the subgroups' distributions of predicted occupations.

The 29 values of $\lambda$ are $\{0,\allowbreak 0.277,\allowbreak 0.36,\allowbreak 0.467,\allowbreak 0.607,\allowbreak 0.788,\allowbreak 1.02,\allowbreak 1.33,\allowbreak 1.72,\allowbreak 2.24,\allowbreak 2.49,\allowbreak 2.77,\allowbreak 2.91,\allowbreak 3.05,\allowbreak 3.77,\allowbreak 4.63,\allowbreak 4.9,\allowbreak 5.14,\allowbreak 5.65,\allowbreak 6.36,\allowbreak 8.26,\allowbreak 10.7,\allowbreak 13.9,\allowbreak 18.1,\allowbreak 23.5,\allowbreak 30.5,\allowbreak 39.6,\allowbreak 51.4,\allowbreak \infty\}$.

\begin{table}[h!]
\centering
\caption{The 16 arms of Bios: accuracy and TV gap (\%) on the stream population. $^\star$: optimal for some $\lambda$.}
\label{tab:bios-arms}
\small
\begin{tabular}{lrr}
\toprule
\textbf{Arm} & \textbf{Accuracy} & \textbf{TV gap} \\
\midrule
SmolLM2-360M & 13.9 & 24.3 \\
SmolLM2-1.7B$^\star$ & 28.7 & 13.3 \\
Qwen2.5-0.5B & 47.1 & 28.3 \\
Gemma3-1B & 49.9 & 22.3 \\
Qwen2.5-1.5B & 53.0 & 25.7 \\
OLMo2-7B & 56.8 & 28.7 \\
Gemma3-4B & 60.7 & 22.3 \\
Qwen3-4B & 61.9 & 24.7 \\
Qwen2.5-3B & 63.6 & 24.5 \\
Phi4-mini & 65.0 & 22.6 \\
Mistral-7B & 65.2 & 24.7 \\
Qwen3-8B$^\star$ & 66.8 & 20.7 \\
Gemma2-2B & 66.9 & 27.7 \\
Phi3.5-mini & 67.4 & 21.6 \\
Qwen2.5-7B & 68.1 & 24.6 \\
Gemma2-9B$^\star$ & 74.6 & 23.5 \\
\bottomrule
\end{tabular}
\end{table}

\begin{figure}[h!]
\centering
\includegraphics[width=\textwidth]{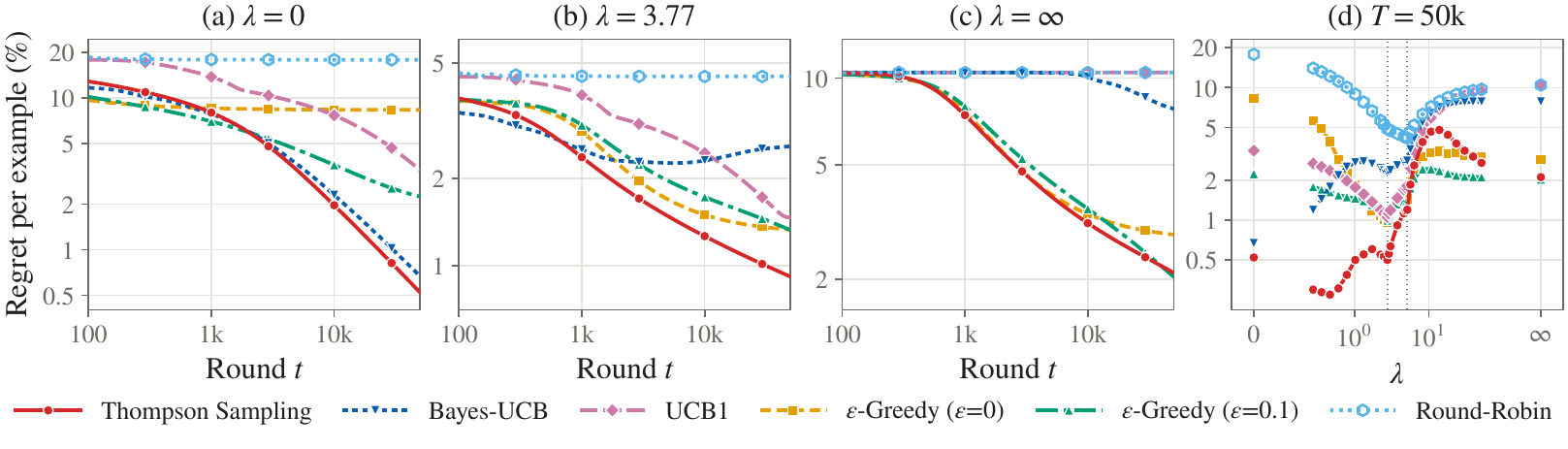}
\caption{Bios: regret per example (\%) against the round $t$ at three values of $\lambda$ (a)--(c), and after $T=50$k rounds against $\lambda$ (d).}
\label{fig:regret-bios}
\end{figure}
\clearpage
\subsubsection{ACSIncome}\label{app:acsincome}
ACSIncome \citep{ding2021retiring} is built from the 2018 American Community Survey (California), with 195,665 instances after the standard ACSIncome filter. The goal is to predict whether income exceeds \$50K (41.1\% of the stream population). The 9 features are age, class of worker, education, marital status, occupation, place of birth, relationship, hours worked per week and sex. The sensitive attribute is race/ethnicity in five groups: White (42.5\% of the stream population), Hispanic (33.1\%), Asian (16.6\%), Black (4.2\%) and Other (3.6\%). 117,399 instances are used for training and 78,266 for the stream. The DP gap is the largest difference in positive prediction rate between any two groups.

The 38 values of $\lambda$ are $\{0,\allowbreak 0.00893,\allowbreak 0.0132,\allowbreak 0.0194,\allowbreak 0.0287,\allowbreak 0.0423,\allowbreak 0.0624,\allowbreak 0.0804,\allowbreak 0.0893,\allowbreak 0.092,\allowbreak 0.0982,\allowbreak 0.136,\allowbreak 0.2,\allowbreak 0.222,\allowbreak 0.247,\allowbreak 0.272,\allowbreak 0.295,\allowbreak 0.372,\allowbreak 0.414,\allowbreak 0.436,\allowbreak 0.455,\allowbreak 0.643,\allowbreak 0.672,\allowbreak 0.746,\allowbreak 0.821,\allowbreak 0.948,\allowbreak 1.4,\allowbreak 1.91,\allowbreak 2.06,\allowbreak 2.13,\allowbreak 2.34,\allowbreak 3.04,\allowbreak 4.49,\allowbreak 6.63,\allowbreak 9.77,\allowbreak 14.4,\allowbreak 21.3,\allowbreak \infty\}$.

\begin{table}[h!]
\centering
\caption{The 18 arms of ACSIncome: accuracy and DP gap (\%) on the stream population. $^\star$: optimal for some $\lambda$. Arms are described in Appendix~\ref{app:ml-arms}.}
\label{tab:acsincome-arms}
\small
\begin{tabular}{lrr}
\toprule
\textbf{Arm} & \textbf{Accuracy} & \textbf{DP gap (max--min)} \\
\midrule
GBM$^\star$ & 82.22 & 35.33 \\
RF & 79.32 & 33.02 \\
LR & 81.84 & 36.06 \\
Tree-d3 & 74.65 & 31.81 \\
GBM+S & 82.20 & 40.51 \\
GBM-OP1 & 80.50 & 30.29 \\
GBM-OP2 & 77.53 & 25.22 \\
GBM-OP3 & 73.94 & 20.18 \\
GBM-OP4 & 70.28 & 15.14 \\
GBM-OP5 & 66.38 & 10.08 \\
GBM-OP6 & 61.91 & 5.02 \\
LR-DP3$^\star$ & 76.48 & 8.03 \\
LR-DP10$^\star$ & 75.08 & 4.64 \\
LR-DP30$^\star$ & 74.60 & 4.00 \\
LR-DP100$^\star$ & 74.42 & 3.91 \\
GBM-RW$^\star$ & 81.65 & 28.97 \\
GBM-BiasedLabels & 80.92 & 52.53 \\
Biased & 58.00 & 70.43 \\
\bottomrule
\end{tabular}
\end{table}

\begin{figure}[h!]
\centering
\includegraphics[width=\textwidth]{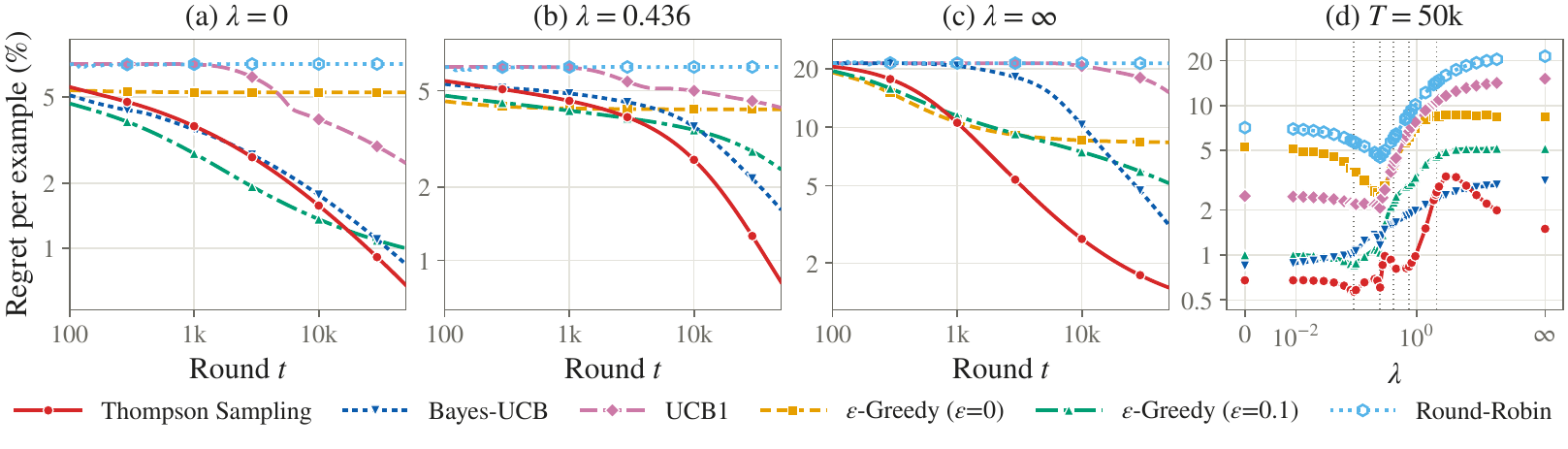}
\caption{ACSIncome: regret per example (\%) against the round $t$ at three values of $\lambda$ (a)--(c), and after $T=50$k rounds against $\lambda$ (d).}
\label{fig:regret-acsincome}
\end{figure}
\clearpage
\subsubsection{Coupon}
For the In-Vehicle Coupon Recommendation \citep{wang2017bayesian} dataset, driving scenarios are presented to Mechanical Turk workers, and they are asked whether they would accept a coupon if they were the driver. There are a total of 12,684 instances. 7,610 are used for training arms, and 5,074 are used for creating an online stream. 49.7\% of the test data have sensitive attribute $S=1$ (male). 56.8\% have true label $Y=1$. The arms use 23 features.

The 29 values of $\lambda$ are $\{0,\allowbreak 0.395,\allowbreak 0.548,\allowbreak 0.759,\allowbreak 1.05,\allowbreak 1.46,\allowbreak 2.02,\allowbreak 2.8,\allowbreak 3.56,\allowbreak 3.88,\allowbreak 3.95,\allowbreak 4.35,\allowbreak 5.38,\allowbreak 7.46,\allowbreak 10.3,\allowbreak 14.3,\allowbreak 19.9,\allowbreak 24.3,\allowbreak 27,\allowbreak 27.5,\allowbreak 29.7,\allowbreak 38.1,\allowbreak 52.9,\allowbreak 73.3,\allowbreak 102,\allowbreak 141,\allowbreak 195,\allowbreak 270,\allowbreak \infty\}$.

\begin{figure}[h!]
\centering
\begin{minipage}[b]{0.38\textwidth}
\centering
\captionof{table}{The 18 arms of Coupon: accuracy and DP gap (\%) on the stream population. $^\star$: optimal for some $\lambda$. Arms are described in Appendix~\ref{app:ml-arms}.}
\label{tab:coupon-arms}
\footnotesize
\begin{tabular}{lrr}
\toprule
\textbf{Arm} & \textbf{Accuracy} & \textbf{DP gap} \\
\midrule
GBM & 73.93 & 4.04 \\
RF & 73.51 & 2.47 \\
LR & 68.23 & 3.13 \\
Tree-d3 & 63.84 & 3.98 \\
GBM+S & 74.24 & 5.53 \\
GBM-OP1 & 73.18 & 3.37 \\
GBM-OP2 & 72.23 & 2.86 \\
GBM-OP3 & 69.49 & 2.22 \\
GBM-OP4 & 66.30 & 1.70 \\
GBM-OP5 & 65.10 & 1.08 \\
GBM-OP6$^\star$ & 58.32 & 0.57 \\
LR-DP3 & 67.95 & 2.11 \\
LR-DP10 & 67.93 & 1.52 \\
LR-DP30 & 67.90 & 1.04 \\
LR-DP100$^\star$ & 67.88 & 0.92 \\
GBM-RW$^\star$ & 74.28 & 2.54 \\
GBM-BiasedLabels & 65.92 & 48.86 \\
Biased & 51.46 & 69.60 \\
\bottomrule
\end{tabular}
\end{minipage}\hfill
\begin{minipage}[b]{0.58\textwidth}
\centering
\includegraphics[width=\linewidth]{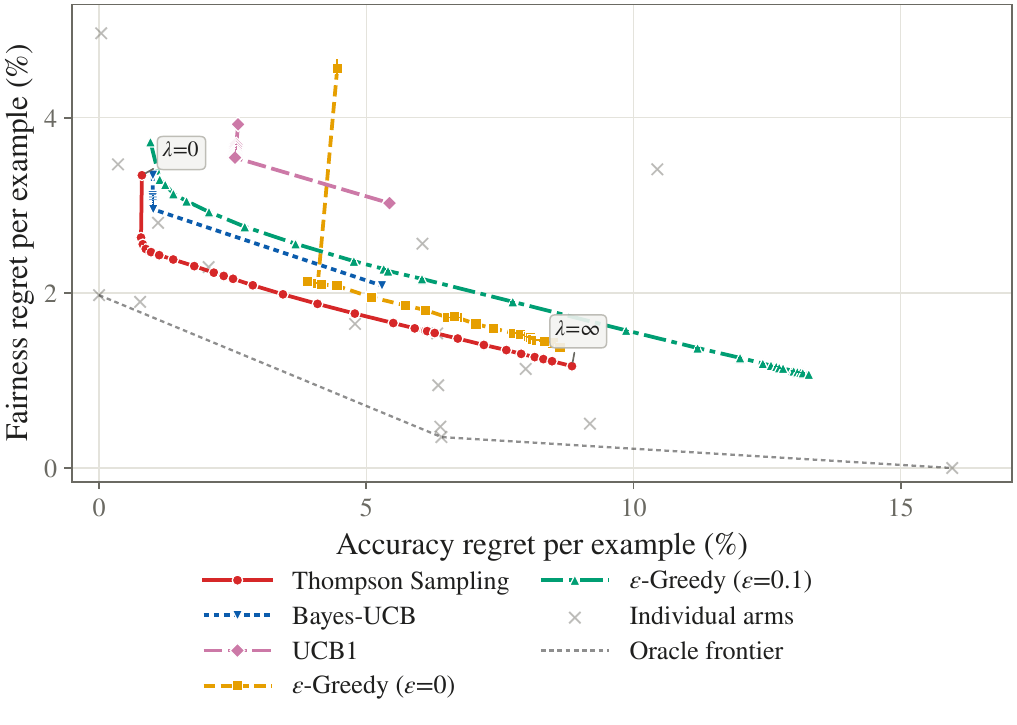}
\captionof{figure}{Trade-off between accuracy and fairness regret on Coupon after $T=50$k rounds, as in Figure~\ref{fig:tradeoff-50k-no-delay}. Round-Robin is off-scale and omitted.}
\label{fig:tradeoff-coupon}
\end{minipage}

\vspace{1.5em}
\includegraphics[width=\textwidth]{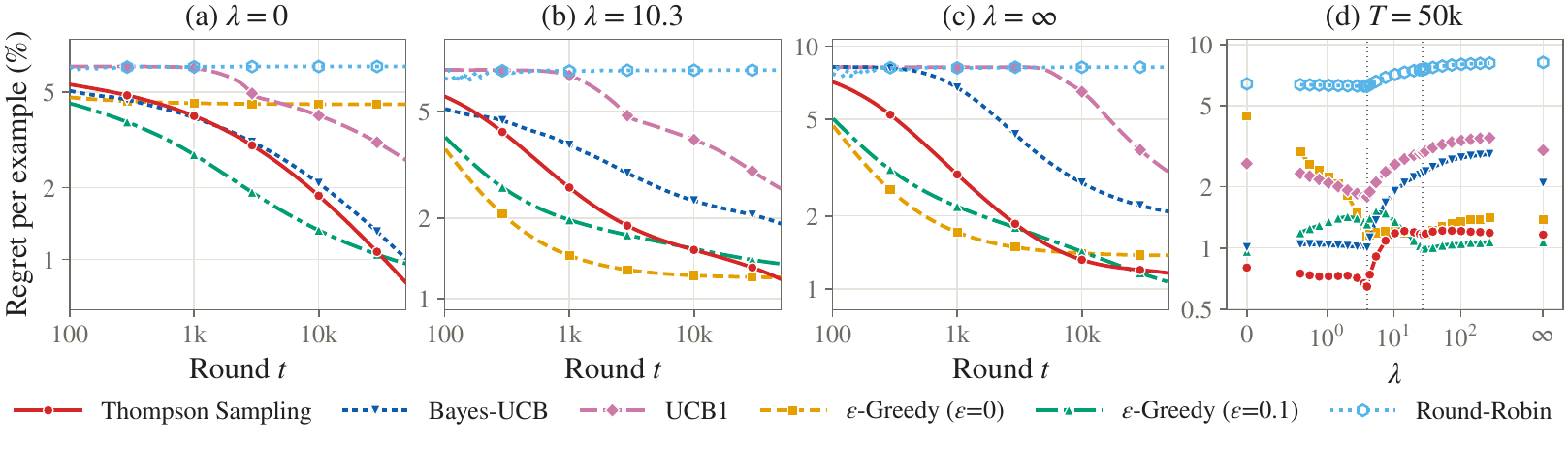}
\captionof{figure}{Coupon: regret per example (\%) against the round $t$ at three values of $\lambda$ (a)--(c), and after $T=50$k rounds against $\lambda$ (d).}
\label{fig:regret-coupon}
\end{figure}

\clearpage
\subsubsection{COMPAS}
The COMPAS dataset \citep{Larson_Angwin_Kirchner_Mattu_2016} contains criminal defendants from Broward County, Florida. The goal is to predict recidivism within two years. We apply the filters of the ProPublica analysis and keep African-American ($S=1$) and Caucasian ($S=0$) defendants, which leaves 5,278 instances. The 7 features are age, sex, number of prior offenses, three juvenile offence counts and the charge degree; the COMPAS score is not used. 3,166 instances are used for training and 2,112 for the stream. 39.6\% of the stream population is Caucasian and 47.1\% has $Y=1$.

The 44 values of $\lambda$ are $\{0,\allowbreak 0.00208,\allowbreak 0.00314,\allowbreak 0.00475,\allowbreak 0.00719,\allowbreak 0.0109,\allowbreak 0.0165,\allowbreak 0.0187,\allowbreak 0.0208,\allowbreak 0.0228,\allowbreak 0.0249,\allowbreak 0.0377,\allowbreak 0.046,\allowbreak 0.0512,\allowbreak 0.0563,\allowbreak 0.057,\allowbreak 0.0863,\allowbreak 0.131,\allowbreak 0.198,\allowbreak 0.222,\allowbreak 0.246,\allowbreak 0.271,\allowbreak 0.299,\allowbreak 0.34,\allowbreak 0.378,\allowbreak 0.416,\allowbreak 0.452,\allowbreak 0.479,\allowbreak 0.532,\allowbreak 0.585,\allowbreak 0.652,\allowbreak 0.684,\allowbreak 0.725,\allowbreak 0.739,\allowbreak 0.797,\allowbreak 0.821,\allowbreak 0.903,\allowbreak 1.04,\allowbreak 1.57,\allowbreak 2.37,\allowbreak 3.59,\allowbreak 5.43,\allowbreak 8.21,\allowbreak \infty\}$.

\begin{figure}[h!]
\centering
\begin{minipage}[b]{0.38\textwidth}
\centering
\captionof{table}{The 18 arms of COMPAS: accuracy and DP gap (\%) on the stream population. $^\star$: optimal for some $\lambda$. Arms are described in Appendix~\ref{app:ml-arms}.}
\label{tab:compas-arms}
\footnotesize
\begin{tabular}{lrr}
\toprule
\textbf{Arm} & \textbf{Accuracy} & \textbf{DP gap} \\
\midrule
GBM & 66.48 & 22.20 \\
RF & 66.34 & 22.92 \\
LR$^\star$ & 67.05 & 26.20 \\
Tree-d3$^\star$ & 67.00 & 23.92 \\
GBM+S$^\star$ & 66.95 & 23.00 \\
GBM-OP1 & 65.10 & 18.88 \\
GBM-OP2$^\star$ & 64.63 & 15.52 \\
GBM-OP3$^\star$ & 63.07 & 12.58 \\
GBM-OP4 & 60.75 & 9.47 \\
GBM-OP5$^\star$ & 58.38 & 6.12 \\
GBM-OP6$^\star$ & 55.87 & 3.06 \\
LR-DP3$^\star$ & 66.00 & 19.15 \\
LR-DP10 & 63.49 & 15.04 \\
LR-DP30 & 61.51 & 13.07 \\
LR-DP100 & 57.53 & 6.39 \\
GBM-RW & 65.67 & 19.60 \\
GBM-BiasedLabels & 64.11 & 43.19 \\
Biased & 54.02 & 70.68 \\
\bottomrule
\end{tabular}
\end{minipage}\hfill
\begin{minipage}[b]{0.58\textwidth}
\centering
\includegraphics[width=\linewidth]{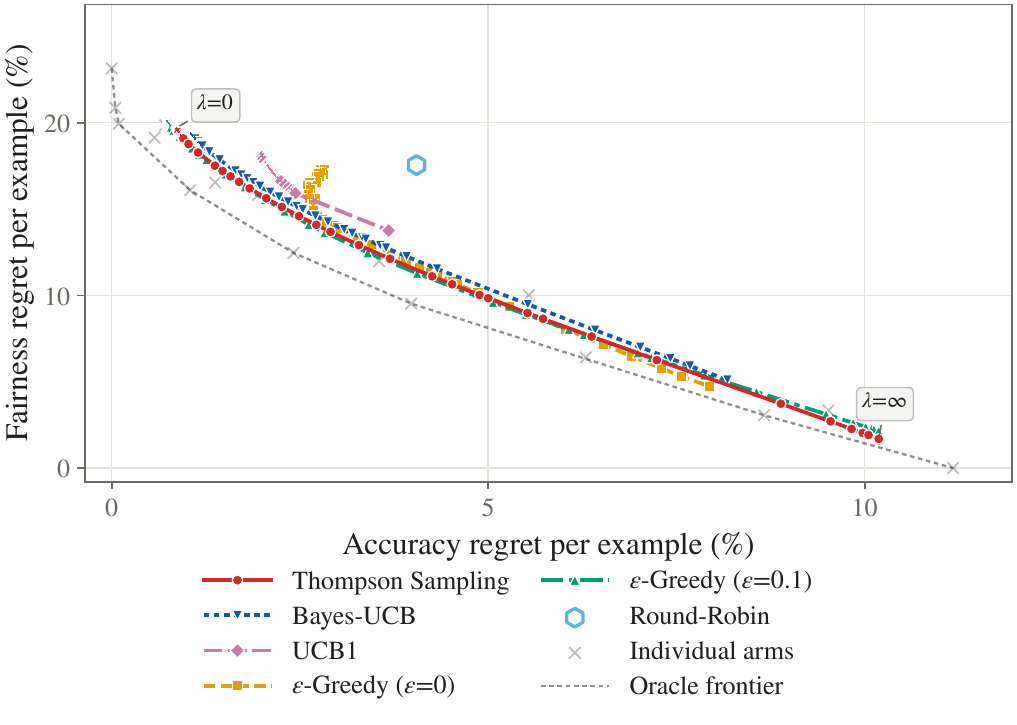}
\captionof{figure}{Trade-off between accuracy and fairness regret on COMPAS after $T=50$k rounds, as in Figure~\ref{fig:tradeoff-50k-no-delay}.}
\label{fig:tradeoff-compas}
\end{minipage}

\vspace{1.5em}
\includegraphics[width=\textwidth]{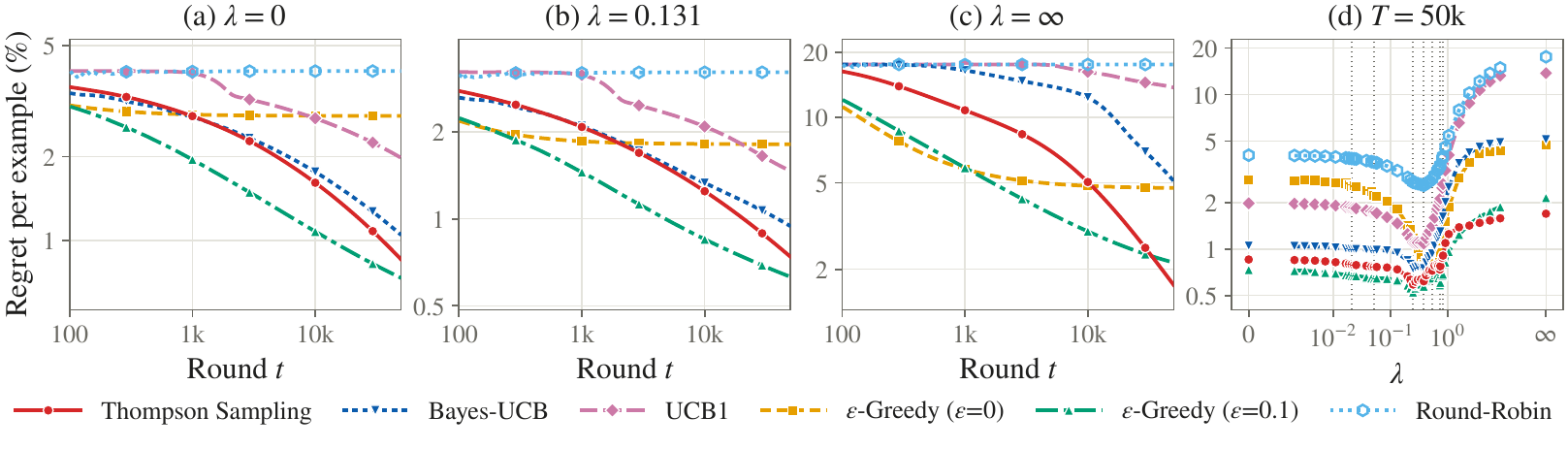}
\captionof{figure}{COMPAS: regret per example (\%) against the round $t$ at three values of $\lambda$ (a)--(c), and after $T=50$k rounds against $\lambda$ (d).}
\label{fig:regret-compas}
\end{figure}

\clearpage
\subsection{UCB Baselines}\label{app:UCB}

Both UCB baselines replace the unknown accuracy $\rho_a$ with an upper bound $\rho_a^+$ and the gap $g_a$ with a lower bound $g_a^-$. In round $t$, the learner selects the available arm with the largest optimistic utility
\begin{equation*}
\hat U_a(t) = \rho_a^+(t) - \lambda\, g_a^-(t) - c_a,
\end{equation*}
with $\hat U_a(t) = -g_a^-(t) - c_a$ for $\lambda=\infty$. Ties are broken uniformly at random.

\paragraph{UCB1} Let $n_a$ be the number of times arm $a$ has been selected, $n_{a,s}$ the number of those rounds with $S=s$, and $\hat\rho_a$ and $\hat\mu_{a,s}$ the empirical accuracy and subgroup-positive rate. With the Hoeffding radii $r_a=\sqrt{2\log t/n_a}$ and $r_{a,s}=\sqrt{2\log t/n_{a,s}}$ \citep{auer2002finite},
\begin{equation*}
\rho_a^+ = \min\big(1,\ \hat\rho_a + r_a\big), \qquad
g_a^- = \max\big(0,\ |\hat\mu_{a,0}-\hat\mu_{a,1}| - r_{a,0} - r_{a,1}\big).
\end{equation*}
An arm that has not been selected has an infinite index, so every arm is tried once. Until both subgroups of an arm have been observed, its gap bound is zero.

\paragraph{Bayes-UCB} Bayes-UCB keeps the Beta posteriors of TS (Algorithm~\ref{alg:ts}) and replaces each posterior sample with a quantile \citep{kaufmann2012bayesian}. The accuracy bound $\rho_a^+$ is the $1-1/t$ quantile of $\mathrm{Beta}(\alpha_a,\beta_a)$. For the gap, the $1/t$ and $1-1/t$ quantiles of $\mathrm{Beta}(\alpha_{a,s},\beta_{a,s})$ give an interval $[\ell_{a,s},u_{a,s}]$ for each subgroup, and $g_a^-=\max(0,\ \ell_{a,0}-u_{a,1},\ \ell_{a,1}-u_{a,0})$ is the smallest gap consistent with both intervals. We use the level $1-1/t$, i.e.\ $c=0$ in the level $1-1/(t(\log T)^c)$ of \citet{kaufmann2012bayesian}. Their regret bound requires $c\geq5$, but $c=0$ gave the best results in their simulations. To save computation, for $t>200$ we round $t$ in the level down to a geometric grid with ratio $1.01$, so a quantile is recomputed only when the arm's counts or the grid point change. With delayed labels, the posteriors are updated as in Algorithm~\ref{alg:ts}.

\paragraph{Multi-class and multi-subgroup} Both methods bound each rate separately. With $C$ classes, UCB1 uses the radii above for every class rate $\mu_{a,s,k}$, and Bayes-UCB uses the quantiles of the Beta marginals $\mathrm{Beta}(1+n_{a,s,k},\,C-1+n_{a,s}-n_{a,s,k})$ of the Dirichlet posterior. The gap bound is half the sum over classes of the smallest difference consistent with the per-class intervals. With $M$ subgroups, $g_a^-=\max\big(0,\ \max_s \ell_{a,s} - \min_s u_{a,s}\big)$, where $[\ell_{a,s},u_{a,s}]$ are the subgroup intervals of either method. For real-valued rewards, see Appendix~\ref{app:real}.

\clearpage
\subsection{Fewer Rounds}\label{app:fewer}

Figures~\ref{fig:fewer-2.5k}--\ref{fig:fewer-10k} repeat Figure~\ref{fig:tradeoff-50k-no-delay} after $T=2{,}500$, $5{,}000$ and $10{,}000$ rounds, the first rounds of the same runs. At these horizons $\epsilon$-Greedy has the lowest regret at most values of $\lambda$ on Twitter, Adult and Toxicity and at some values on Bios and ACSIncome, while TS has the lowest regret at nearly all values on Bank and Credit. The regret per example of TS keeps decreasing with $T$ (Appendix~\ref{app:datasets}), and after $T=50$k rounds TS has the lowest regret at every $\lambda$ on four datasets and at all but one on Credit (Table~\ref{tab:wins}).

\begin{figure}[h!]
\centering
\includegraphics[width=\textwidth]{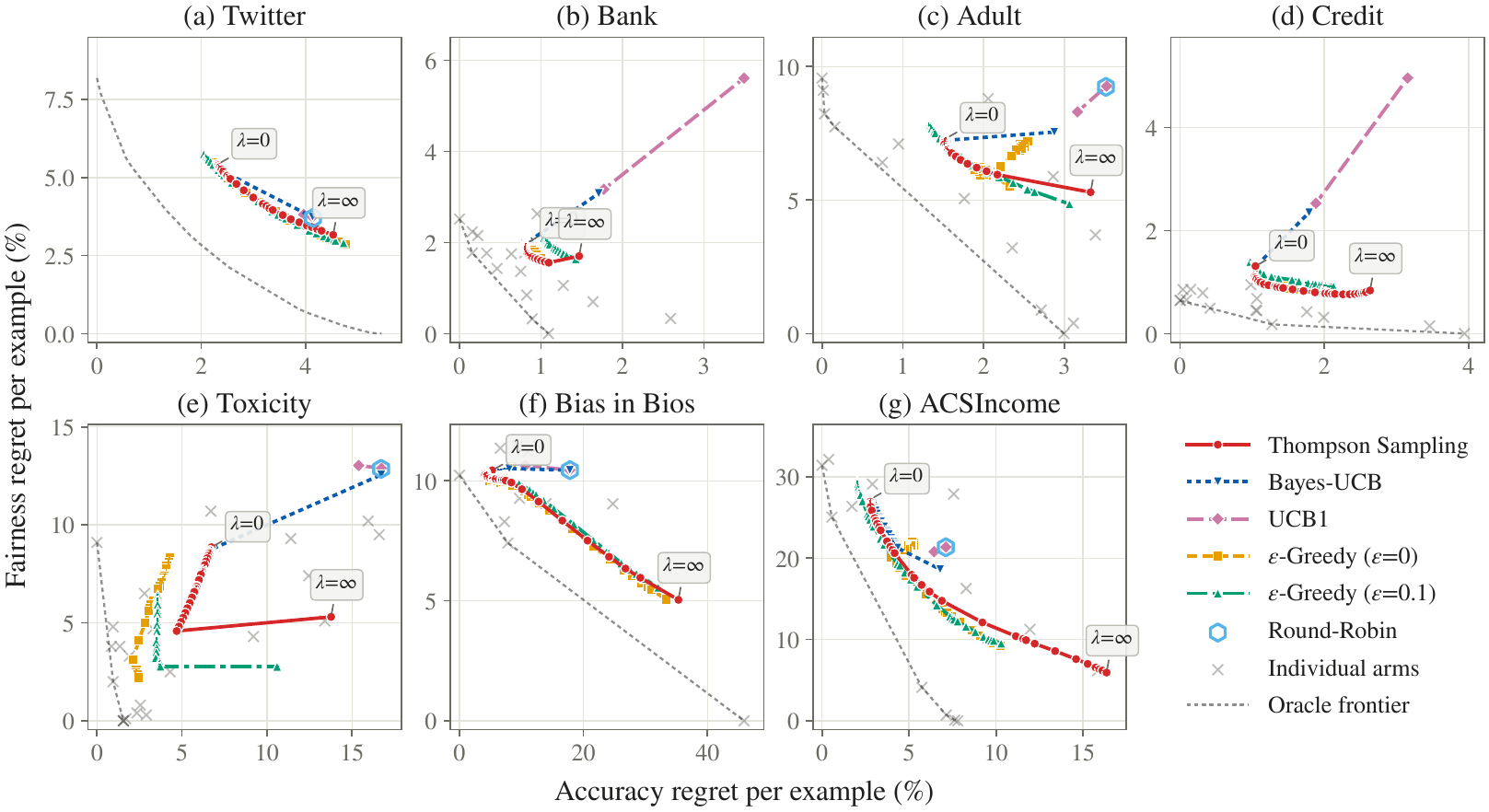}
\caption{Trade-off between accuracy and fairness regret after $T=2{,}500$ rounds, as in Figure~\ref{fig:tradeoff-50k-no-delay}.}
\label{fig:fewer-2.5k}
\end{figure}

\begin{figure}[h!]
\centering
\includegraphics[width=\textwidth]{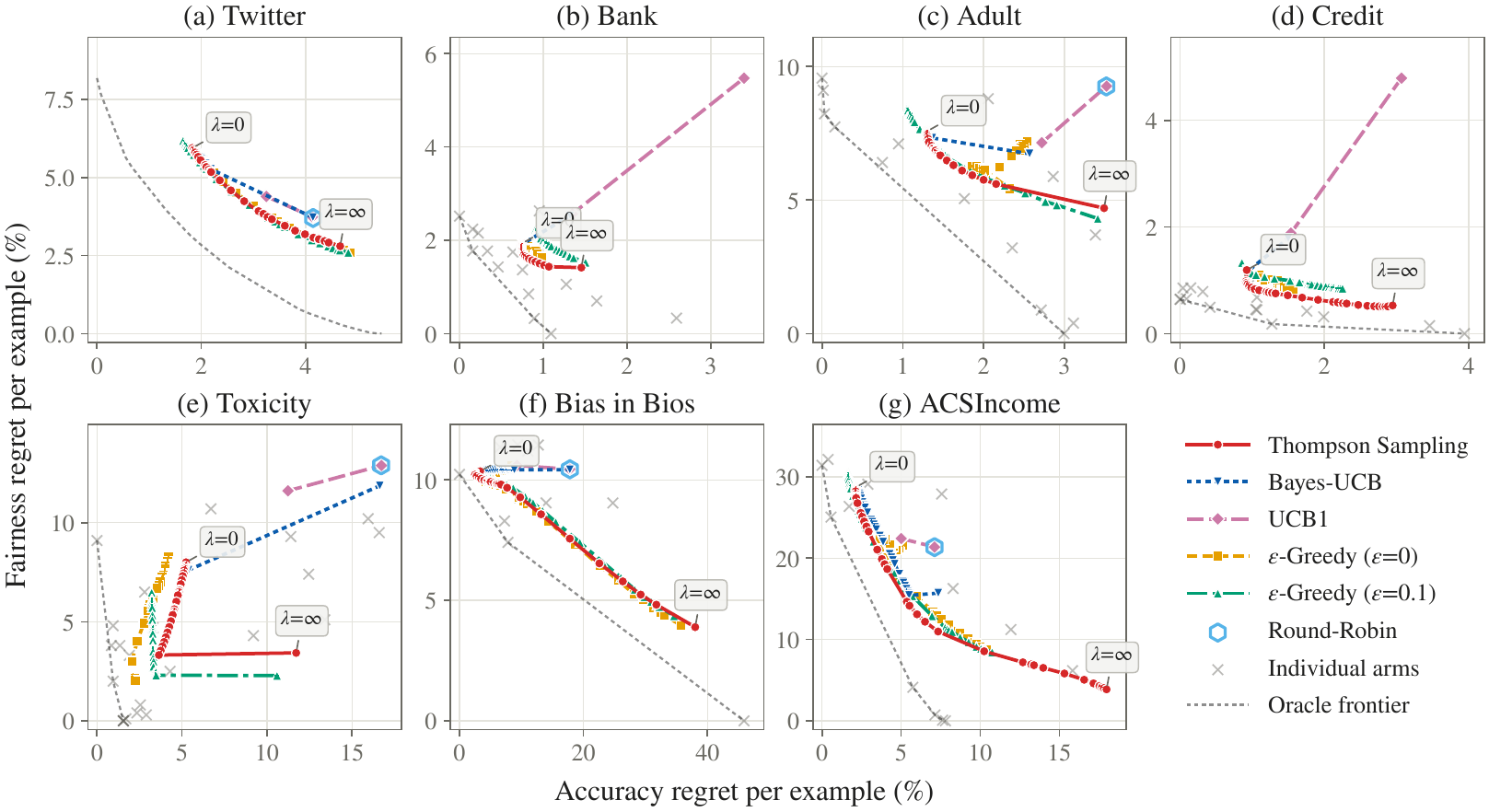}
\caption{Trade-off between accuracy and fairness regret after $T=5{,}000$ rounds, as in Figure~\ref{fig:tradeoff-50k-no-delay}.}
\label{fig:fewer-5k}
\end{figure}

\begin{figure}[h!]
\centering
\includegraphics[width=\textwidth]{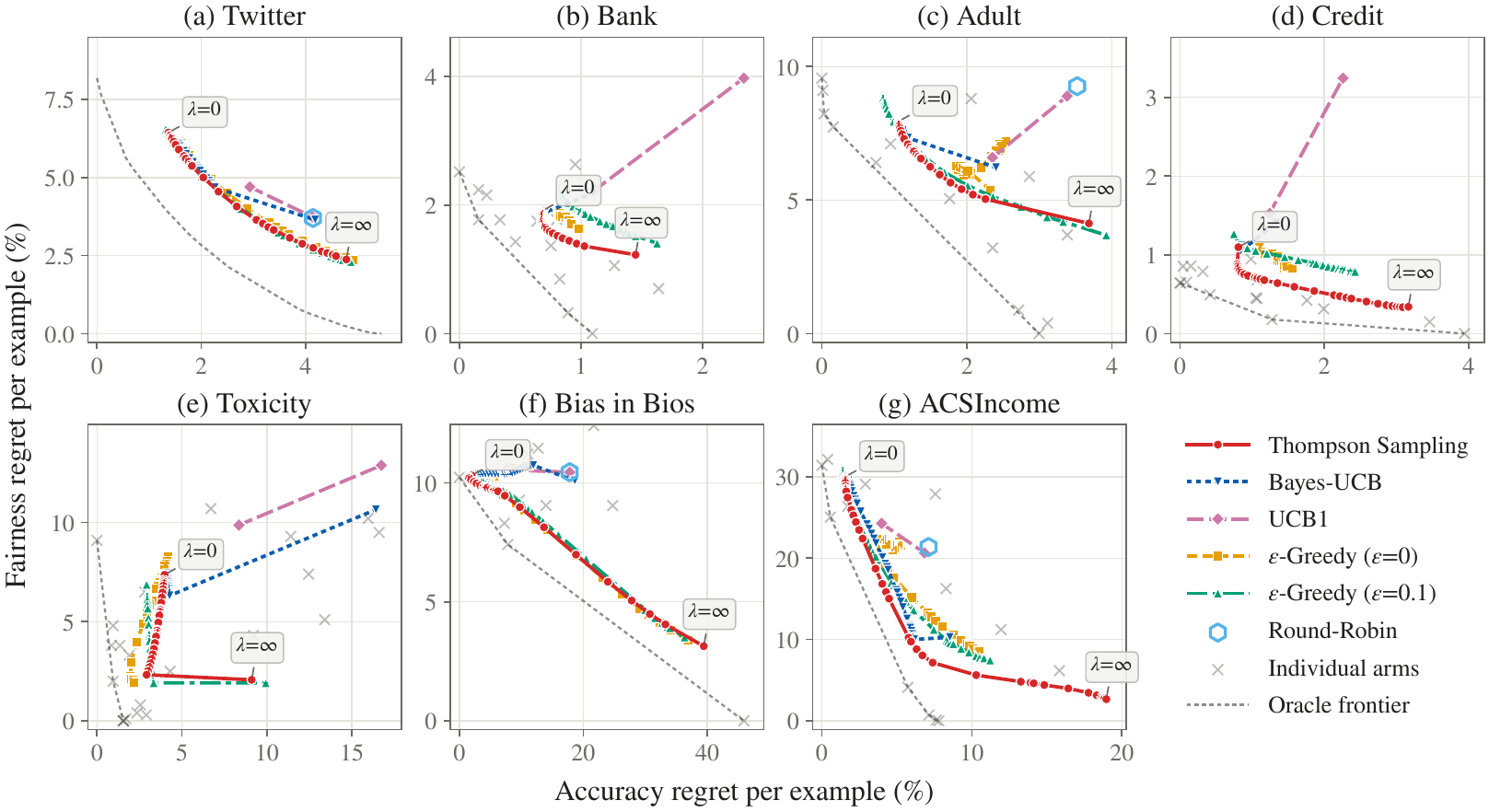}
\caption{Trade-off between accuracy and fairness regret after $T=10{,}000$ rounds, as in Figure~\ref{fig:tradeoff-50k-no-delay}.}
\label{fig:fewer-10k}
\end{figure}

\clearpage
\subsection{Cost}\label{app:cost}

Query costs $c_a$ are in the units of accuracy (\%) and are subtracted from the utility (Eq.~\ref{eqn:utility}). We use Coupon with $\lambda=1$ and five arms: four classifiers that use $S$ as a feature, and the Biased arm (Appendix~\ref{app:ml-arms}). Without costs, RandomForest has the highest utility, just above XGBoost (Table~\ref{tab:coupon-cost}). A cost of 5 for RandomForest and 1 for every other arm makes XGBoost the best arm by a clear margin.

TS follows the change (Figure~\ref{fig:coupon-cost}). Without costs, it splits its pulls between the two near-tied arms, mostly RandomForest. With costs, it selects XGBoost in 93\% of the last 25k rounds. In both settings, TS has significantly lower regret than every baseline (paired Wilcoxon signed-rank test, $p<0.01$).

\begin{table}[h!]
\centering
\caption{Coupon arms: accuracy, DP gap and utility $U_a$ at $\lambda=1$ (\%), without and with costs $c_a$. Bold: best arm.}
\label{tab:coupon-cost}
\begin{tabular}{lrrrrr}
\toprule
 & & & No cost & \multicolumn{2}{c}{With cost} \\
\cmidrule(lr){4-4}\cmidrule(l){5-6}
\textbf{Arm} & \textbf{Accuracy} & \textbf{DP gap} & $U_a$ & $c_a$ & $U_a$ \\
\midrule
XGBoost & 73.99 & 6.60 & 67.38 & 1 & \textbf{66.38} \\
RandomForest & 75.11 & 6.88 & \textbf{68.23} & 5 & 63.23 \\
LogisticRegression & 68.13 & 8.38 & 59.75 & 1 & 58.75 \\
DecisionStump & 61.67 & 0.36 & 61.31 & 1 & 60.31 \\
Biased & 50.57 & 69.88 & -19.31 & 1 & -20.31 \\
\bottomrule
\end{tabular}
\end{table}

\begin{figure}[h!]
\centering
\includegraphics[width=0.6\textwidth]{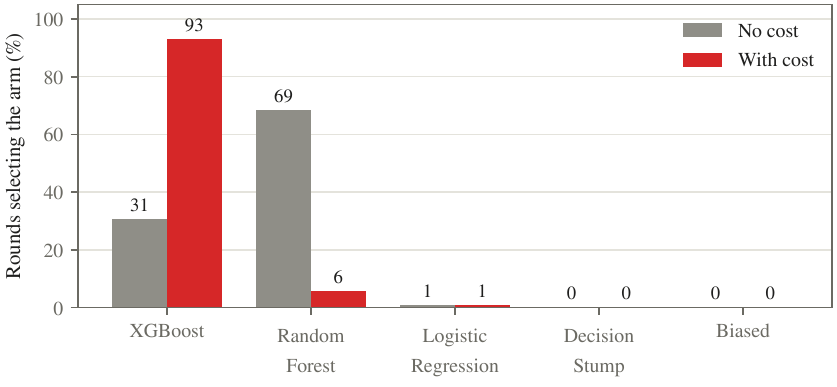}
\caption{Share of rounds in which TS selects each arm over rounds 25k--50k, averaged over 10{,}000 runs.}
\label{fig:coupon-cost}
\end{figure}

\clearpage
\subsection{Delayed Reward}\label{app:delay}

We draw each label delay $L_t$ from a geometric distribution with mean $\mathbb{E}[L_t]\in\{10,10^2,10^3,10^4\}$; labels that would arrive after $T=50{,}000$ rounds are never observed. TS follows Algorithm~\ref{alg:ts}, and $\epsilon$-Greedy and Bayes-UCB use the same updates. Random and Round-Robin do not use feedback and are unaffected, and UCB1 is not included. Results are averages over 10{,}000 runs, as in Section~\ref{sec:methodology}.

Delays do not change the fairness regret of TS at $\lambda=\infty$, and up to $\mathbb{E}[L_t]=10^3$ they increase its accuracy regret at $\lambda=0$ by at most 0.05 points (Figure~\ref{fig:delay-tradeoff}). TS has the lowest regret at most values of $\lambda$ on Twitter, Bank and Credit and at half or more on Adult; on Toxicity, $\epsilon$-Greedy ($\epsilon=0$) has lower regret at most values for $\mathbb{E}[L_t]\geq 100$, consistent with delayed labels acting as forced exploration for the greedy method.

\begin{figure}[h!]
\centering
\includegraphics[width=0.85\textwidth]{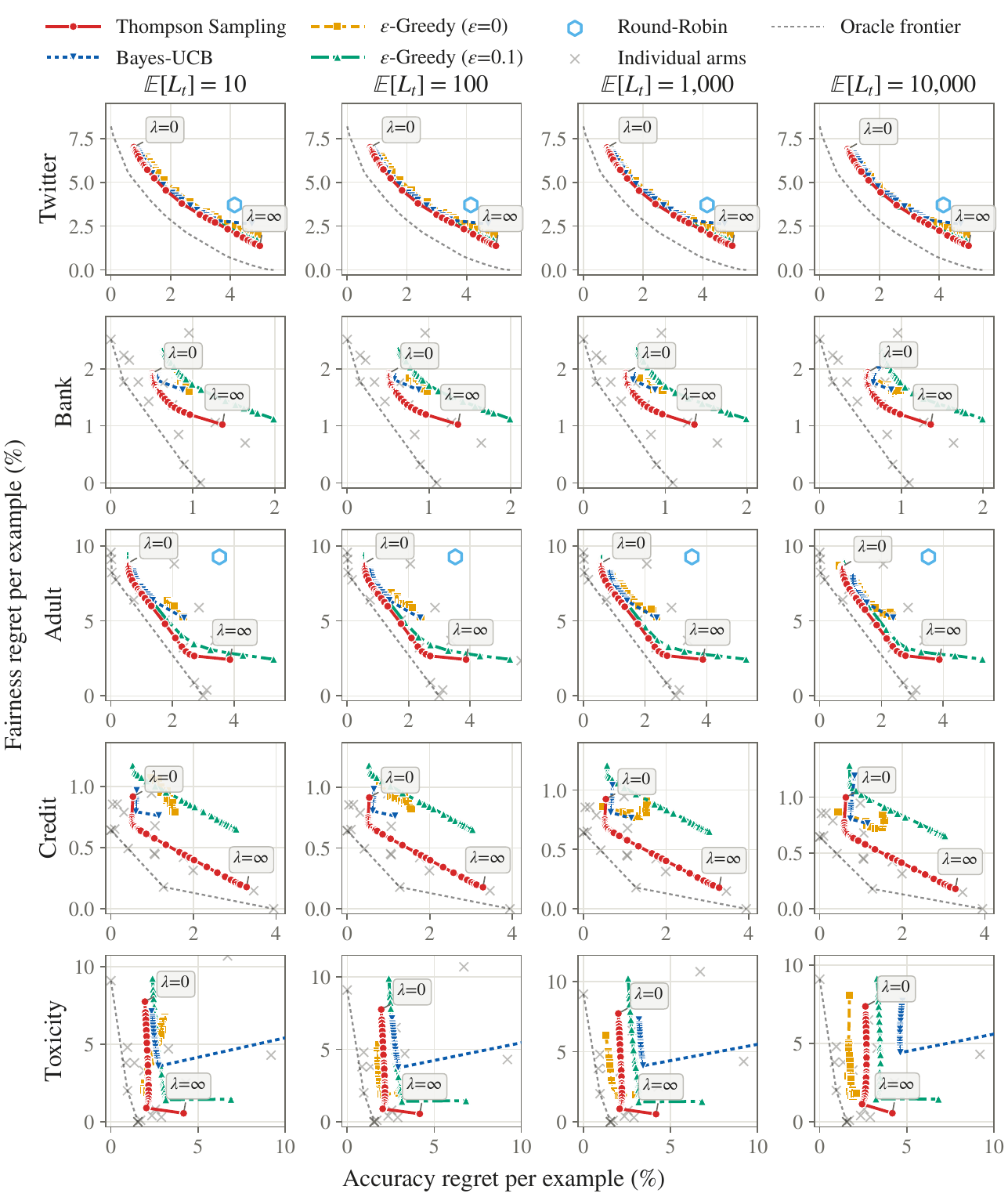}
\caption{Trade-off between accuracy and fairness regret after $T=50$k rounds with geometric label delays of mean $\mathbb{E}[L_t]$ (columns), using Algorithm~\ref{alg:ts}. Round-Robin does not use feedback and is taken from the runs without delay; it is omitted where it is off-scale.}
\label{fig:delay-tradeoff}
\end{figure}

\clearpage
\subsection{Multi-Class Classification}\label{app:multi-class}

The accuracy reward is still binary ($\hat y = y$ or not), so the accuracy posterior stays Beta-Bernoulli. For fairness, $\mu_{a,s}$ is the distribution of arm $a$'s predicted classes in subgroup $s$, with a Dirichlet posterior over the $C$ classes. We sample $\tilde{\mu}_{a,0}$ and $\tilde{\mu}_{a,1}$ and take as the gap their total variation distance, half their $L_1$ distance \citep{xian2023fair}. For $C=2$ this is the binary DP gap.

Bios (Figure~\ref{fig:tradeoff-50k-no-delay}(f), Appendix~\ref{app:bios}) has LLMs as arms. For our second multi-class dataset, ACSIncome-5C, we use trained classifiers. We use the ACSIncome data of Appendix~\ref{app:acsincome} with sex as the sensitive attribute and income in five classes, as in \citet{xian2023fair}: below \$15K, \$15K--30K, \$30K--48.6K, \$48.6K--78K, and at least \$78K. The 9 features are those of Appendix~\ref{app:acsincome}, with race in place of sex. The 60/40 split is stratified on the class. The stream population has 78,266 instances, 47.0\% of them women.

The 18 arms follow Appendix~\ref{app:ml-arms}, adapted to five classes:
\begin{itemize}
\item each operating point adds a fixed offset per class to the GBM's log-probabilities, the same in both subgroups, and predicts the largest;
\item LR-DP$\eta$ is multinomial logistic regression with the penalty $\eta\sum_k(\bar p_{k,0}-\bar p_{k,1})^2$, where $\bar p_{k,s}$ is the mean predicted probability of class $k$ in subgroup $s$;
\item GBM-BiasedLabels is trained after 40\% of the lower-income subgroup's labels above the lowest class are moved down one class;
\item Biased predicts the highest class for one subgroup and the lowest for the other with probability 0.7, and a uniformly random class otherwise.
\end{itemize}
The 35 values of $\lambda$ are $\{0,\allowbreak 0.0065,\allowbreak 0.0109,\allowbreak 0.0184,\allowbreak 0.0309,\allowbreak 0.052,\allowbreak 0.0585,\allowbreak 0.065,\allowbreak 0.0715,\allowbreak 0.0873,\allowbreak 0.147,\allowbreak 0.247,\allowbreak 0.415,\allowbreak 0.469,\allowbreak 0.521,\allowbreak 0.573,\allowbreak 0.698,\allowbreak 1.17,\allowbreak 1.97,\allowbreak 3.32,\allowbreak 5.57,\allowbreak 6.06,\allowbreak 6.73,\allowbreak 7.41,\allowbreak 9.37,\allowbreak 15.8,\allowbreak 19,\allowbreak 21.2,\allowbreak 23.3,\allowbreak 26.5,\allowbreak 44.5,\allowbreak 74.8,\allowbreak 126,\allowbreak 212,\allowbreak \infty\}$.

After $T=50$k rounds, TS has significantly lower regret than every baseline at all 35 values of $\lambda$ (paired Wilcoxon signed-rank test, $p<0.01$). At $\lambda=\infty$ its fairness regret is 0.35\%, against 1.55\% for the next best method, $\epsilon$-Greedy ($\epsilon=0.1$). $\epsilon$-Greedy has lower regret at some values of $\lambda$ only in the first 10{,}000 rounds (Figure~\ref{fig:acsincome5c}(a)--(c)).

\begin{figure}[h!]
\centering
\includegraphics[width=\textwidth]{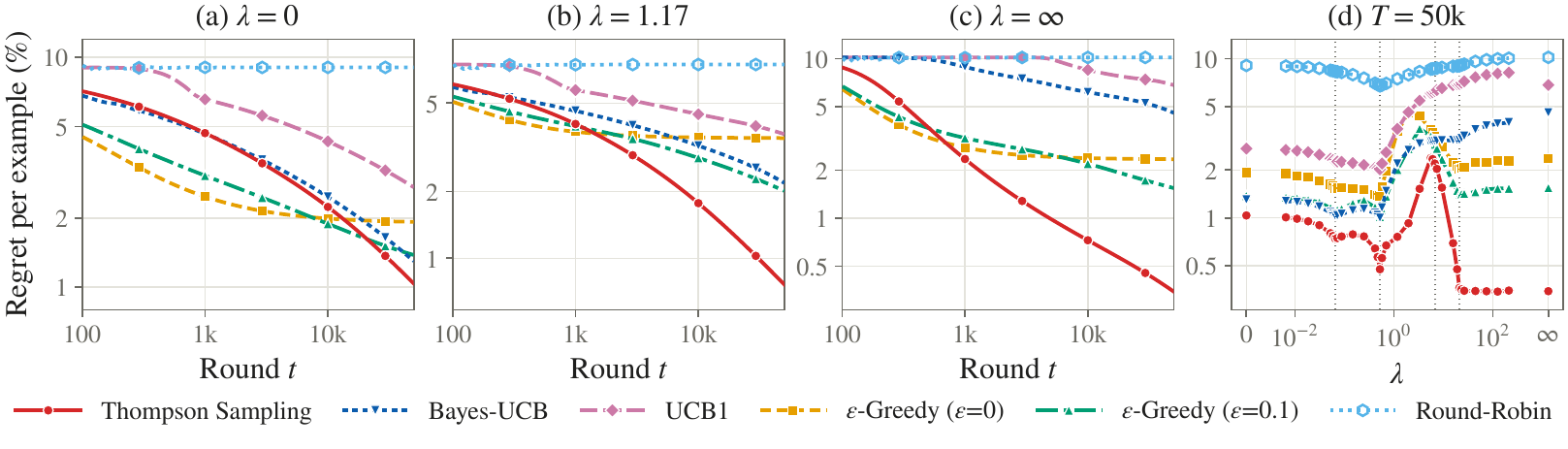}
\caption{ACSIncome-5C: regret per example (\%) against the round $t$ at three values of $\lambda$ (a)--(c), and after $T=50$k rounds against $\lambda$ (d), as in Appendix~\ref{app:datasets}.}
\label{fig:acsincome5c}
\end{figure}

\begin{figure}[h!]
\centering
\begin{minipage}[b]{0.38\textwidth}
\centering
\captionof{table}{The 18 arms of ACSIncome-5C: accuracy and TV gap (\%) on the stream population. $^\star$: optimal for some $\lambda$.}
\label{tab:acsincome5c-arms}
\footnotesize
\begin{tabular}{lrr}
\toprule
\textbf{Arm} & \textbf{Accuracy} & \textbf{TV gap} \\
\midrule
GBM & 53.17 & 9.37 \\
RF & 46.14 & 7.42 \\
LR & 51.75 & 8.85 \\
Tree-d3 & 43.80 & 13.75 \\
GBM+S$^\star$ & 53.61 & 14.80 \\
GBM-OP1 & 49.13 & 8.00 \\
GBM-OP2 & 41.96 & 6.69 \\
GBM-OP3 & 37.51 & 5.35 \\
GBM-OP4 & 34.22 & 3.93 \\
GBM-OP5 & 31.86 & 2.67 \\
GBM-OP6$^\star$ & 28.67 & 1.31 \\
LR-DP3 & 51.33 & 6.08 \\
LR-DP10 & 50.53 & 3.78 \\
LR-DP30$^\star$ & 50.05 & 2.34 \\
LR-DP100$^\star$ & 49.82 & 2.31 \\
GBM-RW$^\star$ & 53.20 & 8.39 \\
GBM-BiasedLabels & 50.46 & 31.55 \\
Biased & 24.93 & 70.10 \\
\bottomrule
\end{tabular}
\end{minipage}\hfill
\begin{minipage}[b]{0.58\textwidth}
\centering
\includegraphics[width=\linewidth]{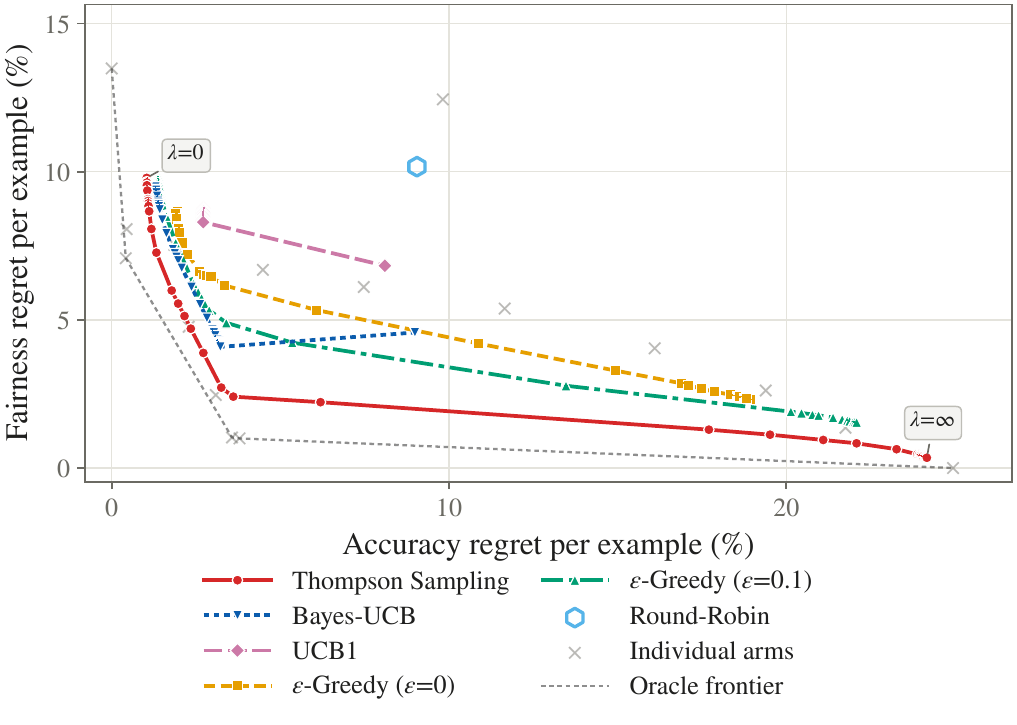}
\captionof{figure}{Trade-off between accuracy and fairness regret on ACSIncome-5C after $T=50$k rounds, as in Figure~\ref{fig:tradeoff-50k-no-delay}.}
\label{fig:acsincome5c-tradeoff}
\end{minipage}
\end{figure}

\clearpage
\subsection{Real-Valued Reward}\label{app:real}

\paragraph{Setup} We use Adult with the same sensitive attribute as in the main experiments. Each arm outputs a risk score $p$ rounded to the grid $\{0,0.05,\dots,1\}$ ($K=20$), so scores and rewards each take 21 values. The reward is $1-(Y-p)^2$, one minus the Brier score, and the fairness gap is the DP gap of the mean scores (Section~\ref{sec:proposed_algo}). The 17 arms are the 11 classifiers of the main Adult arm set that output probabilities, and 6 arms that shrink the gradient boosted trees' score toward its mean by increasing amounts, ending with a constant predictor (zero gap). TS and $\epsilon$-Greedy use the grid Dirichlet posteriors of Section~\ref{sec:proposed_algo}, with one pseudo-count at 0 and one at 1, rather than continuous non-parametric TS. UCB1 adds Hoeffding radii to the empirical means of the rewards and scores, and Bayes-UCB uses quantiles of the same Dirichlet posteriors, computed with a saddlepoint approximation. As in the main experiments, results are averaged over 10{,}000 runs of $T=50{,}000$ rounds, with 29 values of $\lambda$ and no costs or delays.

\begin{figure}[h!]
    \centering
    \includegraphics[width=0.75\textwidth]{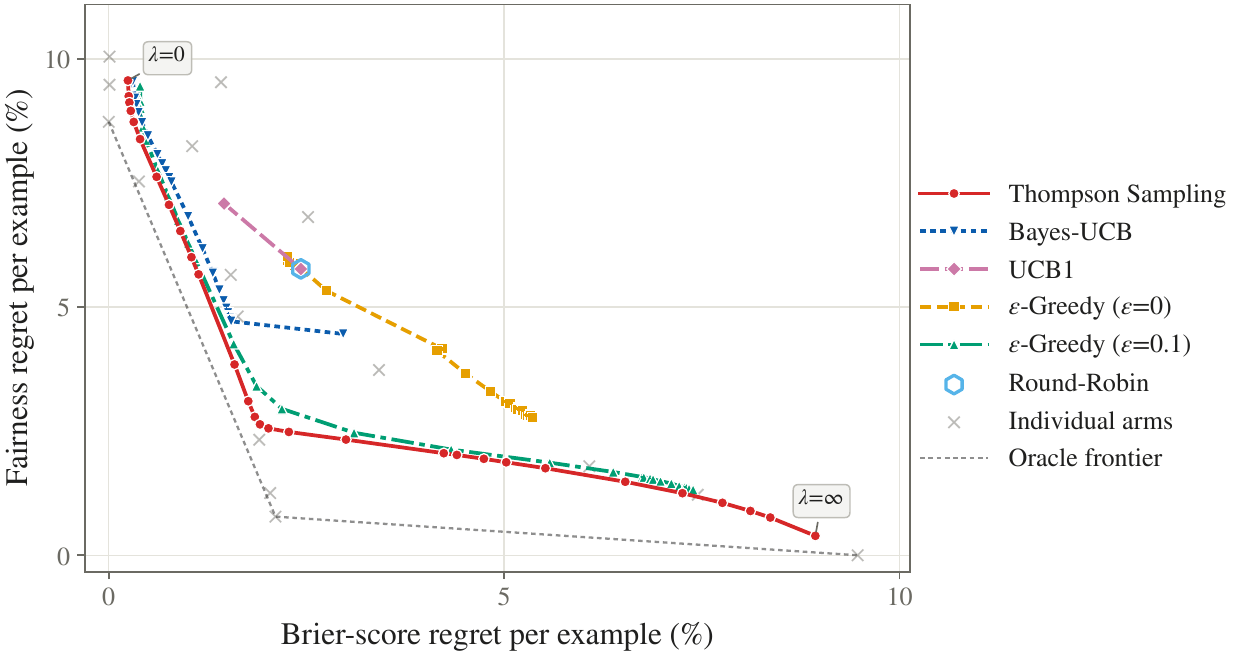}
    \caption{Trade-off between Brier-score regret (x-axis) and fairness regret (y-axis) on Adult with risk-score arms after $T=50$k rounds, as average regret per example (\%) over 10{,}000 runs. Each point on a curve is one value of $\lambda$. Grey crosses are the individual arms and the dotted line is the oracle frontier.}
    \label{fig:brier}
\end{figure}

\paragraph{Results} TS has the lowest regret at both ends of the trade-off (Figure~\ref{fig:brier}); at $\lambda=\infty$ its fairness regret is 0.39\%, against 1.32\% for the next best method, $\epsilon$-Greedy ($\epsilon=0.1$). At every value of $\lambda$, TS has significantly lower regret than UCB1, Bayes-UCB, $\epsilon$-Greedy ($\epsilon=0$) and Round-Robin. $\epsilon$-Greedy ($\epsilon=0.1$) has lower regret than TS only for large $\lambda$ near the value at which the optimal arm changes from a DP-penalized logistic regression to the constant predictor; there the two arms have nearly equal utility, so separating them requires many samples. UCB1 and Bayes-UCB stay far from the frontier at large $\lambda$, because their lower bound on the gap remains zero until the subgroup confidence intervals separate.

\clearpage
\subsection{\texorpdfstring{$\pi_{\min}$ Simulations}{pi\_min Simulations}}\label{app:pi_min_experiment}

The regret bounds of Theorem~1 grow with $1/\pi_{\min}$. To test this dependence, each round draws an instance of the smaller subgroup with probability $\pi_{\min}$ and of the larger subgroup otherwise, uniformly from the stream population. We use $1/\pi_{\min}\in\{2,5,10,20,30,50,100\}$ and $\lambda=2$, and compare TS with Bayes-UCB and $\epsilon$-Greedy ($\epsilon=0$ and $0.1$) over 10{,}000 runs of $T=50$k rounds. On real data, reweighting the subgroups also changes the arms' utilities. We therefore add a synthetic instance with four arms whose accuracy is the same in both subgroups, so their utilities do not depend on $\pi_{\min}$.

The regret of TS increases with $1/\pi_{\min}$, consistent with Theorem~1, except for a small dip at $1/\pi_{\min}=5$ on Twitter and Credit (Figure~\ref{fig:pi_min_all}). TS has significantly lower regret than all three baselines in 37 of the 56 settings. The exception is $\epsilon$-Greedy ($\epsilon=0.1$) at small $\pi_{\min}$ on Twitter, Adult and COMPAS. There, few instances of the smaller subgroup arrive in $T=50$k rounds, and no method has converged.

\begin{figure}[h!]
\centering
\includegraphics[width=\textwidth]{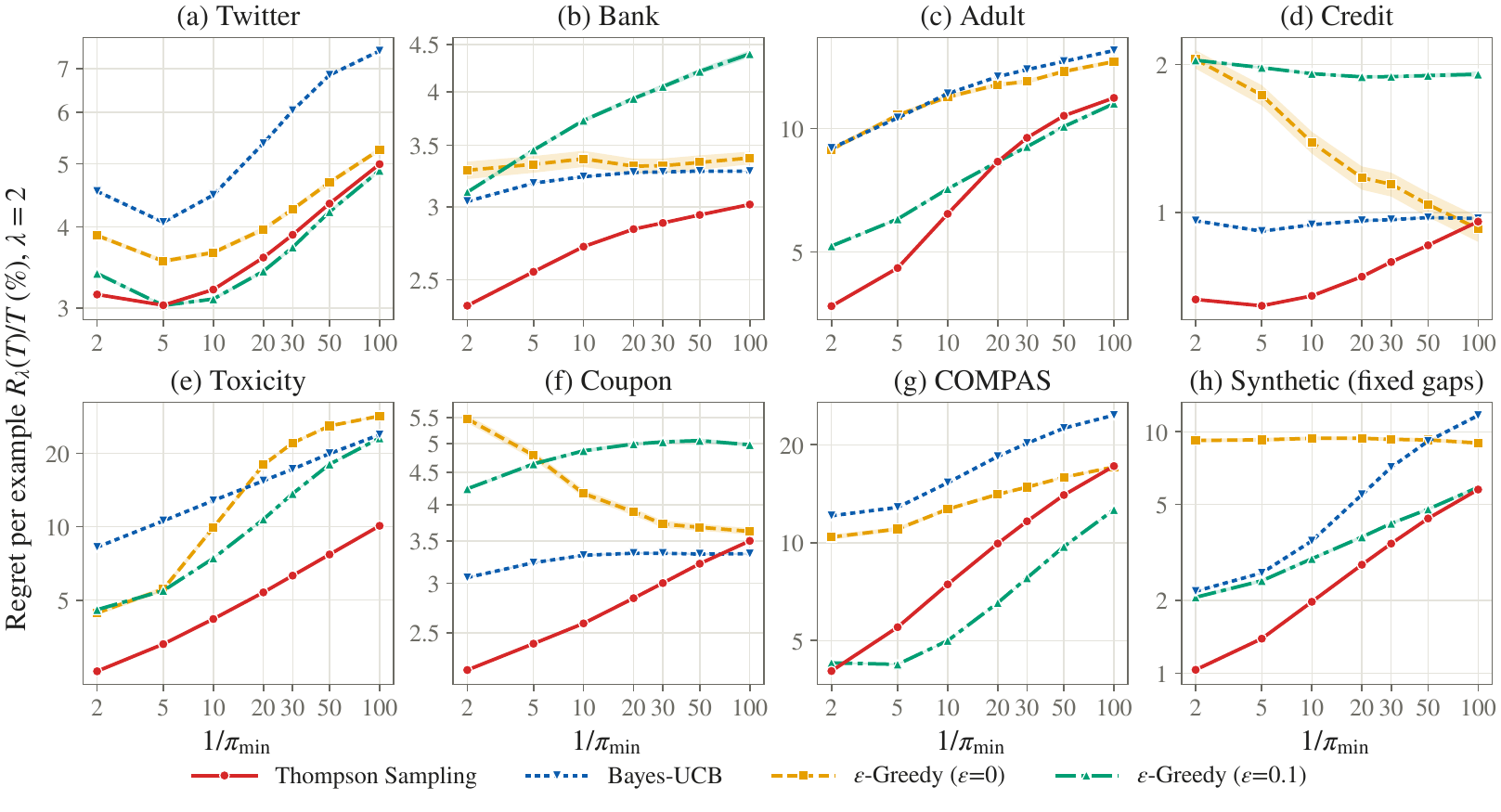}
\caption{Regret per example after $T=50$k rounds ($\lambda=2$) against $1/\pi_{\min}$, averaged over 10{,}000 runs. (a)--(g) Datasets of Appendix~\ref{app:datasets}; (h) synthetic instance.}
\label{fig:pi_min_all}
\end{figure}

\clearpage
\subsection{Fewer Arms}\label{app:fewer-arms}

To test whether the comparison depends on the choice of arms, we draw 30 random subsets of 3 and 30 of 5 of the 18 tabular arms (Appendix~\ref{app:ml-arms}) and run each on Bank, Adult and Credit (400 runs, $T=50{,}000$). The number of subsets that contain a biased arm (GBM-BiasedLabels or Biased) is fixed at its expected value. TS is compared with Bayes-UCB and both $\epsilon$-Greedy variants.

Figure~\ref{fig:fewer-arms-subsets} shows, for each subset, the percentage of $\lambda$ values at which TS has the lowest regret. For the median subset, TS has the lowest regret at most values of $\lambda$ on Bank and Adult, and at fewer than half on Credit. The share is highest when the subset contains a biased arm. $\epsilon$-Greedy (0.1) pulls a random arm in 10\% of the rounds, so it keeps paying for the poor arm, while TS stops pulling it early. Without a biased arm, $\epsilon$-Greedy (0.1) or Bayes-UCB often has lower regret: forced exploration costs little when all arms are close to optimal, and TS needs many pulls to separate arms whose accuracies and DP gaps nearly tie.

Figure~\ref{fig:fewer-arms-tradeoff} shows this hard case for three arms that are all vertices of the oracle frontier: GBM, GBM-OP4 and GBM-OP6 (10{,}000 runs; $\lambda$ grid from the breakpoints of these arms). All learners stay close to the frontier. TS has the lowest regret at $\lambda=0$ on all three datasets, but over all values of $\lambda$ it is lowest at only 14\% of them on Bank and 21\% on Credit (72\% on Adult), less often than for most random 3-arm subsets.

\begin{figure}[h!]
\centering
\includegraphics[width=0.7\textwidth]{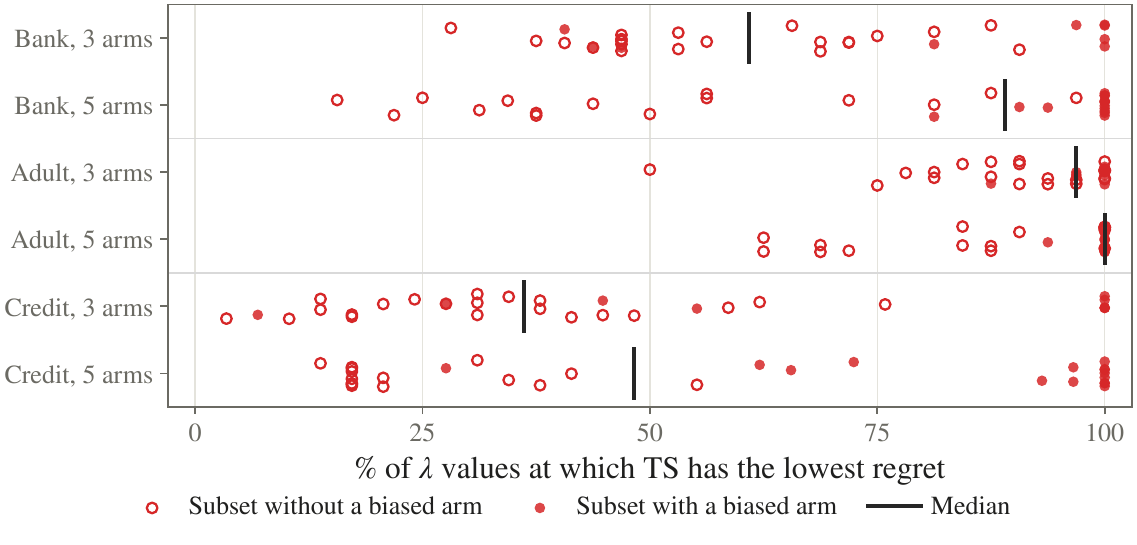}
\caption{Percentage of $\lambda$ values at which TS has a lower mean regret than Bayes-UCB, $\epsilon$-Greedy (0) and $\epsilon$-Greedy (0.1), for 30 random subsets of 3 and of 5 arms (one point per subset; 400 runs, $T=50{,}000$). Filled points: the subset contains a biased arm. Bars: medians.}
\label{fig:fewer-arms-subsets}
\end{figure}

\begin{figure}[h!]
\centering
\includegraphics[width=\textwidth]{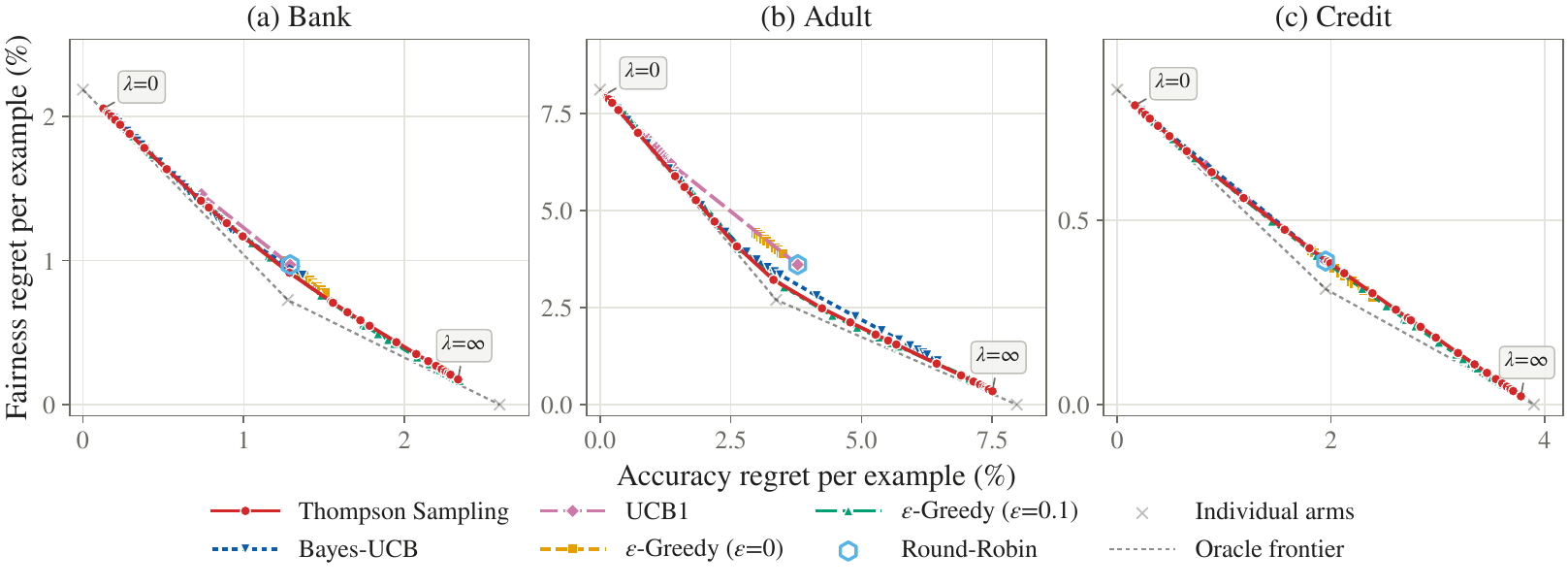}
\caption{Trade-off between accuracy and fairness regret after $T=50{,}000$ rounds with three frontier arms (GBM, GBM-OP4, GBM-OP6), as in Figure~\ref{fig:tradeoff-50k-no-delay}. Means over 10{,}000 runs.}
\label{fig:fewer-arms-tradeoff}
\end{figure}

\end{document}